\documentclass{article}

\usepackage[table]{xcolor}

\definecolor{MPLblue}{HTML}{1f77b4}
\definecolor{MPLorange}{HTML}{ff7f0e}
\definecolor{MPLgreen}{HTML}{2ca02c}
\definecolor{MPLred}{HTML}{d62728}
\definecolor{MPLpurple}{HTML}{9467bd}

\definecolor{SNSblue}{rgb}{0.1216, 0.4666, 0.7059}
\definecolor{SNSorange}{rgb}{1.0, 0.4980, 0.0549}
\definecolor{SNSgreen}{rgb}{0.1725, 0.6274, 0.1725}
\definecolor{SNSred}{rgb}{0.84, 0.15, 0.16}
\definecolor{SNSpurple}{rgb}{0.58, 0.40, 0.74}
\definecolor{SNSpink}{HTML}{da8bc3}
\definecolor{SNSteal}{HTML}{5baba3}
\definecolor{SNSlightpurple}{HTML}{7a85b6}

\definecolor{SNSblue_shaded}{HTML}{8ebad9}
\definecolor{SNSorange_shaded}{HTML}{ffcea3}
\definecolor{SNSgreen_shaded}{HTML}{cae7ca}
\definecolor{SNSred_shaded}{HTML}{ea9293}

\definecolor{SNSblue_colorblind}{HTML}{0173b2}
\definecolor{SNSyellow_colorblind}{HTML}{de8f05}
\definecolor{SNSgreen_colorblind}{HTML}{029e73}
\definecolor{SNSred_colorblind}{HTML}{d55e00}
\definecolor{SNSpink_colorblind}{HTML}{cc78bc}

\definecolor{TUgray}{RGB}{185,184,188}
\definecolor{TUdarkgray}{RGB}{50,65,75}
\definecolor{TUgold}{RGB}{180,160,105}
\definecolor{TUdarkblue}{RGB}{65,90,140}
\definecolor{TUblue}{RGB}{0,105,170}
\definecolor{TUlightblue}{RGB}{80,170,200}
\definecolor{TUlightgreen}{RGB}{130,185,160}
\definecolor{TUgreen}{RGB}{125,165,75}
\definecolor{TUdarkgreen}{RGB}{50,110,30}
\definecolor{TUred}{RGB}{165,30,55}
\definecolor{TUlightred}{RGB}{200,80,60}
\definecolor{TUorange}{RGB}{210,150,0}
\definecolor{TUpurple}{RGB}{175,110,150}
\definecolor{TUocre}{RGB}{200,80,60}
\definecolor{TUviolet}{RGB}{175,110,150}
\definecolor{TUmauve}{RGB}{180,160,150}
\definecolor{TUbeige}{RGB}{215,180,105}
\definecolor{TUbrown}{RGB}{145,105,70}

\definecolor{CUblue}{RGB}{29,79,145}
\definecolor{CUlightblue}{RGB}{185,217,235}
\definecolor{CUdarkblue}{RGB}{0,48,135}
\definecolor{CUgray}{RGB}{117,120,123}
\definecolor{CUlightgray}{RGB}{208,208,206}
\definecolor{CUdarkgray}{RGB}{83,86,90}
\definecolor{CUmagenta}{RGB}{174,37,115}
\definecolor{CUyellow}{RGB}{255,152,0}
\definecolor{CUorange}{RGB}{185,71,0}
\definecolor{CUred}{RGB}{166,10,61}
\definecolor{CUgreen}{RGB}{118,136,29}

\colorlet{maincolor}{CUblue}		        
\colorlet{secondcolor}{CUblue}		        

\colorlet{lightgraycolor}{CUlightgray}  
\colorlet{darkgraycolor}{TUdarkgray}	

\colorlet{alertcolor}{CUred}
\colorlet{alertlightcolor}{CUred!50!white}

\colorlet{Standard}{CUgray}
\colorlet{TemperatureScaling}{SNSblue}
\colorlet{WSVIMeanField}{SNSred}
\colorlet{LaplaceLLGS}{SNSorange_shaded}
\colorlet{LaplaceLLML}{SNSorange}
\colorlet{Ensemble}{SNSpurple}
\colorlet{ImplicitBiasVI}{SNSgreen}
\colorlet{GeneralizedVI}{SNSlightpurple}
\colorlet{SWAG}{SNSpink}

\colorlet{SGD}{SNSblue}
\colorlet{SGDMomentum}{SNSorange}

\makeatletter
\@ifclassloaded{beamer}{
    \colorlet{citecolor}{CUblue}
    \colorlet{citelightcolor}{CUlightblue}
    \colorlet{linkcolor}{CUblue}
    \colorlet{linklightcolor}{CUlightblue}
    \colorlet{urlcolor}{CUblue}
    \colorlet{urllightcolor}{CUlightblue}
}{
    \colorlet{citecolor}{MPLblue}
    \colorlet{linkcolor}{black}
    \colorlet{urlcolor}{MPLblue}
}
\makeatother

\usepackage{iclr2026_conference,times}
\usepackage{scrhack} 
\usepackage{algorithm}
\usepackage[noend]{algpseudocode}

\algrenewcommand{\algorithmiccomment}[1]{\hfill {\small \textcolor{darkgray}{$\vartriangleright$ #1}}}
\algrenewcommand\algorithmicindent{2em}
\algrenewcommand\alglinenumber[1]{\small {\textcolor{darkgray}{#1}}}

\makeatletter
\expandafter\patchcmd\csname\string\algorithmic\endcsname{\itemsep\z@}{\itemsep=0.25ex}{}{}
\newcommand\fs@booktabsruled{%
    \def\@fs@cfont{\bfseries\strut}\let\@fs@capt\floatc@ruled
    \def\@fs@pre{\hrule height\heavyrulewidth depth0pt \kern\belowrulesep}%
    \def\@fs@mid{\kern\aboverulesep\hrule height\lightrulewidth\kern\belowrulesep}%
    \def\@fs@post{\kern\aboverulesep\hrule height\heavyrulewidth\relax}%
    \let\@fs@iftopcapt\iftrue
}
\makeatother
\floatstyle{booktabsruled}
\restylefloat{algorithm}
\input{_latex_preamble/bibliography}
\usepackage{enumitem}

\usepackage{wrapfig}
\usepackage{caption}
\usepackage[labelformat=simple]{subcaption}
\usepackage{graphicx}
\usepackage{pgfplots}
\pgfplotsset{compat=1.15}
\usepackage{adjustbox}
\usepackage{tcolorbox}
\usepackage{tikz}

%

%

%

%

\DeclareRobustCommand{\colorline}[1]{%
	\begin{tikzpicture}
		\raisebox{1.5pt}{
			\draw[#1,solid,line width=1.5pt] (0,0) -- (1em,0);
		}
	\end{tikzpicture}%
}

\usepackage{amsmath}
\usepackage{amssymb}
\usepackage{mathtools} 
\usepackage{bm}
\usepackage{etoolbox}

\allowdisplaybreaks    

\newcommand*{\setsym}[1]{\ensuremath{#1}}
\newcommand*{\idxsetsym}[1]{\mathcal{#1}}


\DeclarePairedDelimiterX{\set}[1]\{\}{%

#1}


\newcommand*{\R}{\mathbb{R}}







\renewcommand*{\vec}[1]{{\bm{#1}}}

\newcommand*{\mat}[1]{{\bm{#1}}}

\DeclarePairedDelimiterXPP\linspan[1]{\operatorname{span}}{(}{)}{}{#1}
\DeclarePairedDelimiterXPP\rangespace[1]{\operatorname{range}}{(}{)}{}{#1}
\DeclarePairedDelimiterXPP\nullspace[1]{\operatorname{null}}{(}{)}{}{#1}
\DeclarePairedDelimiterXPP\rank[1]{\operatorname{rank}}{(}{)}{}{#1}
\DeclarePairedDelimiterXPP\trace[1]{\operatorname{tr}}{(}{)}{}{#1}
\let\det\relax
\DeclarePairedDelimiterXPP\det[1]{\operatorname{det}}{(}{)}{}{#1}
\DeclarePairedDelimiterXPP\vectorize[1]{\operatorname{vec}}{(}{)}{}{#1}


\newcommand*{\pinv}{\dagger}
\newcommand*{\tr}{{\mathsf{T}}}
\renewcommand{\top}{{\tr}} 

\DeclareSymbolFont{stmry}{U}{stmry}{m}{n}
\DeclareMathSymbol\obar\mathrel{stmry}{"3A}
\DeclareMathSymbol\otimes\mathrel{stmry}{"0F}
\DeclareMathSymbol\ominus\mathrel{stmry}{"17}
\makeatletter
\newcommand{\superimpose}[2]{
  {\ooalign{$#1\@firstoftwo#2$\cr\hfil$#1\@secondoftwo#2$\hfil\cr}}}
\makeatother

\makeatother

\NewDocumentCommand{\proj}{o m}{%
  \operatorname{\operatorname{proj}}\IfNoValueF{#1}{_{#1}}\mathopen{}\left( #2 \right)
}



\newcommand{\vg}{\vec{g}}
\newcommand{\vh}{\vec{h}}

\newcommand{\vp}{\vec{p}}

\newcommand{\vr}{\vec{r}}

\newcommand{\vu}{\vec{u}}
\newcommand{\vv}{\vec{v}}
\newcommand{\vw}{\vec{w}}
\newcommand{\vx}{\vec{x}}
\newcommand{\vy}{\vec{y}}
\newcommand{\vz}{\vec{z}}

\newcommand{\vzero}{\vec{0}}
\newcommand{\vone}{\vec{1}}

\newcommand{\vmu}{\vec{\mu}}

\newcommand{\mA}{\mat{A}}
\newcommand{\mB}{\mat{B}}
\newcommand{\mC}{\mat{C}}

\newcommand{\mI}{\mat{I}}

\newcommand{\mM}{\mat{M}}

\newcommand{\mP}{\mat{P}}

\newcommand{\mS}{\mat{S}}

\newcommand{\mU}{\mat{U}}
\newcommand{\mV}{\mat{V}}
\newcommand{\mW}{\mat{W}}
\newcommand{\mX}{\mat{X}}

\newcommand{\mZero}{\mat{0}}
\newcommand{\mId}{\mat{I}}

\newcommand{\mLambda}{\mat{\Lambda}}

\newcommand{\mSigma}{\mat{\Sigma}}



\DeclarePairedDelimiterX\abs[1]{\lvert}{\rvert}{
  \ifblank{#1}{\:\cdot\:}{#1}
}
\DeclarePairedDelimiterX\norm[1]{\lVert}{\rVert}{
  \ifblank{#1}{\:\cdot\:}{#1}
}
\DeclarePairedDelimiterX\innerprod[2]{\langle}{\rangle}{
  \ifblank{#1}{\:\cdot\:}{#1},\ifblank{#2}{\:\cdot\:}{#2}}


\DeclareDocumentCommand\partialderivative{ s o m g g d() }
{ 
  \IfBooleanTF{#1}
  {\let\fractype\flatfrac}
  {\let\fractype\frac}
  \IfNoValueTF{#4}
  {
    \IfNoValueTF{#6}
    {\fractype{\partial \IfNoValueTF{#2}{}{^{#2}}}{\partial #3\IfNoValueTF{#2}{}{^{#2}}}}
    {\fractype{\partial \IfNoValueTF{#2}{}{^{#2}}}{\partial #3\IfNoValueTF{#2}{}{^{#2}}} \argopen(#6\argclose)}
  }
  {
    \IfNoValueTF{#5}
    {\fractype{\partial \IfNoValueTF{#2}{}{^{#2}} #3}{\partial #4\IfNoValueTF{#2}{}{^{#2}}}}
    {\fractype{\partial^2 #3}{\partial #4 \partial #5}}
  }
}

\NewDocumentCommand{\jac}{o m m o}{%
  \operatorname{D}\IfNoValueF{#1}{_{#1}}\mathopen{}%
  #2\mathopen{}\left( #3 \right)\mathclose{}%
  \IfNoValueF{#4}{%
    \IfNoValueTF{#1}{%
      |_{#3}
    }{%
      |_{#1=#4}
    }
  }
}

\NewDocumentCommand{\grad}{o m m o}{%
  \nabla\IfNoValueF{#1}{_{#1}}\mathopen{}%
  #2\mathopen{}\left( #3 \right)\mathclose{}%
  \IfNoValueF{#4}{%
    \IfNoValueTF{#1}{%
      |_{#3}
    }{%
      |_{#1=#4}
    }
  }
}

\NewDocumentCommand{\natgrad}{o m m o}{%
  \tilde{\nabla}\IfNoValueF{#1}{_{#1}}\mathopen{}%
  #2\mathopen{}\left( #3 \right)\mathclose{}%
  \IfNoValueF{#4}{%
    \IfNoValueTF{#1}{%
      |_{#3}
    }{%
      |_{#1=#4}
    }
  }
}

\NewDocumentCommand{\hessian}{o m m o}{%
  \nabla^2\IfNoValueF{#1}{_{#1}}\mathopen{}%
  #2\mathopen{}\left( #3 \right)\mathclose{}%
  \IfNoValueF{#4}{%
    \IfNoValueTF{#1}{%
      |_{#3}
    }{%
      |_{#1=#4}
    }
  }
}



\makeatletter
\newcommand*{\@probsymbol}{\mathrm{P}}
\newcommand*{\@given}[1]{%
  \nonscript\:#1\vert
  \allowbreak
  \nonscript\:
  \mathopen{}}

\providecommand*{\given}{}
\DeclarePairedDelimiterXPP{\@prob}[1]{\@probsymbol}{(}{)}{}{%
  \renewcommand*{\given}{\@given{\delimsize}}%
  #1}
\newcommand*{\prob}[1]{\ifblank{#1}{\@probsymbol}{\@prob*{#1}}}
\makeatother

\NewDocumentCommand{\expval}{o o m}{%
  \operatorname{\mathbb{E}}\IfNoValueF{#1}{_{#1\IfNoValueF{#2}{\sim #2}}}\mathopen{}\left( #3 \right)
}
\NewDocumentCommand{\cov}{o o m}{%
  \operatorname{Cov}\IfNoValueF{#1}{_{#1\IfNoValueF{#2}{\sim #2}}}\mathopen{}\left( #3 \right)
}
\NewDocumentCommand{\var}{o o m}{%
  \operatorname{Var}\IfNoValueF{#1}{_{#1\IfNoValueF{#2}{\sim #2}}}\mathopen{}\left( #3 \right)
}

\newcommand*{\gaussian}[2]{{\ensuremath{\operatorname{\mathcal{N}}\mathopen{}\left(#1, #2\right)}}}
\newcommand*{\gaussianpdf}[3]{%
  {\ensuremath{\operatorname{\mathcal{N}}\mathopen{}\left(#1; #2, #3\right)}}%
}

\NewDocumentCommand{\entropy}{o o m}{%
\operatorname{H}\IfNoValueF{#1}{_{#1\IfNoValueF{#2}{\sim #2}}}\mathopen{}\left( #3 \right)
}
\NewDocumentCommand{\infogain}{o o m}{%
\operatorname{IG}\IfNoValueF{#1}{_{#1\IfNoValueF{#2}{\sim #2}}}\mathopen{}\left( #3 \right)
}
\NewDocumentCommand{\dkl}{m m}{%
  \operatorname{KL}\mathopen{}\left( \ifblank{#1}{\:\cdot\:}{#1}\;\middle\|\;\ifblank{#2}{\:\cdot\:}{#2} \right)
}

\NewDocumentCommand{\dw}{O{2} O{} m m}{%
  \operatorname{W}\IfNoValueF{#1}{_{#1}}\IfNoValueF{#2}{^{#2}}\mathopen{}\left( \ifblank{#3}{\:\cdot\:}{#3},\ifblank{#4}{\:\cdot\:}{#4} \right)
}



\newcommand{\noisescale}{\sigma}



\DeclareMathOperator*{\argmin}{arg\,min}


\DeclarePairedDelimiterXPP\bigO[1]{\mathcal{O}}{(}{)}{}{#1}
\DeclarePairedDelimiterXPP\smallo[1]{o}{(}{)}{}{#1}
\DeclarePairedDelimiterXPP\bigOmega[1]{\Omega}{(}{)}{}{#1}
\DeclarePairedDelimiterXPP\smallomega[1]{\omega}{(}{)}{}{#1}
\DeclarePairedDelimiterXPP\bigTheta[1]{\Theta}{(}{)}{}{#1}



\newcommand*{\inputdim}{D_{\text{in}}}
\newcommand*{\outputdim}{D_{\text{out}}}
\newcommand*{\latentfn}{f}

\newcommand*{\numclasses}{C}
\newcommand*{\numtraindata}{N}
\newcommand*{\traininputs}{\mat{X}}
\newcommand*{\traindata}{\traininputs}
\newcommand*{\traintargets}{\vec{y}}

\newcommand*{\symboltestdata}{{\mathrm{test}}}
\newcommand*{\numtestdata}{{N_\symboltestdata}}

\newcommand*{\testpoint}{\vec{x}_\symboltestdata}

\newcommand*{\batchsize}{\numtraindata_b}

\newcommand*{\params}{\vec{w}}

\newcommand*{\paramspacedim}{P}
\newcommand*{\numparams}{\paramspacedim}

\newcommand*{\loss}{\ell}
\newcommand*{\regularizer}{r}
\newcommand*{\regloss}{\loss_{\regularizer}}
\newcommand*{\regstrength}{\lambda}


\newcommand*{\consteq}{\overset{\mathsmaller{+}c}{=}}
\newcommand*{\lr}{\eta}
\newcommand*{\momentum}{\gamma}


\newcommand*{\hidden}{\vh}
\newcommand*{\feature}{\vg}
\newcommand*{\function}{f} 
\newcommand*{\weight}{\mW}

\newcommand*{\numlayers}{L}
\newcommand*{\width}{D}
\newcommand{\activation}[1]{{\phi\left( {#1} \right)}}



\newcommand*{\variationalparams}{\vec{\theta}}
\newcommand*{\variationalparamspace}{\Theta}
\newcommand*{\variationalfamily}{\setsym{Q}}
\newcommand*{\meanparam}{\vmu}
\newcommand*{\covfactorparam}{\mS}
\newcommand*{\covparam}{\mSigma}
\newcommand*{\covrank}{R}
\newcommand*{\variationalpdf}{q}
\newcommand*{\expectedloss}{\bar{\loss}}
\newcommand*{\numparamsamples}{M}
\newcommand*{\noise}{\vec{z}}

\newcommand*{\covrankpower}{p}



\newcommand*{\idxlayer}{l}
\newcommand*{\idxdata}{n}
\newcommand*{\idxparamsample}{m}
\newcommand*{\idxtime}{t}
\newcommand*{\idxoptimstep}{t}

\NewDocumentCommand{\settime}{O{\idxtime}}{_{#1}}
\NewDocumentCommand{\setlayer}{O{\idxlayer}O{}}{^{(#1)#2}}
\NewDocumentCommand{\setindex}{m m}{[ {#1} ]_{#2}}
\NewDocumentCommand{\setrow}{m m}{[ {#1} ]_{#2, :}}

\newcommand*{\initscalemean}{b}
\newcommand*{\lrscalemean}{c}
\newcommand*{\initscalecov}{\tilde{b}}
\newcommand*{\lrscalecov}{\tilde{c}}

\newcommand*{\sv}{\idxsetsym{S}}
\newcommand*{\minmargin}{\kappa}
\newcommand*{\maxmarginvector}{\hat{\vw}}
\newcommand*{\kktvector}{\tilde{\vw}}

\makeatletter
\@ifclassloaded{beamer}{
  \usepackage{pdfpages}

  \hypersetup{
    pdfkeywords={SP-Right}
  }
}
\makeatother

\usepackage{titletoc}

\makeatletter
\@ifclassloaded{beamer}{}{
    \usepackage{tocloft}
    \setlength\cftbeforesecskip{0em}
    \setlength\cftbeforesubsecskip{-0.5em}
    \setlength\cftbeforesubsubsecskip{-0.5em}
    
}
\makeatother

\usepackage[numbered,open=true]{bookmark}

\usepackage{booktabs}
\usepackage{longtable}
\usepackage{multirow}
\usepackage{rotating}
\usepackage{siunitx}
\usepackage[normalem]{ulem}
\usepackage{xspace}
\usepackage{relsize}

\newcommand*{\soutcolor}[1][red]{\bgroup\markoverwith
{\textcolor{#1}{\rule[1ex]{2pt}{0.8pt}}}\ULon}

\newcommand*{\eg}{e.g.\@\xspace}
\newcommand*{\ie}{i.e.\@\xspace}
\newcommand*{\st}{s.t.\@\xspace}
\newcommand*{\cf}{cf.\@\xspace}
\usepackage[
    textsize=scriptsize,
    figwidth=0.99\linewidth,
]{todonotes}

\newcommand{\beginsupplementary}{%

  \renewcommand{\thesection}{S\arabic{section}}
  \renewcommand{\thesubsection}{\thesection.\arabic{subsection}}
  \renewcommand{\theHsection}{S\arabic{section}} 

  \setcounter{table}{0}
  \renewcommand{\thetable}{S\arabic{table}}
  \setcounter{figure}{0}
  \renewcommand{\thefigure}{S\arabic{figure}}
  \setcounter{section}{0}
  \renewcommand{\theequation}{S\arabic{equation}} 
  \renewcommand{\thetheorem}{S\arabic{theorem}} 
  \renewcommand{\thedefinition}{S\arabic{definition}} 
  \renewcommand{\thecorollary}{S\arabic{corollary}} 
  \renewcommand{\theproposition}{S\arabic{proposition}} 
  \renewcommand{\theremark}{S\arabic{remark}} 
  \renewcommand{\thelemma}{S\arabic{lemma}} 
  \renewcommand{\theexample}{S\arabic{example}} 
  \renewcommand{\thealgorithm}{S\arabic{algorithm}}
}

\usepackage{hyperref}
\hypersetup{
    unicode,                    
    pdfborder       = {0 0 0},  
    bookmarksdepth  = subsection,
    bookmarksopen   = true,     
    linktoc         = all,      
    breaklinks      = true,
    colorlinks      = true,
    linkcolor       = linkcolor,
    citecolor       = citecolor,
    urlcolor        = urlcolor,
}

\usepackage[
    capitalize,
    nameinlink,
]{cleveref}
\usepackage{amsthm}
\usepackage{thmtools}
\usepackage{thm-restate}

\declaretheoremstyle[
    headfont=\normalfont\bfseries,
    notefont=\normalfont,
    bodyfont=\normalfont,
    headpunct={},
    postheadspace=\newline,
]{definitionstyle}

\declaretheoremstyle[
    headfont=\normalfont\bfseries,
    notefont=\normalfont,
    bodyfont=\normalfont\itshape,
    headpunct={},
    postheadspace=\newline,
]{lemmastyle}

\declaretheoremstyle[
    headfont=\normalfont\bfseries,
    notefont=\normalfont,
    bodyfont=\normalfont\itshape,
    headpunct={},
    postheadspace=\newline,
]{theoremstyle}

\declaretheoremstyle[
    headfont=\normalfont\bfseries,
    notefont=\normalfont,
    bodyfont=\normalfont,
    headpunct={},
    postheadspace=\newline,
]{remarkstyle}

\declaretheoremstyle[
    headfont=\normalfont\bfseries,
    notefont=\normalfont,
    bodyfont=\normalfont,
    headpunct={},
    postheadspace=0.5em,
]{assumptionstyle}

\makeatletter
\@ifclassloaded{beamer}{
    
}{
    
    \declaretheorem[style=lemmastyle,name=Lemma,Refname={Lemma,Lemmata}]{lemma}

}
\makeatother

\title{Variational Deep Learning via Implicit\\ Regularization}
\author{	
	Jonathan Wenger$^{\dagger}$\\
	Columbia University\\
    \And
	Beau Coker$^{\dagger}$\\
	Columbia University\\
    \And
    Juraj Marusic\\
    Columbia University\\
    \And
    John P. Cunningham\\
    Columbia University\\
}

\iclrfinalcopy 
\begin{document}

\maketitle
\def\thefootnote{$\dagger$}\footnotetext{Equal contribution.}\def\thefootnote{\arabic{footnote}}

\begin{abstract}
    Modern deep learning models generalize remarkably well in-distribution, despite being overparametrized and trained with little to no \emph{explicit} regularization.
    Instead, current theory credits \emph{implicit} regularization imposed by the choice of architecture, hyperparameters, and optimization procedure.
    However, deep neural networks can be surprisingly non-robust, resulting in overconfident predictions and poor out-of-distribution generalization. 
    Bayesian deep learning addresses this via model averaging, but typically requires significant computational resources as well as carefully elicited priors to avoid overriding the benefits of implicit regularization.
    Instead, in this work, we propose to regularize variational neural networks solely by relying on the \emph{implicit bias of (stochastic) gradient descent}.
    We theoretically characterize this inductive bias in overparametrized linear models as generalized variational inference and demonstrate the importance of the choice of parametrization.
    Empirically, our approach demonstrates strong in- and out-of-distribution performance without additional hyperparameter tuning and with minimal computational overhead.
\end{abstract}

\section{Introduction}

The success of deep learning across many application domains is, on the surface, remarkable, given that deep neural networks are usually overparameterized and trained with little to no \emph{explicit} regularization.
The generalization properties observed in practice have been explained by \emph{implicit} regularization instead, resulting from the choice of architecture \cite{Goldblum2024NoFree}, hyperparameters \cite{Nacson2023ImplicitBias,Mulayoff2021ImplicitBias}, and optimizer \cite{Soudry2018ImplicitBias,Gunasekar2018CharacterizingImplicit,Nacson2019ConvergenceGradient,Nacson2019StochasticGradient,Wang2022DoesMomentum,Jin2023ImplicitBias,Ravi2024ImplicitBias}.
Notably, the corresponding inductive biases often require no additional computation, in contrast to enforcing a desired inductive bias through explicit regularization.

In the last two decades, there has been an increasing focus on improving the reliability and robustness of deep learning models via (approximately) Bayesian approaches \cite{Papamarkou2024PositionBayesian} to improve performance on out-of-distribution data \cite{tran_plex_2022}, in continual learning \cite{Ritter2018OnlineStructured}, and in sequential decision-making \cite{Li2024StudyBayesian}.
However, despite its promise, in practice, Bayesian deep learning can suffer from issues with prior elicitation \cite{Fortuin2022PriorsBayesian},
can be challenging to scale \cite{Izmailov2021WhatAre}, and explicit regularization via a prior combined with approximate inference may result in pathological inductive biases and uncertainty \cite{adlam_cold_2020,Cinquin2021PathologiesPriors,Coker2022WideMeanField,Foong2020ExpressivenessApproximate}.

In this work, we demonstrate both theoretically and empirically how to exploit the implicit bias of optimization for approximate inference in probabilistic neural networks, thus regularizing training implicitly rather than explicitly via the prior. 
This not only narrows the gap to how standard neural networks are trained, but also reduces the computational overhead of training compared to variational inference.
More specifically, we propose to learn a variational distribution over the weights of a deep neural network by maximizing the \emph{expected} log-likelihood in analogy to training via maximum likelihood in the standard case.
However, in contrast to variational Bayes, there is \emph{no explicit regularization} via a Kullback-Leibler divergence to the prior.
Surprisingly, we show theoretically and empirically that training this way does not cause uncertainty to collapse away from the training data, if initialized and parametrized correctly.
More so, for overparametrized linear models we rigorously characterize the implicit bias of SGD as generalized variational inference with a 2-Wasserstein regularizer penalizing deviations from the prior.
\Cref{fig:fig1-illustration} illustrates our approach on a toy example. 

\begin{figure}
    \centering
	\begin{subfigure}[t]{0.49\textwidth}
		\centering
		\includegraphics[width=\textwidth]{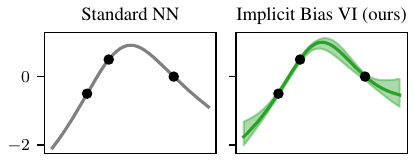}
        \caption{\emph{Implicit} regularization.}
		\label{subfig:fig1-implicit-regularization}
	\end{subfigure}%
	\hfill
	\begin{subfigure}[t]{0.49\textwidth}
		\centering
		\includegraphics[width=\textwidth]{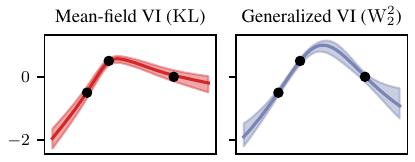}
        \caption{\emph{Explicit} regularization.}
		\label{subfig:fig1-explicit-regularization}
	\end{subfigure}%
    \caption{
        \emph{Variational deep learning via implicit regularization.}
        Neural networks generalize well without explicit regularization due to implicit regularization from the architecture and optimization.
        We can exploit this implicit bias for variational deep learning, removing the computational overhead of explicit regularization and narrowing the gap to deep learning practice.
        As illustrated for a two-hidden layer MLP and proven rigorously for overparametrized linear models in \Cref{thm:implicit-bias-vi-overparametrized-regression,thm:classification-implicit-bias}, the implicit bias of (S)GD in variational networks (see \subref*{subfig:fig1-implicit-regularization}) can be understood as generalized variational inference with a 2-Wasserstein regularizer (see \subref{subfig:fig1-explicit-regularization}). This differs from the standard ELBO objective with a KL divergence to the prior as used for example in mean-field VI (see \subref{subfig:fig1-explicit-regularization}).
    }
    \label{fig:fig1-illustration}

\end{figure}

\paragraph{Contributions}
In this work, we propose a new approach to Bayesian deep learning that generalizes robustly by exploiting the implicit regularization of (stochastic) gradient descent. 
We fully characterize this implicit bias for regression (\Cref{thm:implicit-bias-vi-overparametrized-regression}) and binary classification (\Cref{thm:classification-implicit-bias}) in overparameterized linear models, generalizing results for non-probabilistic models and drawing a rigorous connection to generalized Bayesian inference.
We also demonstrate the importance of the parametrization for the inductive bias and its impact on hyperparameter choice.
In several benchmarks, we demonstrate competitive performance to state-of-the-art baselines for Bayesian deep learning, at minimal computational overhead compared to standard neural networks.
Finally, we provide an open-source implementation of our approach as a standalone library: \href{https://github.com/inferno-ml/inferno}{\texttt{inferno}}. 

\section{Background}
\label{sec:background}

Given a training dataset \((\mX, \vy) = \{ (\vx_\idxdata, y_\idxdata) \}_{n=1}^\numtraindata \) of input-output pairs, supervised learning seeks a function \(f_\params(\vx)\) to predict the corresponding output \(y(\vx)\) for a test input \(\vx\). The parameters \(\params \in \R^\paramspacedim\) of the function are typically trained via empirical risk minimization, i.e.
\begin{equation}
    \label{eqn:empirical-risk-minimization}
    \params_\star \in \argmin_\params \regloss(\params) \qquad \textrm{with} \qquad \regloss(\params) = \loss(\vy, f_\params(\mX)) + \regstrength\regularizer(\params),
\end{equation}
where the loss \(\loss(\vy, f_\params(\mX))\) encourages fitting the training data and the regularizer \(\regularizer(\params)\), given some \(\regstrength > 0\), discourages overfitting, which can lead to poor generalization on test data. 

\paragraph{Implicit Bias of Optimization}
One remarkable observation in deep learning is that training overparametrized neural networks (\(\paramspacedim > \numtraindata\)) with gradient descent without \emph{explicit} regularization can nonetheless lead to effective (in-distribution) generalization, despite there being many global minima of the loss corresponding to functions \(f_\params\) which achieve zero training error \cite{Zhang2017UnderstandingDeep}. 
This can be explained by the optimizer, initialization, and parametrization \emph{implicitly} regularizing the optimization problem \(\argmin_\params \loss(\vy, f_\params(\mX))\), thereby preferring certain global minima \cite[\eg][]{Soudry2018ImplicitBias,Gunasekar2018CharacterizingImplicit,Nacson2019StochasticGradient,Vardi2023ImplicitBias,Vasudeva2024ImplicitBias}. 
Nonetheless, deep neural networks can be surprisingly brittle when predicting \emph{out-of-distribution}, often displaying overconfidence and a significant drop in generalization performance. 

\paragraph{Bayesian Deep Learning} 
Approximate Bayesian techniques like the Laplace approximation \cite{MacKay1992PracticalBayesian,Ritter2018ScalableLaplace,Daxberger2021LaplaceRedux}, stochastic weight averaging \cite{Izmailov2018AveragingWeights,Maddox2019SimpleBaseline}, deep ensembles \cite{Lakshminarayanan2017SimpleScalable}, and variational approaches \cite{Graves2011PracticalVariational,Blundell2015WeightUncertainty,Osawa2019PracticalDeep,Shen2024VariationalLearning} attempt to address the aforementioned shortcomings of deep learning by learning a distribution over functions as opposed to merely a point estimate.
The idea being that a weighted combination of models, all of which achieve low training error, generalizes more robustly while at the same time providing uncertainty quantification.

\paragraph{Variational Inference}
In Bayesian inference this weighted combination is defined by the posterior distribution \(p(\params \mid \traindata, \traintargets) \propto p(\traintargets \mid \traindata , \params) p(\params)\) over weights, induced by a likelihood \(p(\traintargets \mid \params)\) and a choice of prior \(p(\params)\) that expresses an explicit preference for some models over others.
Approximating the posterior with \(\variationalpdf_{\variationalparams}(\params) \approx p(\params \mid \traindata, \traintargets) \) by maximizing a lower bound to the log-evidence leads to the following variational optimization problem \cite{Zellner1988OptimalInformation}:
\begin{equation}
    \label{eqn:variational-objective}
    \variationalparams_\star \in \argmin_{\variationalparams} \regloss(\variationalparams) \qquad \textrm{\st} \qquad \regloss(\variationalparams) = \expval[\variationalpdf_{\variationalparams}(\params)]{-\log p(\vy \mid \params)} + \dkl{\variationalpdf_{\variationalparams}(\params)}{p(\params)}.
\end{equation}
\Cref{eqn:variational-objective} is an instance of the empirical risk minimization objective in \Cref{eqn:empirical-risk-minimization}, with the key difference that one optimizes over variational parameters \(\variationalparams\) of a family of distributions \(\variationalpdf_{\variationalparams}(\params) \in \variationalfamily\). If that family includes the posterior, \(\variationalpdf_{\variationalparams}(\params) = p(\params \mid \traindata, \traintargets)\) is the unique global minimum.
In the case of a potentially misspecified prior or likelihood, the variational formulation \eqref{eqn:variational-objective} can be generalized to arbitrary loss functions \(\loss\) and statistical distances \(\operatorname{D}\) to the prior \cite{Bissiri2016GeneralFramework,Knoblauch2022OptimizationcentricView,Wild2022GeneralizedVariational}, such that
\begin{equation}
    \label{eqn:generalized-variational-objective}
    \regloss(\variationalparams) = \expval[\variationalpdf_{\variationalparams}(\params)]{\loss(\vy, f_\params(\traininputs))} + \regstrength\operatorname{D}(\variationalpdf_{\variationalparams}, p).
\end{equation}

\section{Variational Deep Learning via Implicit Regularization}
\label{sec:methods}

Our overarching goal is to enable deep neural networks to generalize robustly out-of-distribution without sacrificing their in-distribution performance, at minimal computational overhead.
We approach this goal within the framework of Bayesian deep learning, by learning a distribution over neural networks \(f_\params\), induced by a parametrized variational distribution \(q_{\variationalparams}(\params)\) over its weights.
However, rather than approximating the Bayesian posterior, which trades off training error against an explicit, a priori preference for certain models, we enforce that \emph{all models have zero training error} while using \emph{implicit} regularization to weight them.
Doing so preserves the implicit regularization of the optimizer, which determines the generalization performance of neural networks to a substantial degree, rather than purely relying on explicit regularization induced by the prior.
Importantly, this approach leads to robust out-of-distribution generalization, while providing uncertainty quantification at small computational overhead over standard deep learning. 


\subsection{Training via the Expected Loss}

We propose to train a variational neural network defined by an architecture \(f_\params\) and a variational distribution over weights \(\variationalpdf_\variationalparams(\params)\) by \emph{minimizing the expected loss} \(\expectedloss(\variationalparams)\) in analogy to how deep neural networks are usually trained. In other words, the optimal variational parameters are given by
\begin{equation}
    \label{eqn:max-expected-log-likelihood}
    \variationalparams_\star \in \argmin_{\variationalparams} \underbracket[0.1ex]{\expval[\variationalpdf_{\variationalparams}(\params)]{\loss(\vy, f_\params(\mX))}}_{\coloneqq \expectedloss(\variationalparams)}\textcolor{gray}{\, \soutcolor[MPLred]{+ \regstrength \operatorname{D}(\variationalpdf_{\variationalparams}, p)}}.
\end{equation}
At first glance removing the divergence term from the variational objective in \cref{eqn:generalized-variational-objective} seems problematic because the new objective is clearly minimized when the variational distribution is a point mass at the minimum loss solution, i.e. $\variationalpdf_{\variationalparams_\star}(\params) = \delta_{\params_\star}(\params)$ where $\params_\star \in \argmin_{\params} \loss(\vy, f_\params(\mX))$. 
This seemingly defeats the point of a Bayesian deep learning framework, given that there is no variability in predictions on test data.
Moreover, the new objective no longer involves a prior distribution, ostensibly removing the ability to manually favor some models over others entirely. 
The key to understanding our approach is that, in the overparameterized setting, a point mass is only one of many optima corresponding to distributions \(\variationalpdf_{\variationalparams_\star}(\params)\), and it is the implicit bias of the optimization procedure that chooses among them.
As we will see, if one trains an overparametrized linear model via the expected loss using (stochastic) gradient descent, this implicit bias can be explicitly characterized to depend on the initialization.

\subsection{Implicit Bias of SGD as Generalized Variational Inference}

Assume we train an overparametrized linear model with a Gaussian variational family via the expected loss. 
For an appropriate learning rate sequence, (stochastic) gradient descent converges to a global minimum \(\variationalparams_\star^{\textnormal{GD}} \in \argmin \expectedloss(\variationalparams)\) of the training objective. As we show in \Cref{sec:theoretical-analysis}, if SGD is \emph{initialized to the prior}, \ie \(\variationalpdf_{\variationalparams_0}(\params) = p(\params)\), its implicit bias can be understood as selecting the distribution over models with zero training error that is closest to the prior in 2-Wasserstein distance:
\begin{equation*}
    q_{\variationalparams_\star^{\textnormal{GD}}} = \argmin_{\substack{q_\variationalparams \\ \textrm{\st \ } \variationalparams \in \argmin \expectedloss(\variationalparams)}} \dw[2][2]{\variationalpdf_{\variationalparams}}{p}.
\end{equation*}
Therefore, we can interpret the implicit bias of (S)GD when training a variational linear model as performing \emph{generalized variational inference}. More precisely, the above is equivalent to \(q_{\variationalparams_\star^{\textnormal{GD}}}\) minimizing the objective in \Cref{eqn:generalized-variational-objective} for a certain regularization strength, but with a regularizer that is \emph{not} a KL divergence as it would be for standard variational inference, but rather a 2-Wasserstein distance to the prior. 
This characterization directly generalizes results for (non-probabilistic) models, where the implicit bias of SGD selects minima that are close to the initialization in Euclidean distance \cite{Zhang2017UnderstandingDeep,Gunasekar2018CharacterizingImplicit}.
We therefore call our method \textcolor{ImplicitBiasVI}{Implicit Bias Variational Inference (IBVI)}. 
From a practical perspective, by exploiting the implicit regularization of SGD, rather than performing generalized variational inference directly, we no longer need to compute the regularizer explicitly or allocate memory for the prior hyperparameters.\footnote{We only need them to initialize the optimizer after which we can free up the allocated memory.}

\cref{sec:theoretical-analysis} provides a detailed version of the regression result introduced here and proves a similar result for binary classification. 
Our experiments in \cref{sec:experiments} focus on the application to deep neural networks, where we generally expect the implicit regularization to be more complex.

\subsection{Computational Efficiency}
\label{sec:computational-efficiency}

In practice, we minibatch the expected loss both over training data and parameter samples \(\params_\idxparamsample\) drawn from the variational distribution \(\variationalpdf_\variationalparams(\params)\) such that
\begin{equation}
    \label{eqn:batched-expected-loss}
    \textstyle
    \expectedloss(\variationalparams) = \expval[\variationalpdf_{\variationalparams}(\params)]{\loss(\vy, f_\params(\mX))} \approx \frac{1}{\batchsize \numparamsamples}\sum_{\idxdata=1}^{\batchsize} \sum_{\idxparamsample=1}^{\numparamsamples} \loss(\vy_\idxdata, f_{\params_\idxparamsample}(\vx_\idxdata)).
\end{equation}
The training cost is primarily determined by two factors. 
The number of parameter samples \(\numparamsamples\) we draw for each evaluation of the objective, and the variational family, which determines the number of additional parameters of the model and the cost for sampling a set of parameters in each forward pass. 
We wish to keep the overhead compared to a vanilla deep neural network as small as possible.

\paragraph{Training With A Single Parameter Sample (\(\numparamsamples=1\))}
\begin{wrapfigure}[23]{r}{0.33\textwidth}
	\centering
    \vspace{-1.75em}
    \includegraphics[width=0.33\textwidth]{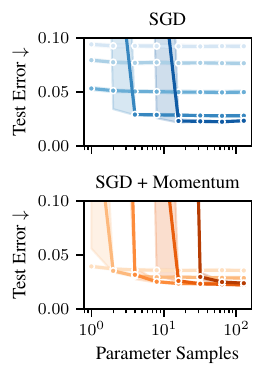}
    \caption{\emph{Training with a single parameter sample given a small enough learning rate.} Lighter color shades correspond to smaller learning rates. See also \Cref{sec-supp:efficient-training}.}
    \label{fig:parameter-samples}
\end{wrapfigure}
When drawing fewer parameter samples \(\params_\idxparamsample\) the training objective in \cref{eqn:batched-expected-loss} becomes noisier similar to using a smaller batch size.
This is concerning since the optimization procedure may not converge given this additional noise.
However, one can \emph{train with a single parameter sample only}, simply by reducing the learning rate appropriately, as we show experimentally in \Cref{fig:parameter-samples} and \Cref{sec-supp:efficient-training}.
Therefore, given a set of sampled parameters, the cost of a forward and backward pass is identical to a standard neural network (up to the overhead of the covariance parameters). 
When using fewer parameter samples in the expected loss, training is unstable unless the learning rate is chosen sufficiently small. For a fixed number of optimizer steps this decreases performance, but either training for more steps, or using momentum closes this gap.

\paragraph{Variational Family and Covariance Structure}

We choose a Gaussian variational distribution \(\variationalpdf_\variationalparams(\params)\) over (a subset of the) weights of the neural network. 
While at first glance this may seem restrictive, there is ample evidence that variational families in deep neural networks do not need to be complex to be expressive \cite{Farquhar2020LibertyDepth,Sharma2023BayesianNeural}. In fact, in analogy to deep feedforward NNs with ReLU activations being universal approximators \cite{Hanin2018ApproximatingContinuous}, one can show that Bayesian neural networks with ReLU activations and at least one Gaussian hidden layer are universal conditional distribution approximators, meaning they can approximate any continuous conditional distribution arbitrarily well \cite{Sharma2023BayesianNeural}.
As we show in \Cref{sec:theoretical-analysis}, training an overparametrized linear model with SGD via the expected loss amounts to generalized variational inference \emph{if the covariance is factorized}, \ie \(\covparam = \covfactorparam \covfactorparam^\tr\)
where \(\covfactorparam \in \R^{\numparams \times \covrank}\) is a dense matrix with rank \(\covrank \leq \numparams\). 
The implicit bias of SGD for arbitrary parametrizations of the covariance matrix remains an open problem.
Throughout our experiments we use Gaussian layers with factorized covariances for all architectures.

\subsection{Parametrization, Feature Learning and Hyperparameter Transfer}
\label{sec:parametrization}

The inductive bias of SGD depends on the initialization and choice of \emph{parametrization}, a bijective map \(\rho : \variationalparamspace' \to \variationalparamspace\) reparametrizing a (variational) model such that \(f_\variationalparams \equiv f_{\rho(\variationalparams')}\).
When training deep neural networks, it is not unusual to use layer-specific learning rates.
These can be absorbed into the weights of the model and the initialization, meaning they effectively just define a different parametrization \cite[Lemma J.1,][]{Yang2021TensorProgramsa}.
While parameterization is well-studied for non-probabilistic deep learning, it has been identified as one of the ``grand challenges of Bayesian computation'' \cite{Bhattacharya2024GrandChallenges}. 

In deep learning, the ``standard parameterization'' (SP) initializes the weights of a neural network randomly from a distribution with variance $\propto 1/\texttt{fan\_in}$ (e.g., as in Kaiming initialization, the PyTorch default) and makes no further adjustments to the forward pass or learning rate. 
In contrast, the maximal update parametrization ($\mu$P) \cite{Yang2021TensorPrograms} ensures feature learning 
even as the width of the network tends to infinity. 
In addition, under $\mu$P, hyperparameters like the learning rate, can be tuned on a small model and transferred to a large-scale model \cite{Yang2021TensorProgramsa}.

Given our interpretation of training via the expected loss as generalized variational inference with a prior implied by the parametrization and initialization, a natural question is whether we can extend $\mu$P to the variational setting and thus inherit its inductive bias.
In the probabilistic setting, feature learning occurs when the \emph{distribution} over hidden units changes from initialization.
At any point during training, the $i$th hidden unit in layer $\idxlayer$ is a function of four random variables: the variational mean and covariance parameters $(\meanparam, \covfactorparam)$, Gaussian noise $\noise$, and the previous layer hidden units:
\begin{equation}
\hidden\setlayer_i(\vx) = \weight_i \hidden\setlayer[\idxlayer-1](\vx) = (\meanparam_i + \covfactorparam_i \noise) \hidden\setlayer[\idxlayer-1](\vx).
\end{equation}
The parameters are random because of the stochasticity in the initialization and/or optimization procedure, while the noise is randomly drawn during each forward pass. Since the $\covfactorparam_i \noise$ term is a sum over $\covrank$ terms, where $\covrank$ is the rank of $\covfactorparam \in \R^{\numparams \times \covrank}$, applying the central limit theorem we propose scaling this term by $\smash{R^{-1/2}}$ and then applying $\mu$P to the mean and covariance parameters. In practice, we implement the scaling via an adjustment to the covariance initialization and learning rate.
\cref{sec-supp:parametrization} in the supplement provides empirical investigating of this scaling, demonstrating feature learning in the last hidden layer as the width is increased.  

\begin{figure}
    \centering
    \includegraphics[width=\textwidth]{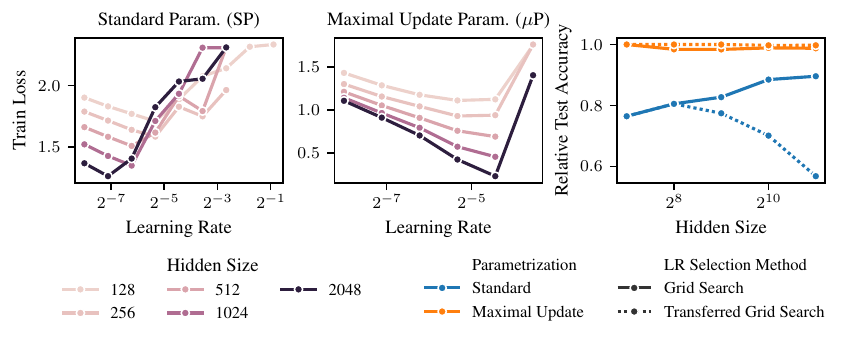}
    \caption{\emph{Hyperparameter Transfer.} 
        When scaling the size of a neural network, one has to re-tune the hyperparameters, such as the learning rate, when using the standard parametrization (SP). The same is true for probabilistic networks as we show here on CIFAR-10 (left).
        However, when using our proposed extension of the maximal update parametrization (\(\mu\)P) \cite{Yang2021TensorProgramsa} to probabilistic networks, one can tune the learning rate on a small model and achieve optimal generalization for larger models by ``transferring'' the optimal learning rate from a smaller model (center and right). 
    }
    \label{fig:hyperparameter-transfer}
\end{figure}

\Cref{fig:hyperparameter-transfer} demonstrates that our proposed maximal update parametrization enables hyperparameter transfer in a probabilistic model. 
We train two-hidden-layer MLPs on CIFAR10, using a low rank covariance in the final two layers. 
Under standard parametrization (left panel), the learning rate that results in the smallest training loss decreases with hidden size. 
In contrast, under $\mu$P (middle panel), it remains the same across hidden sizes. 
The right panel of \cref{fig:hyperparameter-transfer} demonstrates the practical implications for model selection. 
For each parametrization and each hidden size $D$, we select the learning rate based on a grid search. 
In ``transferred grid search'' we do a grid search using the smallest model (hidden size 128) and transfer the best validating learning rate to the hidden size $D$ model, whereas in ``grid search'' we perform the grid search on the hidden size $D$ model. 
Relative to the test accuracy of the best performing model across learning rate and parametrization,  
we see that (a) $\mu$P outperforms SP, though the gap decreases with hidden size, and (b) the transfer strategy works well for $\mu$P but poorly for SP once the hidden size exceeds 256.

\subsection{Related Work}
Variational inference in the context of Bayesian deep learning has seen rapid development in recent years \cite{Graves2011PracticalVariational,Blundell2015WeightUncertainty,zhang_noisy_2018,tran_bayesian_2018,Osawa2019PracticalDeep,Shen2024VariationalLearning}.
Using a Wasserstein regularizer \cite{Wild2022GeneralizedVariational} in the context of generalized VI \cite{Knoblauch2022OptimizationcentricView} is arguably most related to our work, given our theoretical results.
Structure in the variational parameters has always played an important role for computational reasons \cite{louizos_structured_2016,mishkin_slang_2019,Farquhar2020LibertyDepth} and often only a few layers are treated probabilistically \cite{Sharma2023BayesianNeural}, with some methods only considering the last layer, effectively treating the neural network as a feature extractor \cite{Harrison2024VariationalBayesian,Liu2020SimplePrincipled}.
The Laplace approximation if applied in the last-layer also falls under this category, which has the advantage that it can be applied post-hoc \cite{MacKay1992PracticalBayesian,Ritter2018ScalableLaplace,Ritter2018OnlineStructured,Khan2019ApproximateInference,Immer2021ImprovingPredictions,Daxberger2021BayesianDeep,Daxberger2021LaplaceRedux,Kristiadi2023PromisesPitfalls,Cinquin2024FSPLaplaceFunctionSpace}.
Deep ensembles repeat the standard training process using multiple random initializations \cite{Lakshminarayanan2017SimpleScalable,fort_deep_2020} and have been linked to Bayesian methods \cite{Wilson2020ProbabilisticPerspective,Wild2023RigorousLink} with certain caveats \cite{Abe2022DeepEnsembles,Dern2024TheoreticalLimitations}.
While we use SGD only to optimize the variational parameters and arguably average over samples by using momentum, SGD has also been used widely to directly approximate samples from a target distribution \cite{Izmailov2018AveragingWeights,Wilson2020ProbabilisticPerspective,Mingard2020SGDBayesian,Lin2023SamplingGaussian}, a popular example being stochastic weight averaging (SWA) \cite{Izmailov2018AveragingWeights,Maddox2019SimpleBaseline}.
%
Our theoretical analysis extends recent developments on the implicit bias of overparameterized linear models \cite{Soudry2018ImplicitBias, Nacson2019StochasticGradient,Gunasekar2018CharacterizingImplicit} to the probabilistic setting. For classification, works have focused on convergence rates \cite{Nacson2019ConvergenceGradient}, SGD \cite{Nacson2019StochasticGradient}, SGD with momentum \cite{Wang2022DoesMomentum}, and the multiclass setting \cite{Ravi2024ImplicitBias}.
Results on the implicit bias of neural network training \cite{Vardi2023ImplicitBias} often assume large widths \cite{Lee2020WideNeural,Jin2023ImplicitBias, lai_generalization_2023,Yang2019TensorPrograms,Yang2020TensorPrograms} allowing similar arguments as for linear models.
The former is exemplified by the neural tangent parametrization, under which neural networks behave like kernel methods in the infinite width limit \cite{Jacot2018NeuralTangent}.
Yang et al. \cite{Yang2019TensorPrograms,Yang2020TensorPrograms,Yang2021TensorPrograms,Yang2021TensorProgramsa} developed an alternative parameterization that still admits feature learning in the infinite width limit, which we extended to the case of variational networks. 

\section{Theoretical Analysis}
\label{sec:theoretical-analysis}

Consider an overparameterized linear model with a Gaussian prior, trained via the expected loss using (stochastic) gradient descent.
We show that, in both regression (\Cref{thm:implicit-bias-vi-overparametrized-regression}) and binary classification (\Cref{thm:classification-implicit-bias}), our approach can be understood as generalized variational inference with a 2-Wasserstein regularizer, which penalizes deviation from the prior among models with zero training error.
\Cref{thm:implicit-bias-vi-overparametrized-regression,thm:classification-implicit-bias} recover analogous results for non-probabilistic models \cite{Zhang2017UnderstandingDeep,Gunasekar2018CharacterizingImplicit,Soudry2018ImplicitBias}.

\subsection{Linear Regression}
\label{sec:theoretical-analysis-regression}

\begin{restatable}[Implicit Bias in Regression]{theorem}{implicitBiasVariationalInferenceRegression}
    \label{thm:implicit-bias-vi-overparametrized-regression}
    Let \(f_\params(\vx) = \vx^\top \params\) be an overparametrized linear model with \(\paramspacedim > \numtraindata\). Define a Gaussian prior \(p(\params) = \gaussianpdf{\params}{\meanparam_0}{\covfactorparam_0\covfactorparam_0^\tr}\) and likelihood \(p(\traintargets \mid \params) = \gaussianpdf{\traintargets}{f_\params(\traininputs)}{\noisescale^2\mI}\) and assume a variational family \smash{\(\variationalpdf_{\variationalparams}(\params) = \gaussianpdf{\params}{\meanparam}{\covfactorparam\covfactorparam^\tr}\)} with \(\variationalparams = (\meanparam,\covfactorparam)\) such that \(\meanparam \in \R^{\paramspacedim}\) and \smash{\(\covfactorparam \in \R^{\paramspacedim \times \covrank}\)} where \(\covrank \leq \paramspacedim\). 
	If the learning rate sequence \((\lr_\idxoptimstep)_\idxoptimstep\) is chosen such that the limit point \(\variationalparams_\star^{\textnormal{GD}} = \lim_{\idxoptimstep \to \infty} \variationalparams_\idxoptimstep^{\textnormal{GD}} \) identified by gradient descent, initialized at \(\variationalparams_0 = (\meanparam_0,\covfactorparam_0)\), is a (global) minimizer of the expected log-likelihood \(\expectedloss(\variationalparams)\), then 
    \begin{equation}
        \label{eqn:implicit-bias-gd-vi-regression}
        \variationalparams_\star^{\textnormal{GD}} \in \argmin_{\substack{\variationalparams = (\meanparam, \covfactorparam)\\ \textrm{\st \ } \variationalparams \in \argmin \expectedloss(\variationalparams)}} \dw[2][2]{\variationalpdf_{\variationalparams}}{p}.
    \end{equation}
    Further, this also holds in the case of stochastic gradient descent and when using momentum.
\end{restatable}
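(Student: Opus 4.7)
The plan is to compute the expected loss explicitly, identify its global minimizers, show that (S)GD (with or without momentum) preserves a certain affine subspace through the initialization, and then verify that the unique intersection of that subspace with the minimizing set is exactly the $W_2^2$-projection of the prior onto the minimizers.

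First I would expand the expected loss. For the Gaussian variational family and Gaussian likelihood, a direct calculation gives, up to an additive constant independent of $\variationalparams$,
\begin{equation*}
    \expectedloss(\meanparam, \covfactorparam) = \frac{1}{2\noisescale^2}\norm{\vy - \mX\meanparam}^2 + \frac{1}{2\noisescale^2}\norm{\mX\covfactorparam}_F^2.
\end{equation*}
The two terms decouple, so $\argmin\expectedloss$ is exactly the set of pairs with $\mX\meanparam = \vy$ and $\mX\covfactorparam = \mZero$ (the mean interpolates the data and every column of $\covfactorparam$ lies in $\nullspace{\mX}$). The gradients $\nabla_\meanparam \expectedloss = \mX^\top(\mX\meanparam - \vy)/\noisescale^2$ and $\nabla_\covfactorparam \expectedloss = \mX^\top \mX \covfactorparam /\noisescale^2$ lie (column-wise for $\covfactorparam$) in $\rangespace{\mX^\top}$. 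Every GD step therefore updates $\meanparam$ and each column of $\covfactorparam$ by an element of $\rangespace{\mX^\top}$, and by induction the displacements $\meanparam_\idxoptimstep - \meanparam_0$ and (column-wise) $\covfactorparam_\idxoptimstep - \covfactorparam_0$ stay in $\rangespace{\mX^\top}$ for all $\idxoptimstep$, hence also in the limit. Combined with the assumption that the limit is a global minimizer, this pins it down uniquely: $\meanparam_\star$ equals the min-norm interpolator $\meanparam_0 + \mX^\pinv(\vy - \mX\meanparam_0)$, and $\covfactorparam_\star$ equals $(\mId - \mX^\pinv \mX)\covfactorparam_0$, the column-wise orthogonal projection of $\covfactorparam_0$ onto $\nullspace{\mX}$.

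Next I would identify this limit with the 2-Wasserstein projection. Between Gaussians, $\dw[2][2]{\variationalpdf_\variationalparams}{p}$ decomposes as $\norm{\meanparam-\meanparam_0}^2$ plus the squared Bures distance between the two covariances, so the mean and covariance problems decouple. The mean minimizer subject to $\mX\meanparam = \vy$ is the min-norm interpolator, matching $\meanparam_\star$. For the covariance part I would invoke the factor identity
\begin{equation*}
    \operatorname{B}^2(\mA\mA^\top, \mB\mB^\top) = \min_{\mO :\, \mO^\top\mO = \mId} \norm{\mA - \mB\mO}_F^2,
\end{equation*}
which flattens the Bures geometry on covariances into a Frobenius minimization on factors. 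Setting $\mA = \covfactorparam_0$, $\mB = \covfactorparam$ and absorbing the orthogonal freedom into $\tilde\covfactorparam \coloneqq \covfactorparam\mO$, the constraint $\mX\covfactorparam = \mZero$ becomes $\mX\tilde\covfactorparam = \mZero$, so the problem reduces to $\min \norm{\covfactorparam_0 - \tilde\covfactorparam}_F^2$ subject to a linear constraint, with unique solution $\tilde\covfactorparam = (\mId - \mX^\pinv \mX)\covfactorparam_0$. The induced covariance is $(\mId - \mX^\pinv\mX)\covfactorparam_0\covfactorparam_0^\top(\mId - \mX^\pinv\mX)$, which coincides with $\covfactorparam_\star\covfactorparam_\star^\top$.

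The extensions to SGD and momentum are essentially free. Minibatch gradients have the form $\mX_b^\top(\mX_b \meanparam - \vy_b)/\noisescale^2$ and $\mX_b^\top\mX_b\covfactorparam/\noisescale^2$, which still lie in $\rangespace{\mX_b^\top} \subseteq \rangespace{\mX^\top}$, and momentum buffers are linear combinations of past gradients, hence also in $\rangespace{\mX^\top}$. The row-space invariance therefore carries through verbatim, and the same uniqueness argument identifies the limit. The main obstacle is the handling of the Wasserstein distance on the factorized covariance: the Bures geometry does not a priori align with the Frobenius geometry in which GD operates, so some care is needed to justify that the orthogonal freedom in the factorization can be absorbed without interacting with the $\mX\covfactorparam = \mZero$ constraint. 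Once the factor identity converts Bures into Frobenius, the remainder of the argument parallels the classical min-norm implicit-bias proofs for non-probabilistic overparametrized regression \cite{Gunasekar2018CharacterizingImplicit}.
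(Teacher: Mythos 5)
Your argument is correct, and the first half of it — the expansion of $\expectedloss$, the characterization of its minimizers as $\set{\mX\meanparam=\vy,\ \mX\covfactorparam=\mZero}$, and the induction showing that (S)GD with momentum keeps $\meanparam_\idxoptimstep-\meanparam_0$ and the columns of $\covfactorparam_\idxoptimstep-\covfactorparam_0$ inside $\rangespace{\mX^\tr}$ — coincides with the paper's proof. Where you genuinely diverge is in connecting the GD limit to the 2-Wasserstein projection. The paper proves a dedicated subspace-decomposition lemma (\Cref{lem:wasserstein-subspace-decomposition}): when the variational covariance vanishes on a subspace $\rangespace{\mV_A}$, the squared 2-Wasserstein distance between the Gaussians splits into a quadratic term in the $\mV_A$-component of the mean, a $W_2^2$ between the $\mV_B$-projected marginals, and a constant; optimality of the GD limit then follows because the null-space projections of $(\meanparam,\covfactorparam)$ are preserved exactly, making the null-space Wasserstein term zero. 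You instead invoke the Bures factor identity $\operatorname{B}^2(\mA\mA^\tr,\mB\mB^\tr)=\min_{\mO^\tr\mO=\mId}\norm{\mA-\mB\mO}_F^2$ and absorb the orthogonal freedom into $\tilde\covfactorparam=\covfactorparam\mO$, which is legitimate since $\mX\covfactorparam=\mZero\iff\mX\tilde\covfactorparam=\mZero$; this collapses the Bures minimization under the constraint into a plain column-wise Frobenius projection onto $\nullspace{\mX}$, whose unique solution $(\mId-\mX^\pinv\mX)\covfactorparam_0$ matches the GD limit. This route is arguably more elementary, leaning on a classical identity rather than a bespoke lemma, and it makes explicit the reconciliation between the Frobenius geometry in which GD moves and the Bures geometry of $W_2$ — precisely the delicate point you flag. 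What the paper's lemma buys in exchange is reuse: the same subspace decomposition is the backbone of the classification proof (\Cref{thm:classification-implicit-bias}), whereas the factor-identity shortcut would have to be re-adapted there. One minor slip in phrasing: the $\meanparam$-minimizer of $\norm{\meanparam-\meanparam_0}^2$ subject to $\mX\meanparam=\vy$ is the affine projection of $\meanparam_0$ onto the interpolating set, not the zero-anchored min-norm interpolator, though your subsequent identification with $\meanparam_0+\mX^\pinv(\vy-\mX\meanparam_0)$ shows you meant the right object.
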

\begin{proof}
    See \Cref{sec-supp:implicit-bias-sgd-regression}.
\end{proof}

\Cref{thm:implicit-bias-vi-overparametrized-regression} states that, among those variational parameters which minimize the expected loss, SGD (with momentum) converges to the unique variational distribution which is closest in 2-Wasserstein distance to the prior.
This characterization of the implicit regularization of SGD as generalized variational inference differs from a standard ELBO objective \eqref{eqn:variational-objective} in VI via the choice of regularizer. 
Since the variational parameters minimize the expected loss in \Cref{eqn:implicit-bias-gd-vi-regression}, all samples from the predictive distribution interpolate the training data (see \Cref{subfig:fig1-explicit-regularization}, right panel), the same way a standard neural network would. 
In contrast, when training with a KL regularizer, the uncertainty does not collapse at the training data (see \Cref{subfig:fig1-explicit-regularization}, left panel). In fact, a KL regularizer would diverge to infinity for a Gaussian with vanishing variance.
Now, for test points that are increasingly out-of-distribution, \ie less aligned with the span of the training data, the variational predictive matches the prior predictive more closely.
Interestingly, \( \variationalpdf_{\variationalparams_\star^{\textnormal{GD}}} \) is equal to the distribution over weights of an ensemble of linear models initialized from the prior and trained independently (see \Cref{sec-supp:connection-to-ensembles-regression}). 
Next, we prove a similar result for binary classification.

\subsection{Binary Classification of Linearly Separable Data}
\label{sec:classification-separable-dataset}

Consider a binary classification problem with 
labels \( y_n \in \{ -1, 1 \}\), a linear model \(f_\params(\vx) = \vx^\top \params\), and a variational distribution \( \variationalpdf_{\variationalparams}(\params)\) with variational parameters \(\variationalparams\).
The expected empirical loss is
\(
    \textstyle
    \expectedloss(\variationalparams) = \sum_{n \in [\numtraindata]} \expval[\variationalpdf_{\variationalparams}(\params)]{\loss(y_n \vx_n^\tr \params)}.
\)
We assume without loss of generality\footnote{This is not a restriction since we can always absorb the sign into the inputs, such that \(\vx_n' \coloneqq y_n \vx_n\).} that all labels are positive, \ie \(y_n = 1\) for all \(n\), and that the dataset is linearly separable.
\begin{restatable}{assumption}{linearlySeparable}
    \label{assum:linearly-separable}
    The dataset is \emph{linearly separable}: \( \exists \params \in \R^{\paramspacedim}\) such that \( \forall n : \params^\tr \vx_n > 0 \).
\end{restatable}
For an overparametrized linear model, if \(\traindata \in \R^{\numtraindata \times \paramspacedim}\) has full row rank the dataset is guaranteed to be linearly separable.\footnote{We can always choose \(\params = \traindata^\tr (\traindata \traindata^\tr)^{-1}\vone\), \ie the weights linearly interpolating \(\traintargets = \vone = (1, \dots, 1)^\tr\).}
Define the solution to the hard margin SVM, the \emph{$L_2$ max margin vector} as
\begin{equation}
    \label{eqn:max-margin-vector}
    \maxmarginvector = \argmin_{\vw \in \R^\paramspacedim} \| \vw \|_2^2 \quad\textrm{\ \st \ }\quad \vw^\tr \vx_n \geq 1,
\end{equation}
and the set of \emph{support vectors} \( \sv = \argmin_{n \in [\numtraindata]} \vx_n^\tr \maxmarginvector\) indexing the data points on the margin.
We make the following additional assumption which is satisfied with high probability under mild assumptions on the training data distribution and degree of overparametrization \cite{Muthukumar2021ClassificationVs,Hsu2022ProliferationSupport}.

\begin{restatable}{assumption}{supportSpan}
    \label{assum:support-spans-data}
    The SVM support vectors span the dataset: \(\linspan{\{\vx_n\}_{n \in [\numtraindata]}} = \linspan{\{\vx_n\}_{n \in \sv}}\). 
\end{restatable}

We can now characterize the implicit bias in the case of binary classification. 

\begin{restatable}[Implicit Bias in Binary Classification]{theorem}{implicitBiasVariationalInferenceClassification}
    \label{thm:classification-implicit-bias}

    Let \(f_\params(\vx) = \vx^\top \params\) be an (overparametrized) linear model and define a Gaussian prior \(p(\params) = \gaussianpdf{\params}{\meanparam_0}{\covfactorparam_0\covfactorparam_0^\tr}\). Assume a variational distribution \(\smash{\variationalpdf_{\variationalparams}(\params) = \gaussianpdf{\params}{\meanparam}{\covfactorparam\covfactorparam^\tr}}\) over the weights \(\smash{\params \in \R^{\paramspacedim}}\) with variational parameters \(\variationalparams = (\meanparam,\covfactorparam)\) such that \smash{\(\covfactorparam \in \R^{\paramspacedim \times \covrank}\)} and \(\covrank \leq \paramspacedim\).
    Assume we are using the exponential loss \(\loss(u) = \exp(-u)\) and optimize the expected empirical loss \(\expectedloss(\variationalparams)\) via gradient descent initialized at the prior, \ie \(\variationalparams_0 = (\meanparam_0, \covfactorparam_0)\), with a sufficiently small learning rate \( \lr \). 
    Then for almost any dataset which is linearly separable (\Cref{assum:linearly-separable}) and for which the support vectors span the data (\Cref{assum:support-spans-data}), the rescaled gradient descent iterates (rGD)
    \begin{equation}
        \label{eqn:rescaled-gd-iterates}
        \textstyle \variationalparams_\idxoptimstep^{\textnormal{rGD}} = (\meanparam_\idxoptimstep^{\textnormal{rGD}}, \covfactorparam_\idxoptimstep^{\textnormal{rGD}}) = \left(\frac{1}{\log(\idxoptimstep)} \meanparam_\idxoptimstep^{\textnormal{GD}} + \mP_{\textup{null}(\traininputs)} \meanparam_0, \covfactorparam_\idxoptimstep^{\textnormal{GD}}\right)
    \end{equation}
    converge to a limit point \(\variationalparams_\star^{\textnormal{rGD}} = \lim_{\idxoptimstep \to \infty} \variationalparams_\idxoptimstep^{\textnormal{rGD}}\) for which it holds that
    \begin{equation}
        \label{eqn:classification-implicit-bias}
        \variationalparams_\star^{\textnormal{rGD}} \in \argmin_{\substack{\variationalparams = (\meanparam, \covfactorparam)\\ \textrm{\st \ } \variationalparams \in \Theta_\star}} \dw[2][2]{\variationalpdf_{\variationalparams}}{p},
    \end{equation}
    where the feasible set 
    \(
        \Theta_\star = \{ (\meanparam, \covfactorparam) \mid  
        \mP_{\textup{range}(\traininputs^\tr)} \meanparam = \maxmarginvector
        \quad  \textrm{and} \quad 
        \forall n: \var[\variationalpdf_{\variationalparams}]{f_\params(\vx_n)} = 0
        \}
    \) 
    consists of mean parameters which, if projected onto the training data, are equivalent to the \(L_2\) max margin vector and covariance parameters such that there is no uncertainty at training data. 
\end{restatable}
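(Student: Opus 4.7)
The plan is to reduce the problem to an extension of the non-probabilistic implicit-bias analyses of \cite{Soudry2018ImplicitBias, Nacson2019StochasticGradient}, exploiting the closed-form expected loss available under a Gaussian variational family. First, with $y_n = 1$ by \Cref{assum:linearly-separable} and $\params \sim \gaussianpdf{\params}{\meanparam}{\covfactorparam\covfactorparam^\tr}$, each per-sample expected loss is a Gaussian MGF evaluated at $-1$:
\begin{equation*}
    \alpha_n(\variationalparams) \coloneqq \expval[\variationalpdf_{\variationalparams}(\params)]{\exp(-\vx_n^\tr \params)} = \exp\!\left(-\vx_n^\tr \meanparam + \tfrac{1}{2}\|\covfactorparam^\tr \vx_n\|^2\right),
\end{equation*}
so $\expectedloss(\variationalparams) = \sum_n \alpha_n(\variationalparams)$, $-\nabla_\meanparam \expectedloss = \sum_n \alpha_n \vx_n$, and $-\nabla_\covfactorparam \expectedloss = -\sum_n \alpha_n \vx_n \vx_n^\tr \covfactorparam$. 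A one-line induction yields two invariances of the GD trajectory: (i) $\meanparam_\idxoptimstep^{\textnormal{GD}} - \meanparam_0 \in \rangespace{\traininputs^\tr}$, and (ii) every column of $\covfactorparam_\idxoptimstep^{\textnormal{GD}} - \covfactorparam_0$ lies in $\rangespace{\traininputs^\tr}$. In particular, $\mP_{\textup{null}(\traininputs)} \meanparam_\idxoptimstep^{\textnormal{GD}}$ and $\mP_{\textup{null}(\traininputs)} \covfactorparam_\idxoptimstep^{\textnormal{GD}}$ are frozen at their initial values for all $\idxoptimstep$.

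Next I would establish covariance collapse $\covfactorparam_\idxoptimstep^{\textnormal{GD},\tr} \vx_n \to 0$ for all $n$, followed by mean-direction convergence. The covariance update reads $\covfactorparam_{\idxoptimstep+1}^{\textnormal{GD}} = (\mI - \lr \traininputs^\tr \mD_\idxoptimstep \traininputs) \covfactorparam_\idxoptimstep^{\textnormal{GD}}$ with $\mD_\idxoptimstep = \operatorname{diag}(\alpha_n(\idxoptimstep)) \succeq 0$, contracting each column along $\rangespace{\traininputs^\tr}$, while $\sum_\idxoptimstep \alpha_n(\idxoptimstep)$ diverges via a Soudry-type lower bound $\alpha_n(\idxoptimstep) \gtrsim 1/\idxoptimstep$. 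Once the variance correction $\tfrac{1}{2}\|\covfactorparam^\tr \vx_n\|^2$ is asymptotically negligible, the mean dynamics reduce to plain exponential-loss dynamics, and the arguments of \cite{Soudry2018ImplicitBias, Nacson2019StochasticGradient} under \Cref{assum:linearly-separable,assum:support-spans-data} yield $\mP_{\rangespace{\traininputs^\tr}} \meanparam_\idxoptimstep^{\textnormal{GD}} / \log(\idxoptimstep) \to \maxmarginvector$. Combined with invariance (i), the rescaling in \eqref{eqn:rescaled-gd-iterates} converges to $\meanparam_\star^{\textnormal{rGD}} = \maxmarginvector + \mP_{\textup{null}(\traininputs)} \meanparam_0$; the SGD and momentum extensions are inherited from the corresponding non-probabilistic analyses. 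The main obstacle here is the mean--covariance coupling through $\alpha_n$: covariance shrinkage is modulated by mean growth and vice versa, so one must simultaneously show that the variance correction remains bounded and vanishes (so the mean retains the standard $\log\idxoptimstep$ growth) while $\alpha_n(\idxoptimstep)$ decays fast enough for full contraction of $\covfactorparam^\tr \vx_n$ in $\rangespace{\traininputs^\tr}$, yet slowly enough that $\sum_\idxoptimstep \alpha_n$ still diverges.

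Finally, I would verify the Wasserstein characterization \eqref{eqn:classification-implicit-bias} directly. The Bures--Wasserstein distance between two Gaussians decomposes as $\dw[2][2]{\variationalpdf_{\variationalparams}}{p} = \|\meanparam - \meanparam_0\|^2 + \trace{\covparam + \covparam_0 - 2(\covparam_0^{1/2}\covparam\covparam_0^{1/2})^{1/2}}$ with $\covparam = \covfactorparam\covfactorparam^\tr$, decoupling into a mean and a covariance term. Over $\Theta_\star$, the constraint $\mP_{\rangespace{\traininputs^\tr}} \meanparam = \maxmarginvector$ pins the range component of $\meanparam$, and the mean-squared term is minimized by $\mP_{\textup{null}(\traininputs)} \meanparam = \mP_{\textup{null}(\traininputs)} \meanparam_0$. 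The variance constraint $\forall n: \vx_n^\tr \covparam \vx_n = 0$ forces $\rangespace{\covparam} \subseteq \textup{null}(\traininputs)$, and on this set the Bures--Wasserstein minimizer is $\covparam_\star = \mP_{\textup{null}(\traininputs)} \covparam_0 \mP_{\textup{null}(\traininputs)}$, realized by $\covfactorparam_\star = \mP_{\textup{null}(\traininputs)} \covfactorparam_0$. By invariances (i)--(ii) together with covariance collapse, these are exactly the limits identified by the rescaled gradient-descent iterates, proving the theorem.
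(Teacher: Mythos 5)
Your plan has the right components---the Gaussian MGF closed form of the expected loss, the range/null-space invariances of the GD trajectory, covariance collapse on the data subspace paired with $L_2$ max-margin convergence of the mean, and a Bures--Wasserstein decomposition certifying the implicit bias---and you correctly flag the mean--covariance coupling as the central difficulty. But your proposal does not supply the idea that breaks that circularity, so the order you propose cannot be executed as written: you would establish covariance collapse \emph{before} mean convergence, yet the lower bound on the per-example expected losses (your $\alpha_n(t)\gtrsim 1/t$) that you invoke to make $\sum_t\alpha_n(t)$ diverge is itself a consequence of the $\log t$ growth of $\meanparam_t$, which you plan to prove afterwards. The update $\covfactorparam_{t+1} = (\mI-\lr\traininputs^\tr\mD_t\traininputs)\covfactorparam_t$ only gives monotonicity of $\covfactorparam_t^\tr\vx_n$, not collapse, absent that divergence.

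The paper breaks the circle with an observation your sketch omits: since $\covfactorparam_{t+1}\covfactorparam_{t+1}^\tr = (\mI-\lr\mA_t)\covfactorparam_t\covfactorparam_t^\tr(\mI-\lr\mA_t)^\tr$ with $\mA_t\succeq 0$ and $\lr\le\lambda_{\max}(\mA_0)^{-1}$, the sequence $\covfactorparam_t\covfactorparam_t^\tr$ is monotone decreasing in the PSD order \emph{a priori}, independently of the mean dynamics. This immediately yields bounded variance corrections and, crucially, finite total variation $\sum_t\|\covfactorparam_{t+1}\covfactorparam_{t+1}^\tr-\covfactorparam_t\covfactorparam_t^\tr\|_F<\infty$. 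Your claim that the variance correction must vanish for the mean to retain the $\log t$ growth is therefore too strong; the paper instead absorbs the time-varying, bounded variance correction into a covariance-dependent normalizer $\kktvector_t$, defines the residual $\vr_t=\meanparam_t-\maxmarginvector\log t-\kktvector_t$, and controls the extra $(\kktvector_{t+1}-\kktvector_t)^\tr\vr_t$ term via that summability in a discrete Gr\"onwall argument. Only \emph{after} the resulting uniform lower bound on $\exp(-\kktvector_t^\tr\vx_n-\vr_t^\tr\vx_n)$ does the paper conclude covariance collapse via a second Gr\"onwall bound on $\trace{\mP_\sv\covfactorparam_t\covfactorparam_t^\tr\mP_\sv}$. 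Your closing Wasserstein step is in the right direction, but the claim that the constrained Bures minimizer under $\rangespace{\covparam}\subseteq\nullspace{\traininputs}$ equals $\mP_{\textup{null}(\traininputs)}\covparam_0\mP_{\textup{null}(\traininputs)}$ is itself nontrivial; it is precisely what the paper's subspace-decomposition lemma for the squared $2$-Wasserstein distance establishes, and should not be asserted.
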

\begin{proof}
    See \Cref{sec-supp:classification-on-separable-dataset}.
\end{proof}

\Cref{thm:classification-implicit-bias} states that the mean parameters \(\meanparam_\idxoptimstep\) converge to the \(L_2\) max-margin vector \(\maxmarginvector\) in the span of the training data, \ie the data manifold, and there uncertainty collapses to zero. 
This is analogous to the regression case, where zero training loss enforces interpolation of the training data. 
In the null space of the training data, \ie off of the data manifold, the model falls back on the prior as enforced by the 2-Wasserstein distance.
The assumption of an exponential loss is standard in the literature and we expect this to extend to (binary) cross-entropy in the same way it does in results for standard neural networks \cite{Soudry2018ImplicitBias,Nacson2019ConvergenceGradient,Nacson2019StochasticGradient,Wang2022DoesMomentum,Ravi2024ImplicitBias}.
Similarly, we conjecture that \Cref{thm:classification-implicit-bias} can be extended to SGD with momentum \cite[\cf][]{Nacson2019StochasticGradient,Wang2022DoesMomentum}.
While \Cref{thm:classification-implicit-bias} is similar to \Cref{thm:implicit-bias-vi-overparametrized-regression}, there are some subtle differences. 
First, the feasible set for the minimization problem in \Cref{eqn:classification-implicit-bias} is not the set of minima of the expected loss. 
This is because the exponential function does not have an optimum in contrast to a quadratic function. 
However, the sequence of variational parameters identified by gradient descent still satisfies \(\lim_{\idxoptimstep \to \infty} \expectedloss(\variationalparams_\idxoptimstep) = 0\).
Second, without transformation of the mean parameters, the exponential loss results in the mean parameters being unbounded.
This necessitates the transformation in \Cref{eqn:rescaled-gd-iterates} as we explain in detail in \Cref{sec-supp:nll-overfitting-and-need-for-scaling}.

\section{Experiments}
\label{sec:experiments}

We benchmark the \emph{generalization} and \emph{robustness} of our approach, \textcolor{ImplicitBiasVI}{Implicit Bias VI (IBVI)}, against \textcolor{Standard}{standard neural networks} and several baselines for uncertainty quantification, namely \textcolor{TemperatureScaling}{Temperature Scaling (TS)} \cite{Guo2017CalibrationModern}, \textcolor{LaplaceLLML}{Laplace approximation (LA-GS)} \& \textcolor{LaplaceLLML}{(LA-ML)} \cite{MacKay1992PracticalBayesian,Ritter2018ScalableLaplace,Daxberger2021LaplaceRedux}, \textcolor{WSVIMeanField}{Weight-Space VI (WSVI)} \cite{Graves2011PracticalVariational,Blundell2015WeightUncertainty}, \textcolor{SWAG}{SWA-Gaussian (SWAG)} \cite{Maddox2019SimpleBaseline}, and \textcolor{Ensemble}{Deep Ensembles (DE)} \cite{Lakshminarayanan2017SimpleScalable}, on a set of standard benchmark datasets for image classification and robustness to input corruptions.
We use convolutional architectures (LeNet5 \cite{LeCun1998GradientbasedLearning} or ResNet34 \cite{He2016DeepResidual}), which, for all datasets but MNIST, are initialized with pretrained weights except for the input and output layer.
All models are trained with SGD with momentum \(\momentum=0.9\) and a batch size of \(\batchsize=128\) for \(200\) epochs in single precision on an NVIDIA GH200 GPU.
Results shown are averaged across five random seeds.
A detailed description of the datasets, metrics, models and training can be found in \Cref{sec-supp:experiments}.
An implementation of our method can be found at:
\begin{center}
    \href{https://github.com/inferno-ml/inferno}{\url{https://github.com/inferno-ml/inferno}}
\end{center}

\paragraph{In-Distribution Generalization and Uncertainty Quantification}
In order to assess the in-distribution generalization, we measure the test error, negative log-likelihood (NLL), and calibration error (ECE) on MNIST, CIFAR10, CIFAR100 and TinyImageNet. As \Cref{fig:generalization-id-computational-cost-sp} shows for CIFAR100, and \Cref{fig:generalization-id-sp} for all datasets, the test error for post-hoc methods (\textcolor{TemperatureScaling}{TS}, \textcolor{LaplaceLLML}{LA-GS}, \textcolor{LaplaceLLML}{LA-ML}) is unchanged. As expected, \textcolor{SWAG}{SWAG} and \textcolor{ImplicitBiasVI}{IBVI} perform similarly with only \textcolor{Ensemble}{Ensembles} providing an increase in accuracy, but at substantial memory overhead compared to most other approaches.
Similarity of \textcolor{ImplicitBiasVI}{IBVI} to \textcolor{Ensemble}{Ensembles} is perhaps expected in light of their equivalence for linear models (see \Cref{thm:regression-connection-ensembles-implicit-bias-vi}).
In-distribution uncertainty quantification measured in terms of NLL is improved substantially by \textcolor{TemperatureScaling}{TS}, \textcolor{Ensemble}{DE}, and \textcolor{ImplicitBiasVI}{IBVI}, with only \textcolor{LaplaceLLML}{LA} and \textcolor{WSVIMeanField}{WSVI} showing occasional worsening of NLL compared to the base model. The full results in \Cref{fig:generalization-id-sp} show that \textcolor{TemperatureScaling}{TS}, \textcolor{Ensemble}{DE}, and \textcolor{ImplicitBiasVI}{IBVI} consistently are also the best calibrated. As described in \Cref{sec:computational-efficiency}, for \textcolor{ImplicitBiasVI}{IBVI} we train with a single sample only and a probabilistic input and output layer with low-rank covariance, reducing the memory overhead compared to a standard neural network to as little as \(\approx 10\%\) with similar training time (see \Cref{fig:generalization-id-computational-cost-sp}).
See \Cref{sec-supp:generalization-id} for the full experimental results including different parametrizations (SP vs \(\mu\)P).

\begin{figure}
    \centering
    \includegraphics[width=\textwidth]{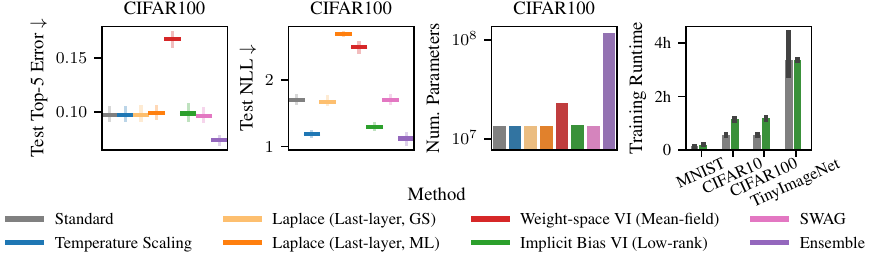}
    \caption{\emph{In-distribution generalization and uncertainty quantification.} 
    Implicit Bias VI (IBVI) has similar test error to other Bayesian deep learning approaches and achieves competitive uncertainty quantification on in-distribution data. While ensembles have improved accuracy, they come at an additional memory overhead. 
    Training a probabilistic model via IBVI has only a minor computational overhead during training, both in time and memory, over standard deep learning.
    }
    \label{fig:generalization-id-computational-cost-sp}
\end{figure}

\paragraph{Robustness to Input Corruptions}
We evaluate the robustness of the different models on MNISTC \cite{Mu2019MNISTCRobustness}, CIFAR10C, CIFAR100C, and TinyImageNetC \cite{Hendrycks2019BenchmarkingNeural}. These are corrupted versions of the original datasets, where the images are modified via a set of \(15\) corruptions, such as impulse noise, blur, pixelation, etc. We selected the maximum severity for each corruption and averaged the performance across all.
As expected, the performance of all models drops compared to the in-distribution performance measured on the standard test sets as \Cref{fig:generalization-ood-sp} shows. 
Besides \textcolor{Ensemble}{DE} which consistently show lower test error, also \textcolor{ImplicitBiasVI}{IBVI} shows improved accuracy on corrupted data compared to all other approaches. When using the maximal update parametrization, \textcolor{SWAG}{SWAG} shows good accuracy on the two larger datasets (see \Cref{fig:generalization-ood-mup}).
\textcolor{TemperatureScaling}{TS}, \textcolor{Ensemble}{DE}, and \textcolor{ImplicitBiasVI}{IBVI} perform consistently well in terms of uncertainty quantification (both for NLL and ECE) across all datasets, with \textcolor{LaplaceLLML}{LA-ML} being somewhat competitive in terms of NLL.
However, compared to the in-distribution setting \textcolor{ImplicitBiasVI}{IBVI} has better uncertainty quantification than \textcolor{Ensemble}{Ensembles} across all datasets.

\begin{figure}
    \centering
    \includegraphics[width=\textwidth]{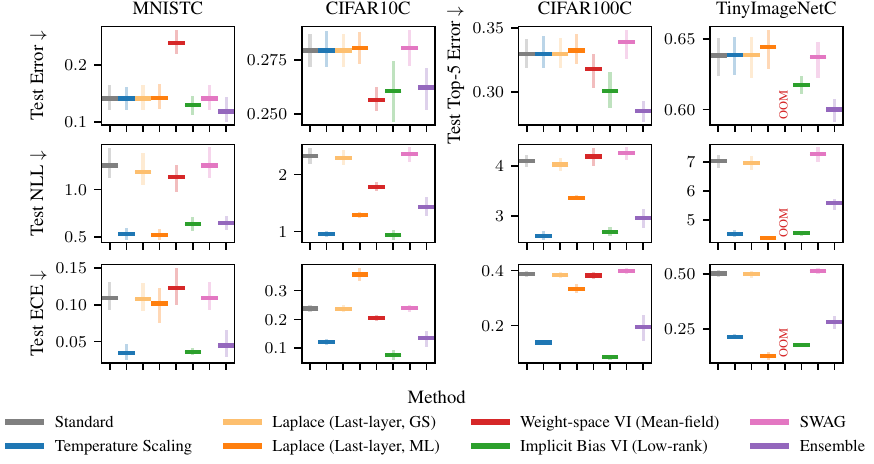}
    \caption{
        \emph{Generalization on robustness benchmark problems.}
        When comparing different methods for Bayesian deep learning with regards to robustness to 15 different input corruptions, our approach, Implicit Bias VI, consistently has competitive uncertainty quantification across different datasets and metrics without sacrificing accuracy compared to a non-probabilistic network. 
    }
    \label{fig:generalization-ood-sp}
\end{figure}

\paragraph{Limitations}
Compared to standard neural networks, when training via Implicit Bias VI, we observed that often lower learning rates were necessary due to the additional stochasticity in the objective (see also \Cref{sec:computational-efficiency}). 
While this does not have a significant impact on generalization, it sometimes requires slightly more epochs to achieve similar in-distribution performance to standard neural networks.
Effectively, early in training it takes a bit more time for IBVI to become sufficiently certain about those features which are critical for in-distribution performance.
This also means that folk knowledge on learning rate settings for specific architectures may not immediately transfer.
In the experiments we train models with probabilistic in- and output layers with our approach, but we have so far not explored other covariance structures or where in the network probabilistic layers are most beneficial. 
While there is theoretical evidence that even just a single probabilistic hidden layer may be sufficient \cite{Sharma2023BayesianNeural}, we believe there is potential for improvement.
Beyond the prior induced by the choice of parametrization, we did not experiment with more informative or learned priors, which could potentially give significant performance improvements on certain tasks \cite{Fortuin2022PriorsBayesian}.

\section{Conclusion}
In this paper, we demonstrated how to improve the robustness of deep neural networks while quantifying predictive uncertainty by exploiting the implicit regularization of (stochastic) gradient descent.
We rigorously characterized this implicit bias for an overparametrized linear model and showed that our approach is equivalent to generalized variational inference with a 2-Wasserstein regularizer at reduced computational cost.
We demonstrated the importance of parameterization and how it impacts the inductive bias via the initialization ---
thus conferring desirable properties such as learning rate transfer.
Lastly, we empirically demonstrated competitive performance with state-of-the-art methods for Bayesian deep learning on a set of in- and out-of-distribution benchmarks with minimal computational overhead over standard deep learning.
In principle, our approach is not restricted to Gaussian variational families and should seamlessly extend to location-scale families, which could further 
improve performance. 
Finally, it would be interesting to explore connections between Implicit Bias VI and Bayesian deep learning in function-space \cite[\eg,][]{Burt2020UnderstandingVariational,Wild2022GeneralizedVariational,Qiu2023ShouldWe,Rudner2023TractableFunctionSpace,Cinquin2024FSPLaplaceFunctionSpace}.


\subsubsection*{Acknowledgments}
JW, BC, JM and JPC are supported by the Gatsby Charitable Foundation (GAT3708), the Simons Foundation (542963), the NSF AI Institute for Artificial and Natural Intelligence (ARNI: NSF DBI 2229929) and the Kavli Foundation.
This work used the DeltaAI system at the National Center for Supercomputing Applications through allocations CIS250340 and CIS250292 from the Advanced Cyberinfrastructure Coordination Ecosystem: Services \& Support (ACCESS) program, which is supported by U.S. National Science Foundation grants \#2138259, \#2138286, \#2138307, \#2137603, and \#2138296.
The authors would like to thank Hanna Dettki for valuable input, which significantly improved this paper.

{
	\small
	\printbibliography
}

\clearpage

\beginsupplementary
\startcontents[supplementary]

\section*{Supplementary Material}

This supplementary material contains additional results and proofs
for all theoretical statements. References referring to sections, equations or theorem-type
environments within this
document are prefixed with `S', while references to, or results from, the main paper are stated as
is.

	{\small
		\vspace{2em}
		\printcontents[supplementary]{}{1}{}
		\vspace{2em}
	}

\section{Theoretical Results}
\label{sec-supp:implicit-bias-vi-theory}

\begin{restatable}{lemma}{}
    \label{lem:wasserstein-subspace-decomposition}
    Let \(\variationalpdf(\params) = \gaussianpdf{\params}{\meanparam}{\covparam}\), \(p(\params) = \gaussianpdf{\params}{\meanparam_0}{\covparam_0}\)  such that \(\meanparam, \meanparam_0 \in \R^\paramspacedim\), \(\covparam, \covparam_0 \in \R^{\paramspacedim \times \paramspacedim}\) positive semi-definite and let \(\mV_A \in \R^{\paramspacedim \times \numtraindata}\), \(\mV_B \in \R^{\paramspacedim \times (\paramspacedim - \numtraindata)}\) be matrices with pairwise orthonormal columns that together define an orthonormal basis of \(\R^{\paramspacedim}\), \ie for \(\mV = \begin{bmatrix}\mV_A & \mV_B \end{bmatrix}\) it holds that  \(\mV\mV^\tr = \mV^\tr\mV=\mI\) and \(\linspan{\mV} = \R^{\paramspacedim}\). Assume further that 
    \begin{equation}
        \label{eqn:wasserstein-subspace-decomposition-zero-covariance}
        \mV_A^\tr \mSigma \mV_A = \mZero,
    \end{equation}
    then the squared 2-Wasserstein distance is given by
    \begin{equation}
        \label{eqn:wasserstein-subspace-decomposition}
        \dw[2][2]{\variationalpdf}{p} = \norm*{\mV_{A}^\tr \meanparam - \mV_{A}^\tr\meanparam_0}_2^2 + \dw[2][2]{\gaussian{\mV_B^\tr\meanparam}{\mV_B^\tr \covparam \mV_B}}{\gaussian{\mV_B^\tr\meanparam_0}{\mV_B^\tr \covparam_0 \mV_B}} + C,
    \end{equation}
    where the constant \(C\) is independent of \((\meanparam, \covparam)\).
\end{restatable}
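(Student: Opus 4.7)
The plan is to exploit the fact that the 2-Wasserstein distance between Gaussians is invariant under orthogonal changes of basis and then to use the vanishing-block assumption \eqref{eqn:wasserstein-subspace-decomposition-zero-covariance} to split the Bures covariance term into an $A$-part that contributes only a constant and a $B$-part that is exactly the claimed lower-dimensional $W_2^2$. Concretely, I would start from the closed-form expression
\begin{equation*}
\dw[2][2]{\gaussian{\meanparam}{\mSigma}}{\gaussian{\meanparam_0}{\mSigma_0}} = \norm{\meanparam-\meanparam_0}_2^2 + \trace{\mSigma + \mSigma_0 - 2(\mSigma^{1/2}\mSigma_0 \mSigma^{1/2})^{1/2}},
\end{equation*}
and conjugate everything by the orthogonal matrix $\mV = [\mV_A\; \mV_B]$. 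Because $\mV\mV^\tr = \mI$, the mean term decomposes immediately as $\norm{\mV_A^\tr(\meanparam-\meanparam_0)}_2^2 + \norm{\mV_B^\tr(\meanparam-\meanparam_0)}_2^2$, and the pushforward of $\variationalpdf$ by $\vx\mapsto \mV^\tr \vx$ is a Gaussian with mean $\mV^\tr\meanparam$ and covariance $\tilde{\mSigma}\coloneqq \mV^\tr\mSigma\mV$ (likewise for $p$).

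Next I would use the key structural consequence of assumption \eqref{eqn:wasserstein-subspace-decomposition-zero-covariance}: since $\mSigma \succeq 0$ and $\mV_A^\tr\mSigma\mV_A = \mZero$, every column of $\mV_A$ lies in the nullspace of $\mSigma$, hence $\mSigma\mV_A = \mZero$, and therefore also $\mV_B^\tr\mSigma\mV_A = \mZero$. Thus in the new basis
\begin{equation*}
\tilde{\mSigma} = \begin{bmatrix} \mZero & \mZero \\ \mZero & \mV_B^\tr\mSigma\mV_B \end{bmatrix}, \qquad \tilde{\mSigma}_0 = \begin{bmatrix} \mV_A^\tr\mSigma_0\mV_A & \mV_A^\tr\mSigma_0\mV_B \\ \mV_B^\tr\mSigma_0\mV_A & \mV_B^\tr\mSigma_0\mV_B \end{bmatrix}.
\end{equation*}
The matrix square root $\tilde{\mSigma}^{1/2}$ inherits the same block-zero structure, so $\tilde{\mSigma}^{1/2}\tilde{\mSigma}_0\tilde{\mSigma}^{1/2}$ is supported entirely on the $B$-block and equals $(\mV_B^\tr\mSigma\mV_B)^{1/2}(\mV_B^\tr\mSigma_0\mV_B)(\mV_B^\tr\mSigma\mV_B)^{1/2}$, and the same is true of its square root.

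Taking the trace of the Bures term block by block then yields
\begin{equation*}
\trace{\tilde{\mSigma} + \tilde{\mSigma}_0 - 2(\tilde{\mSigma}^{1/2}\tilde{\mSigma}_0\tilde{\mSigma}^{1/2})^{1/2}} = \trace{\mV_A^\tr\mSigma_0\mV_A} + \trace{\mV_B^\tr\mSigma\mV_B + \mV_B^\tr\mSigma_0\mV_B - 2\bigl((\mV_B^\tr\mSigma\mV_B)^{1/2}\mV_B^\tr\mSigma_0\mV_B(\mV_B^\tr\mSigma\mV_B)^{1/2}\bigr)^{1/2}},
\end{equation*}
where the first summand depends only on the prior and thus is the claimed constant $C$. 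Recombining with the mean decomposition and recognising the $B$-block expression as $\dw[2][2]{\gaussian{\mV_B^\tr\meanparam}{\mV_B^\tr\mSigma\mV_B}}{\gaussian{\mV_B^\tr\meanparam_0}{\mV_B^\tr\mSigma_0\mV_B}}$ plus $\norm{\mV_B^\tr(\meanparam-\meanparam_0)}_2^2$ gives \eqref{eqn:wasserstein-subspace-decomposition}.

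The main subtlety I anticipate is the PSD-rank argument that takes $\mV_A^\tr\mSigma\mV_A = \mZero$ to $\mSigma\mV_A = \mZero$ (and hence to block-diagonality of $\tilde{\mSigma}$); everything downstream — the block structure of $\tilde{\mSigma}^{1/2}$, the absence of cross terms in the Bures product, and the separation of the trace — follows from this once stated carefully. The rest is routine linear algebra plus the isometry invariance of $W_2$.
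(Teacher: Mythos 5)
Your proposal is correct and follows essentially the same route as the paper: conjugate by $\mV = [\mV_A\;\mV_B]$, exploit positive semi-definiteness of $\covparam$ together with $\mV_A^\tr\covparam\mV_A = \mZero$ to kill the off-diagonal and $A$-blocks, and then split the mean and Bures-trace terms accordingly, with the residual $\trace{\mV_A^\tr\covparam_0\mV_A}$ emerging as the $(\meanparam,\covparam)$-independent constant $C$. The only cosmetic difference is in how you derive the block-diagonality of $\tilde{\mSigma}$: you argue directly that $\mV_A^\tr\covparam\mV_A = \mZero$ with $\covparam\succeq 0$ forces $\covparam\mV_A = \mZero$ (via the nullspace of $\covparam^{1/2}$), whereas the paper cites a standard result on PSD block matrices; the two are equivalent and equally rigorous.
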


\begin{proof}
    Consider the matrix 
    \begin{equation*}
        \mV^\tr \covparam \mV = \begin{bmatrix} \mZero_{\numtraindata \times \numtraindata} & \mV_{A}^\tr \covparam \mV_{B} \\ \mV_{B}^\tr \covparam \mV_{A} & \mV_{B}^\tr \covparam \mV_{B}\end{bmatrix}.
    \end{equation*}
    Since \(\mV^\tr \covparam \mV\) is symmetric positive semi-definite, its off-diagonal block \(\mV_{A}^\tr \covparam \mV_{B}\) satisfies 
    \begin{equation*}
        (\mI - \mZero \mZero^\pinv) \mV_{A}^\tr \covparam \mV_{B} = \mZero \iff \mV_{A}^\tr \covparam \mV_{B} = \mZero
    \end{equation*}
    by \citet[A5.5,][]{Boyd2004ConvexOptimization}.
    Therefore, we have  
    \begin{equation}
        \label{eqn:proof-wasserstein-subspace-decomposition-covariance-blocks-zero}
        \mV^\tr \covparam \mV  = \begin{bmatrix} \mZero_{\numtraindata \times \numtraindata} & \mV_{A}^\tr \covparam \mV_{B} \\ \mV_{B}^\tr \covparam \mV_{A} & \mV_{B}^\tr \covparam \mV_{B}\end{bmatrix} = \begin{bmatrix} \mZero_{\numtraindata \times \numtraindata} & \mZero_{\numtraindata \times (\paramspacedim-\numtraindata)} \\ \mZero_{(\paramspacedim-\numtraindata) \times \numtraindata} & \mV_{B}^\tr \covparam \mV_{B} \end{bmatrix}.
    \end{equation}

    The squared 2-Wasserstein distance between \(\variationalpdf(\params)\) and \(p(\params)\) is given by
    \begin{equation*}
        \dw[2][2]{\variationalpdf}{p} = \norm{\meanparam - \meanparam_0}_2^2 + \trace{\covparam - 2 (\covparam^{\frac{1}{2}} \covparam_0 \covparam^{\frac{1}{2}})^{\frac{1}{2}} + \covparam_0}.
    \end{equation*}
    For the squared norm term it holds by unitary invariance of \(\norm{\cdot}_2\) that
    \begin{align*}
        \norm{\meanparam - \meanparam_0}_2^2 &=\norm{\mV^\tr (\meanparam - \meanparam_0)}_2^2
        = \norm*{\begin{bmatrix} \mV_{A}^\tr (\meanparam - \meanparam_0) \\  \mV_{B}^\tr (\meanparam - \meanparam_0)\end{bmatrix}}_2^2
        = \norm*{\mV_{A}^\tr \meanparam - \mV_{A}^\tr\meanparam_0}_2^2 + \norm*{\mV_{B}^\tr \meanparam - \mV_{B}^\tr \meanparam_0}_2^2.
    \end{align*}
    Now for the trace term we have that
    \begin{equation}
        \label{eqn:wasserstein-subspace-decomposition-trace-term}
        \begin{aligned}
            &\trace{\mV\mV^\tr (\covparam - 2 (\covparam^{\frac{1}{2}} \covparam_0 \covparam^{\frac{1}{2}})^{\frac{1}{2}} + \covparam_0)}\\
            &=\trace{\mV^\tr \covparam \mV}  - 2 \trace{\mV^\tr (\covparam^{\frac{1}{2}} \covparam_0 \covparam^{\frac{1}{2}})^{\frac{1}{2}}\mV} + \trace{\mV^\tr \covparam_0 \mV }\\
            &= \trace{\mV_A^\tr \covparam \mV_A} + \trace{\mV_B^\tr \covparam \mV_B} + \trace{\mV_A^\tr \covparam_0 \mV_A} + \trace{\mV_B^\tr \covparam_0 \mV_B} - 2 \trace{\mV^\tr (\covparam^{\frac{1}{2}} \covparam_0 \covparam^{\frac{1}{2}})^{\frac{1}{2}}\mV}\\
            &\consteq \trace{\mV_B^\tr \covparam \mV_B} + \trace{\mV_B^\tr \covparam_0 \mV_B} - 2 \trace{\mV^\tr (\covparam^{\frac{1}{2}} \covparam_0 \covparam^{\frac{1}{2}})^{\frac{1}{2}}\mV} 
        \end{aligned}
    \end{equation}
    where we used \cref{eqn:wasserstein-subspace-decomposition-zero-covariance} and \(\consteq\) denotes equality up to constants independent of \((\meanparam, \covparam)\).

    Now by \cref{eqn:proof-wasserstein-subspace-decomposition-covariance-blocks-zero}, we have that \(\covparam =  \mV_{B} \mM \mV_{B}^\tr\) for \(\mM =\mV_{B}^\tr \covparam \mV_{B}\) and its unique principal square root is given by \(\mSigma^{\frac{1}{2}} = \mV_{B} \mM^{\frac{1}{2}} \mV_{B}^\tr\) since
    \begin{equation*}
        (\mV_{B} \mM^{\frac{1}{2}} \mV_{B}^\tr)(\mV_{B} \mM^{\frac{1}{2}} \mV_{B}^\tr) = \mV_{B} \mM^{\frac{1}{2}} \mI_{(\paramspacedim - \numtraindata) \times (\paramspacedim - \numtraindata)} \mM^{\frac{1}{2}} \mV_{B}^\tr = \mSigma.
    \end{equation*}
    It also holds that the unique principal square root
    \begin{equation*}
        (\covparam^{\frac{1}{2}} \covparam_0 \covparam^{\frac{1}{2}})^{\frac{1}{2}} = \mV_{B} (\mM^{\frac{1}{2}} \mV_{B}^\tr \mSigma_0 \mV_{B} \mM^{\frac{1}{2}})^{\frac{1}{2}}\mV_{B}^\tr 
    \end{equation*}
    since direct calculation gives
    \begin{align*}
        (\mV_{B} (\mM^{\frac{1}{2}}& \mV_{B}^\tr \mSigma_0 \mV_{B} \mM^{\frac{1}{2}})^{\frac{1}{2}}\mV_{B}^\tr)(\mV_{B} (\mM^{\frac{1}{2}} \mV_{B}^\tr \mSigma_0 \mV_{B} \mM^{\frac{1}{2}})^{\frac{1}{2}}\mV_{B}^\tr) \\
        &= \mV_{B} \mM^{\frac{1}{2}} \mV_{B}^\tr \mSigma_0 \mV_{B} \mM^{\frac{1}{2}}\mV_{B}^\tr = \covparam^{\frac{1}{2}} \covparam_0 \covparam^{\frac{1}{2}}.
    \end{align*}
    Therefore we have that 
    \begin{equation*}
            \trace{\mV^\tr (\covparam^{\frac{1}{2}} \covparam_0 \covparam^{\frac{1}{2}})^{\frac{1}{2}}\mV} = \trace{\mV^\tr \mV_{B} (\mM^{\frac{1}{2}} \mV_{B}^\tr \mSigma_0 \mV_{B} \mM^{\frac{1}{2}})^{\frac{1}{2}}\mV_{B}^\tr \mV}\\
            = \trace{(\mM^{\frac{1}{2}} \mV_{B}^\tr \mSigma_0 \mV_{B} \mM^{\frac{1}{2}})^{\frac{1}{2}}}.
    \end{equation*}
    Putting it all together we obtain
    \begin{align*}
        \dw[2][2]{\variationalpdf}{p} &\consteq \norm*{\mV_{A}^\tr \meanparam - \mV_{A}^\tr\meanparam_0}_2^2 + \norm*{\mV_{B}^\tr \meanparam - \mV_{B}^\tr \meanparam_0}_2^2 + \trace{\mV_B^\tr \covparam \mV_B} + \trace{\mV_B^\tr \covparam_0 \mV_B} - 2 \trace{\mV^\tr (\covparam^{\frac{1}{2}} \covparam_0 \covparam^{\frac{1}{2}})^{\frac{1}{2}}\mV}\\
        &= \norm*{\mV_{A}^\tr \meanparam - \mV_{A}^\tr\meanparam_0}_2^2 + \norm*{\mV_{B}^\tr \meanparam - \mV_{B}^\tr \meanparam_0}_2^2 + \trace{\mV_B^\tr \covparam \mV_B} + \trace{\mV_B^\tr \covparam_0 \mV_B} - 2\trace{(\mM^{\frac{1}{2}} \mV_{B}^\tr \mSigma_0 \mV_{B} \mM^{\frac{1}{2}})^{\frac{1}{2}}}\\
        &= \norm*{\mV_{A}^\tr \meanparam - \mV_{A}^\tr\meanparam_0}_2^2 + \dw[2][2]{\gaussian{\mV_B^\tr\meanparam}{\mV_B^\tr \covparam \mV_B}}{\gaussian{\mV_B^\tr\meanparam_0}{\mV_B^\tr \covparam_0 \mV_B}}
    \end{align*}
    which completes the proof.
\end{proof}

\subsection{Overparametrized Linear Regression}

\subsubsection{Characterization of Implicit Bias (Proof of \Cref{thm:implicit-bias-vi-overparametrized-regression})}
\label{sec-supp:implicit-bias-sgd-regression}

\begin{figure}
    \centering
	\begin{subfigure}[t]{0.47\textwidth}
		\centering
		\includegraphics[width=\textwidth]{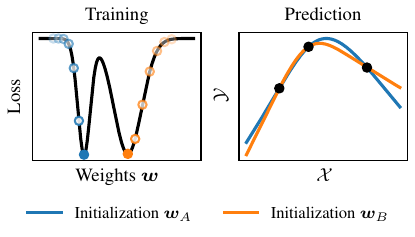}
        \caption{NN trained with \emph{no explicit regularization}.}
		\label{subfig:loss-predictive-nns}
	\end{subfigure}%
	\hfill
	\begin{subfigure}[t]{0.47\textwidth}
		\centering
		\includegraphics[width=\textwidth]{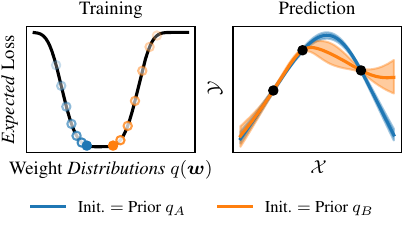}
        \caption{BNN trained with \emph{no explicit regularization}.}
		\label{subfig:loss-predictive-bnns}
	\end{subfigure}%
    \caption{
        \emph{Implicit regularization in standard neural networks versus in probabilistic networks.}
        {Left panels:} A neural network trained without explicit regularization can converge to different global minima of the loss. Optimization of the weights will implicitly regularize towards one or the other. 
        {Right panels:} Analogously, there are multiple distributions over neural networks that are global minima of the \emph{expected} loss. Optimization of the \emph{distribution} over the weights will implicitly regularize towards one or the other. 
        Our approach uses this implicit regularization instead of an explicit regularization to a prior. 
        }
    \label{fig:loss-predictive}
\end{figure}

\implicitBiasVariationalInferenceRegression*

\begin{proof}

    Let \(\variationalparams_\star = (\meanparam_\star, \covfactorparam_\star)\) be a minimizer of \(\expectedloss(\variationalparams)\). 
    By assumption it holds that the expected negative log-likelihood is equal to the following non-negative loss function up to an additive constant:
    \begin{align*}
        \expectedloss(\variationalparams) &= \expval[\variationalpdf_{\variationalparams}(\params)]{\loss(\traintargets, f_\params(\traininputs))} =  \expval[\variationalpdf_{\variationalparams}(\params)]{- \log p(\traintargets \mid \params)}\\
        &\consteq \frac{1}{2\noisescale^2}\expval[\variationalpdf_{\variationalparams}(\params)]{\norm{\traintargets - \traininputs \params}_2^2}\\
        &= \frac{1}{2\noisescale^2} \big(\norm{\traintargets - \traininputs \meanparam}_2^2 + \trace{\traininputs \covparam \traininputs^\tr}\big) \geq 0,
    \end{align*}
    where \(\covparam = \covfactorparam \covfactorparam^\tr\) and non-negativity follows from \(\covparam\) being symmetric positive semi-definite. Therefore any (global) minimizer \(\variationalparams_\star = (\meanparam_\star, \covparam_\star)\) necessarily satisfies 
    \begin{align}
        \norm{\traintargets - \traininputs \meanparam_\star}_2^2 &= 0,\\
        \trace{\traininputs \covparam_\star \traininputs^\tr} & = 0. \label{eqn:proof-regression-trace-term-zero}
    \end{align}

    Let \(\mV = \begin{bmatrix} \mV_{\textnormal{range}} & \mV_{\textnormal{null}} \end{bmatrix} \in \R^{\paramspacedim \times \paramspacedim}\) be the orthonormal matrix of right singular vectors of \(\traininputs = \mU \mLambda \mV^\tr\), where \(\mV_{\textnormal{range}} \in \R^{\paramspacedim \times \numtraindata}\) and \(\mV_{\textnormal{null}} \in \R^{\paramspacedim \times (\paramspacedim - \numtraindata)}\).
    Since \(\traininputs \in \R^{\numtraindata \times \paramspacedim}\) and we are in the overparametrized regime, \ie \(\paramspacedim > \numtraindata\), the optimal mean parameter decomposes into the least-squares solution and a null space contribution
    \begin{equation}
        \label{eqn:optimal-mean-parameter-regression}
        \meanparam_\star = \mV_{\textnormal{range}} \vu_\star + \mV_{\textnormal{null}} \vz = \traininputs^\pinv \traintargets + \mV_{\textnormal{null}} \vz.
    \end{equation}

    Furthermore, it holds for positive semi-definite \(\covparam \in \R^{\paramspacedim \times \paramspacedim}\) that
    \begin{align*}
        0 \leq \trace{\traininputs \covparam \traininputs^\tr} &= \trace{\mU \mLambda \mV^\tr \covparam \mV \mLambda \mU^\tr} = \trace{\mLambda \mV^\tr \covparam \mV \mLambda}\\
        &= \trace{\begin{bmatrix}\mLambda_{\numtraindata \times \numtraindata} & \mZero \end{bmatrix} \begin{bmatrix}\mV_{\textnormal{range}}^\tr \covparam \mV_{\textnormal{range}} & * \\ * & * \end{bmatrix} \begin{bmatrix}\mLambda_{\numtraindata \times \numtraindata} \\ \mZero \end{bmatrix}}\\
        &= \trace{\mLambda_{\numtraindata \times \numtraindata} \mV_{\textnormal{range}}^\tr \covparam \mV_{\textnormal{range}} \mLambda_{\numtraindata \times \numtraindata}}\\
        &= \sum_{i=1}^{\numtraindata} \lambda_i^2 [\mV_{\textnormal{range}}^\tr \covparam \mV_{\textnormal{range}}]_{ii}
    \end{align*}
    where \(\lambda_i^2 > 0\) are the squared singular values of \(\traininputs\), which are strictly positive since \(\rank{\traininputs} = \numtraindata\).
    Therefore using \Cref{eqn:proof-regression-trace-term-zero} any global minimizer necessarily satisfies \([\mV_{\textnormal{range}}^\tr \covparam_\star \mV_{\textnormal{range}}]_{ii} = 0\) for \(i \in \{1, \dots, \numtraindata\}\). Now since \(\mV_{\textnormal{range}}^\tr \covparam_\star \mV_{\textnormal{range}}\) is symmetric positive semi-definite and its diagonal is zero, so is its trace and therefore the sum of its non-negative eigenvalues is necessarily zero. Thus all eigenvalues are zero and therefore 
    \begin{equation}
        \mV_{\textnormal{range}}^\tr \covparam \mV_{\textnormal{range}} = \mZero.
    \end{equation}
    Now by \cref{lem:wasserstein-subspace-decomposition} we have that the squared 2-Wasserstein distance between \(\variationalpdf_{\variationalparams_\star}(\params) = \gaussianpdf{\params}{\meanparam_\star}{\covparam_\star}\) and the initialization \(p(\params) = \gaussianpdf{\params}{\meanparam_0}{\covparam_0}\) is given up to a constant independent of \((\meanparam_\star, \covparam_\star)\) by
    \begin{align*}
        \dw[2]{\variationalpdf_{\variationalparams_\star}}{p} &\consteq \norm*{\mV_{\textrm{range}}^\tr \meanparam_\star - \mV_{\textrm{range}}^\tr\meanparam_0}_2^2 + \dw[2][2]{\gaussian{\mV_{\textrm{null}}^\tr\meanparam_\star}{\mV_{\textrm{null}}^\tr \covparam_\star \mV_{\textrm{null}}}}{\gaussian{\mV_{\textrm{null}}^\tr\meanparam_0}{\mV_{\textrm{null}}^\tr \covparam_0 \mV_{\textrm{null}}}}\\
        &=\norm*{\traininputs^\pinv \traintargets- \mV_{\textrm{range}}^\tr\meanparam_0}_2^2 + \dw[2][2]{\gaussian{\mV_{\textrm{null}}^\tr\meanparam_\star}{\mV_{\textrm{null}}^\tr \covparam_\star \mV_{\textrm{null}}}}{\gaussian{\mV_{\textrm{null}}^\tr\meanparam_0}{\mV_{\textrm{null}}^\tr \covparam_0 \mV_{\textrm{null}}}}\\
        &\consteq \dw[2][2]{\gaussian{\mV_{\textrm{null}}^\tr\meanparam_\star}{\mV_{\textrm{null}}^\tr \covparam_\star \mV_{\textrm{null}}}}{\gaussian{\mV_{\textrm{null}}^\tr\meanparam_0}{\mV_{\textrm{null}}^\tr \covparam_0 \mV_{\textrm{null}}}}.
    \end{align*}

    Therefore among variational distributions \(\variationalpdf_{\variationalparams_\star}\) with parameters \(\variationalparams_\star\) that minimize the expected loss \(\expectedloss(\variationalparams)\), any such \(\variationalparams_\star\) that minimizes the squared 2-Wasserstein distance to the prior satisfies
    \begin{equation}
        \label{eqn:null-space-conditions-variational-params-proof-implicit-bias-vi-overparametrized-regression}
        (\underbracket[0.1ex]{\mV_{\textrm{null}}^\tr\meanparam_\star}_{\eqqcolon \vz}, \underbracket[0.1ex]{\mV_{\textrm{null}}^\tr \covparam_\star \mV_{\textrm{null}}}_{\eqqcolon \mM} ) = (\mV_{\textrm{null}}^\tr \meanparam_0, \mV_{\textrm{null}}^\tr \covparam_0 \mV_{\textrm{null}}). 
    \end{equation}
    
    \paragraph{(Stochastic) Gradient Descent}
    It remains to show that (stochastic) gradient descent identifies a minimum of the expected loss \(\expectedloss(\variationalparams)\), such that the above holds.
    By assumption we have for the loss on a batch \(\traininputs_b\) of data that
    \begin{align*}
        \expectedloss(\variationalparams) &= \expval[\variationalpdf_{\variationalparams}(\params)]{\loss(\traintargets_b, f_\params(\traininputs_b))} =  \expval[\variationalpdf_{\variationalparams}(\params)]{- \log p(\traintargets_b \mid \params)}\\
        &\consteq \frac{1}{2\noisescale^2} \big(\norm{\traintargets_b - \traininputs_b \meanparam}_2^2 + \trace{\traininputs_b \covparam \traininputs_b^\tr}\big).
    \end{align*}
    Therefore, at convergence of (stochastic) gradient descent the variational parameters \(\variationalparams_\infty = (\meanparam_\infty, \covfactorparam_\infty)\) are given by
    \begin{align*}
        \meanparam_\infty &= \meanparam_0 - \sum_{t=1}^\infty \lr_t \grad[\meanparam]{\expectedloss_b}{\variationalparams_{t-1}}=\meanparam_0 + \sum_{t=1}^\infty \frac{\lr_t}{\noisescale^2} \traininputs_b^\tr(\traintargets_b - \traininputs_b\meanparam_{t-1})
        \intertext{as well as}
        \covfactorparam_\infty &= \covfactorparam_0 - \sum_{t=1}^\infty \lr_t \grad[\covfactorparam]{\expectedloss_b}{\variationalparams_{t-1}}= \covfactorparam_0 - \sum_{t=1}^\infty \frac{\lr_t}{\noisescale^2} \traininputs_b^\tr\traininputs_b \covfactorparam_{t-1}
    \end{align*}
    and therefore 
    \begin{align*}
        \vz_\infty &= \mV_{\textnormal{null}}^\tr \meanparam_\infty = \mV_{\textnormal{null}}^\tr \meanparam_0 + \sum_{t=1}^\infty \frac{\lr_t}{\noisescale^2} \mV_{\textnormal{null}}^\tr \underbracket[0.1ex]{\traininputs_b^\tr(\traintargets_b - \traininputs_b\meanparam_{t-1})}_{\in \rangespace{\traininputs_b^\tr}} = \mV_{\textnormal{null}}^\tr \meanparam_0\\
        \mV_{\textnormal{null}}^\tr \covfactorparam_\infty &=\mV_{\textnormal{null}}^\tr\covfactorparam_0 - \sum_{t=1}^\infty \frac{\lr_t}{\noisescale^2} \mV_{\textnormal{null}}^\tr \underbracket[0.1ex]{\traininputs_b^\tr\traininputs_b \covfactorparam_{t-1}}_{\text{columns } \in \rangespace{\traininputs_b^\tr}} = \mV_{\textnormal{null}}^\tr\covfactorparam_0
        \intertext{where we used continuity of linear maps between finite-dimensional spaces. It follows that}
        \mM_{\infty} &= \mV_{\textnormal{null}}^\tr \covparam_\infty \mV_{\textnormal{null}} = \mV_{\textnormal{null}}^\tr \covfactorparam_\infty \covfactorparam_\infty^\tr \mV_{\textnormal{null}} = \mV_{\textnormal{null}}^\tr \covfactorparam_0 \covfactorparam_0^\tr \mV_{\textnormal{null}} = \mV_{\textnormal{null}}^\tr \covparam_0 \mV_{\textnormal{null}}.
    \end{align*}

    Therefore any limit point of (stochastic) gradient descent that minimizes the expected log-likelihood also minimizes the 2-Wasserstein distance to the prior, since \(\variationalparams_\infty\) satisfies \Cref{eqn:null-space-conditions-variational-params-proof-implicit-bias-vi-overparametrized-regression}.

    \paragraph{Momentum} In case we are using (stochastic) gradient descent with momentum, the updates are given by
    \begin{equation}
        \label{eqn:momentum-overparametrized-linear-regression-proof}
        \begin{aligned}
            \meanparam_{t+1} &= \meanparam_{t} + \momentum_t \Delta \meanparam_{t} - \lr_t \grad[\meanparam]{\expectedloss_b}{\variationalparams_{t} + \alpha_t \Delta \variationalparams_{t}}\\
            \covfactorparam_{t+1}&= \covfactorparam_{t} + \momentum_t \Delta \covfactorparam_{t} - \lr_t \grad[\covfactorparam]{\expectedloss_b}{\variationalparams_{t} + \alpha_t \Delta \variationalparams_{t}}\\
        \end{aligned}
    \end{equation}
    where
    \begin{equation*}
        \Delta \variationalparams_{t} = \begin{pmatrix} \Delta \meanparam_{t}\\ \Delta \covfactorparam_{t} \end{pmatrix} = \variationalparams_{t} - \variationalparams_{t-1}, \qquad \Delta \variationalparams_0 = \vzero.
    \end{equation*}
    for parameters \(\momentum_t, \alpha_t \geq 0\), which includes Nesterov's acceleration (\(\momentum_t = \alpha_t\)) \cite{Nesterov1983MethodSolving} and heavy ball momentum (\(\alpha_t = 0\)) \cite{Polyak1964MethodsSpeeding}.

    To prove that the updates of the variational parameters are always orthogonal to the null space of \(\traininputs_b\), we proceed by induction. The base case is trivial since \(\Delta \variationalparams_0 = \vzero\). Assume now that \(\mV_{\textnormal{null}}^\tr \Delta\meanparam_{t} = \vzero\) and \(\mV_{\textnormal{null}}^\tr \Delta \covfactorparam_{t} = \mZero\), then by \Cref{eqn:momentum-overparametrized-linear-regression-proof}, we have
    \begin{align*}
        \mV_{\textnormal{null}}^\tr \Delta \meanparam_{t+1} &= \mV_{\textnormal{null}}^\tr(\meanparam_{t+1} - \meanparam_{t}) = \momentum_t \mV_{\textnormal{null}}^\tr\Delta \meanparam_{t} - \lr_t \mV_{\textnormal{null}}^\tr\grad[\meanparam]{\expectedloss_b}{\variationalparams_{t} + \alpha_t \Delta \variationalparams_{t}} = \vzero\\
        \mV_{\textnormal{null}}^\tr \Delta \covfactorparam_{t+1} &=\mV_{\textnormal{null}}^\tr(\covfactorparam_{t+1} - \covfactorparam_{t}) = \momentum_t \mV_{\textnormal{null}}^\tr \Delta \covfactorparam_{t} - \lr_t \mV_{\textnormal{null}}^\tr \grad[\covfactorparam]{\expectedloss_b}{\variationalparams_{t} + \alpha_t \Delta \variationalparams_{t}} = \mZero
    \end{align*}
    where we used the induction hypothesis and the fact that the gradients are orthogonal to the null space as shown earlier.
    
    Therefore by the same argument as above we have that \(\variationalparams_\infty\) computed via (stochastic) gradient descent with momentum satisfies \Cref{eqn:null-space-conditions-variational-params-proof-implicit-bias-vi-overparametrized-regression}, which directly implies \Cref{thm:implicit-bias-vi-overparametrized-regression}.
\end{proof}

\subsubsection{Non-Asymptotic Error Analysis}
\label{sec-supp:error-analysis}

\begin{restatable}[Non-Asymptotic Error of Gradient Flow]{theorem}{nonasymptoticErrorGF}
    \label{thm:nonasymptotic-error-gradient-flow}
    Let \(f_\params(\vx) = \vx^\top \params\) be a linear model. Define a prior \(p(\params) = \gaussianpdf{\params}{\meanparam_0}{\covfactorparam_0\covfactorparam_0^\tr}\) and assume noise-free observations \(y(\cdot) = f_\params(\cdot)\) for \(\params \sim p(\params)\). Further, define a variational distribution \smash{\(\variationalpdf_{\variationalparams}(\params) = \gaussianpdf{\params}{\meanparam}{\covfactorparam\covfactorparam^\tr}\)} with \(\variationalparams = (\meanparam,\covfactorparam)\) such that \(\meanparam \in \R^{\paramspacedim}\) and \smash{\(\covfactorparam \in \R^{\paramspacedim \times \covrank}\)} where \(\covrank \leq \paramspacedim\). Let \(\variationalparams(\idxoptimstep) = (\meanparam(\idxoptimstep), \covfactorparam(\idxoptimstep))\) be the variational parameters at time \(\idxoptimstep \geq 0\) given by the gradient flow of the expected loss 
    \begin{equation}
        \label{eqn:gradient-flow-in-non-asymptotic-error-result}
        \dot{\variationalparams}(\idxoptimstep) = - \grad[\variationalparams]{\expectedloss}{\variationalparams(\idxoptimstep)}
    \end{equation}
    initialized at \(\variationalparams(0) = (\meanparam_0, \covfactorparam_0)\). Then the expected squared error of the mean prediction
    \begin{equation}
        \label{eqn:non-asymptotic-error-gradient-flow}
        \expval[\big(\substack{\traintargets \\ y_\symboltestdata}\big)]{\big(y_\symboltestdata - f_{\meanparam(\textcolor{MPLred}{\idxoptimstep})}(\testpoint)\big)^2} =\var[\params \sim \variationalpdf_{\variationalparams(\textcolor{MPLred}{\idxoptimstep})}]{f_\params(\testpoint)}
    \end{equation}
    at any test point \(\testpoint \in \R^\paramspacedim\).
    In other words, assuming the training and test data are drawn from the prior predictive, the predictive error of \(f_{\meanparam(\idxoptimstep)}(\cdot)\) at any time \(\idxoptimstep \geq 0\) is \emph{exactly} quantified by the predictive uncertainty of the variational distribution, not only at initialization and in the limit \(\idxoptimstep \to \infty\).
\end{restatable}

\begin{proof}
    The dynamics of the variational parameters as defined by the gradient flow in \Cref{eqn:gradient-flow-in-non-asymptotic-error-result} are given by
    \begin{align*}
        \dot{\meanparam}(\idxoptimstep) &= - \grad[\meanparam]{\expectedloss}{\meanparam(\idxoptimstep)} = \traininputs^\tr (\traintargets - \traininputs \meanparam(\idxoptimstep)) = -\traininputs^\tr \traininputs (\meanparam(\idxoptimstep) - \params) = \frac{d}{d\idxoptimstep}(\meanparam(\idxoptimstep) - \params),\\
        \dot{\covfactorparam}(\idxoptimstep) &= - \grad[\covfactorparam]{\expectedloss}{\covfactorparam(\idxoptimstep)} = -\traininputs^\tr \traininputs \covfactorparam(\idxoptimstep).\\
    \end{align*}
    Since these dynamics are matrix differential equations, the mean and covariance parameters as a function of time are given by
    \begin{align}
        \meanparam(\idxoptimstep) &= \params + e^{-\traininputs^\tr \traininputs \idxoptimstep} (\meanparam_0 - \params),\label{eqn:proof-non-asymptotic-error-gradient-flow-mean-params}\\
        \covfactorparam(\idxoptimstep) &= e^{-\traininputs^\tr \traininputs \idxoptimstep} \covfactorparam_0.\label{eqn:proof-non-asymptotic-error-gradient-flow-covfactor-params}
    \end{align}
    Thus the expected predictive error at time step \(\idxoptimstep \geq 0\) is given by 
    \begin{align*}
        \expval[\big(\substack{\traintargets \\ y_\symboltestdata}\big)]{\norm*{y_\symboltestdata - f_{\meanparam({\idxoptimstep})}(\testpoint)}_2^2} &= \expval[\big(\substack{\traininputs\\ \testpoint}\big)\params]{\norm*{y_\symboltestdata - \testpoint^\tr \meanparam(\idxoptimstep)}_2^2}\\
        &= \expval[\params]{\norm*{\testpoint^\tr \params - \testpoint^\tr \left(\params + e^{-\traininputs^\tr \traininputs \idxoptimstep} (\meanparam_0 - \params)\right)}_2^2}\\
        &= \expval[\params]{\norm*{\testpoint^\tr e^{-\traininputs^\tr \traininputs \idxoptimstep} (\meanparam_0 - \params)}_2^2}\\
        \intertext{where we used \Cref{eqn:proof-non-asymptotic-error-gradient-flow-mean-params}. We have since \(\expval{\params} = \meanparam_0\), that the above}
        &= \trace*{\cov{\params - \meanparam_0} e^{-\traininputs^\tr \traininputs \idxoptimstep} \testpoint \testpoint^\tr e^{-\traininputs^\tr \traininputs \idxoptimstep}}\\
        &= \trace*{\testpoint^\tr e^{-\traininputs^\tr \traininputs \idxoptimstep} \covfactorparam_0 \covfactorparam_0^\tr e^{-\traininputs^\tr \traininputs \idxoptimstep} \testpoint }\\
        &= \trace*{\testpoint^\tr \covfactorparam_\idxoptimstep \covfactorparam_\idxoptimstep^\tr \testpoint}\\
        &= \var[\params \sim \variationalpdf_{\variationalparams(\idxoptimstep)}]{f_\params(\testpoint)}
    \end{align*}
    where we used \Cref{eqn:proof-non-asymptotic-error-gradient-flow-covfactor-params} in the second-to-last equality. This completes the proof.
\end{proof}

\begin{restatable}[Non-Asymptotic Error of SGD]{theorem}{nonasymptoticErrorSGD}
    \label{thm:nonasymptotic-error-sgd}
    Let \(f_\params(\vx) = \vx^\top \params\) be a linear model. Define a prior \(p(\params) = \gaussianpdf{\params}{\meanparam_0}{\covfactorparam_0\covfactorparam_0^\tr}\) and assume noise-free observations \(y(\cdot) = f_\params(\cdot)\) for \(\params \sim p(\params)\). Further, define a variational distribution \smash{\(\variationalpdf_{\variationalparams}(\params) = \gaussianpdf{\params}{\meanparam}{\covfactorparam\covfactorparam^\tr}\)} with \(\variationalparams = (\meanparam,\covfactorparam)\) such that \(\meanparam \in \R^{\paramspacedim}\) and \smash{\(\covfactorparam \in \R^{\paramspacedim \times \covrank}\)} where \(\covrank \leq \paramspacedim\). Assume the expected loss is given by \(\expectedloss(\variationalparams) = \expval[\variationalpdf_\variationalparams(\params)]{\frac{1}{2} \norm*{\traintargets - \traininputs \params}_2^2}\) and let \(\variationalparams(\idxoptimstep) = (\meanparam(\idxoptimstep), \covfactorparam(\idxoptimstep))\) be the variational parameters at step \(\idxoptimstep\) of (stochastic) gradient descent with learning rate sequence \((\lr_\idxoptimstep)_\idxoptimstep\), initialized at \(\variationalparams(0) = (\meanparam_0, \covfactorparam_0)\). Then the expected squared error of the mean prediction
    \begin{equation}
        \label{eqn:non-asymptotic-error-sgd}
        \expval[\big(\substack{\traintargets \\ y_\symboltestdata}\big)]{\big(y_\symboltestdata - f_{\meanparam(\textcolor{MPLred}{\idxoptimstep})}(\testpoint)\big)^2} = \var[\params \sim \variationalpdf_{\variationalparams(\textcolor{MPLred}{\idxoptimstep})}]{f_\params(\testpoint)}
    \end{equation}
    at any test point \(\testpoint \in \R^\paramspacedim\).
    In other words, assuming the training and test data are drawn from the prior predictive, the predictive error of \(f_{\meanparam(\idxoptimstep)}(\cdot)\) at any optimization step \(\idxoptimstep\) is \emph{exactly} quantified by the predictive uncertainty of the variational distribution.
    
    Further, if the learning rate \(\lr_\idxoptimstep \leq \frac{1}{\lambda_{\max}(\traininputs_\idxoptimstep^\tr \traininputs_\idxoptimstep)}\) for all steps \(\idxoptimstep\), then
    \begin{equation}
        \label{eqn:uncertainty-decreases-monotonically}
        \trace{\cov[\params \sim \variationalpdf_{\variationalparams(\idxoptimstep + 1)}]{\params}} \leq \trace{\cov[\params \sim \variationalpdf_{\variationalparams(\idxoptimstep)}]{\params}},
    \end{equation}
    \ie uncertainty about the parameters decreases monotonically during optimization.
\end{restatable}

\begin{proof}
    The expected loss is given up to an additive constant by
    \begin{align*}
        \expectedloss(\variationalparams) = \expval[\variationalpdf_\variationalparams(\params)]{\loss(\traintargets, f_{\params}(\traininputs))}\consteq \frac{1}{2} (\norm*{\traintargets - \traininputs \meanparam}_2^2 + \trace{\traininputs \covfactorparam \covfactorparam^\tr \traininputs^\tr}).
    \end{align*}
    Now let \((\traininputs_\idxoptimstep, \traintargets_\idxoptimstep)\) be the minibatch at step \(\idxoptimstep \geq 1\). Then it holds that
    \begin{equation}
        f_{\meanparam(\idxoptimstep)}(\testpoint) - y_\symboltestdata = \testpoint^\tr(\meanparam(\idxoptimstep) - \params).
    \end{equation}
    Further, the mean parameters identified by SGD are given by
    \begin{align*}
        \meanparam(\idxoptimstep) - \params &= \meanparam(\idxoptimstep - 1) - \params - \lr_\idxoptimstep \grad[\meanparam]{\expectedloss}{\variationalparams(\idxoptimstep - 1)}\\
        &= \meanparam(\idxoptimstep - 1) - \params - \lr_\idxoptimstep \traininputs_\idxoptimstep^\tr(\traininputs_\idxoptimstep \meanparam(\idxoptimstep-1) - \traintargets_\idxoptimstep)\\
        &= \meanparam(\idxoptimstep - 1) - \params - \lr_\idxoptimstep \traininputs_\idxoptimstep^\tr \traininputs_\idxoptimstep ( \meanparam(\idxoptimstep-1) - \params)\\
        &= (\mI - \lr_\idxoptimstep \traininputs_\idxoptimstep^\tr \traininputs_\idxoptimstep) ( \meanparam(\idxoptimstep-1) - \params)\\
        &= \prod_{j=1}^\idxoptimstep (\mI - \lr_j\traininputs_j^\tr \traininputs_j) (\meanparam(0) - \params)\\
        &= \mB_\idxoptimstep (\meanparam_0 - \params)
        \intertext{where we defined \(\mB_\idxoptimstep=\prod_{j=1}^\idxoptimstep (\mI - \lr_j\traininputs_j^\tr \traininputs_j)\). The covariance parameters are given by}
        \covfactorparam(\idxoptimstep) &= \covfactorparam(\idxoptimstep - 1) - \lr_\idxoptimstep\grad[\covfactorparam]{\expectedloss}{\variationalparams(\idxoptimstep - 1)}\\
        &= \covfactorparam(\idxoptimstep - 1) - \lr_\idxoptimstep\traininputs_\idxoptimstep^\tr \traininputs_\idxoptimstep \covfactorparam(\idxoptimstep - 1)\\
        &= (\mI - \lr_\idxoptimstep \traininputs_\idxoptimstep^\tr \traininputs_\idxoptimstep) \covfactorparam(\idxoptimstep - 1)\\
        &= \prod_{j=1}^\idxoptimstep (\mI - \lr_j\traininputs_j^\tr \traininputs_j)\covfactorparam(0)\\
        &= \mB_\idxoptimstep \covfactorparam_0
    \end{align*}
    Therefore the predictive error at step \(\idxoptimstep \in \{0, 1, \dots\}\) is given by
    \begin{align*}
        \expval[\big(\substack{\traintargets \\ y_\symboltestdata}\big)]{\norm*{y_\symboltestdata - f_{\meanparam({\idxoptimstep})}(\testpoint)}_2^2} &= \expval[\big(\substack{\traininputs\\ \testpoint}\big)\params]{\norm*{y_\symboltestdata - \testpoint^\tr \meanparam(\idxoptimstep)}_2^2}\\
        &= \expval[\params]{\norm*{\testpoint^\tr(\meanparam(\idxoptimstep) - \params)}_2^2}\\
        &= \expval[\params]{\norm*{\testpoint^\tr \mB_\idxoptimstep(\meanparam_0 - \params)}_2^2}.\\
        \intertext{We have since \(\expval{\meanparam_0 - \params} = \vzero\), that the above}
        &= \trace{\mB_\idxoptimstep^\tr \testpoint \testpoint^\tr \mB_\idxoptimstep \cov{\params - \meanparam_0}}\\
        &= \trace{\testpoint^\tr \mB_\idxoptimstep \covfactorparam_0 \covfactorparam_0^\tr \mB_\idxoptimstep^\tr \testpoint}\\
        &= \trace{\testpoint^\tr \covfactorparam(\idxoptimstep) \covfactorparam(\idxoptimstep)^\tr \testpoint}\\
        &= \var[\params \sim \variationalpdf_{\variationalparams(\idxoptimstep)}]{f_\params(\testpoint)}.
    \end{align*}
    This proves \Cref{eqn:non-asymptotic-error-sgd}.

    To prove the second statement, we begin by showing that \(\mI - \lr_{\idxoptimstep} \traininputs_\idxoptimstep^\tr \traininputs_\idxoptimstep\) has a spectrum in the interval \([0, 1]\). We have by Weyl's theorem, since \(\mI\) and \(\mC_{\idxoptimstep+1} \coloneqq -\lr_{\idxoptimstep+1} \traininputs_{\idxoptimstep+1}^\tr \traininputs_{\idxoptimstep+1}\) are hermitian, that 
    \begin{align*}
        \lambda_p(\mI) + \lambda_{\min}(\mC_{\idxoptimstep+1}) &\leq \lambda_p(\mI + \mC_{\idxoptimstep+1}) \leq \lambda_p(\mI) + \lambda_{\max}(\mC_{\idxoptimstep+1}) \\
        \iff 1 - \lr_{\idxoptimstep+1} \lambda_{\max}(\traininputs_{\idxoptimstep+1}^\tr \traininputs_{\idxoptimstep+1}) &\leq \lambda_p(\mI + \mC_{\idxoptimstep+1}) \leq 1 - \lr_{\idxoptimstep+1}\lambda_{\min}(\traininputs_{\idxoptimstep+1}^\tr \traininputs_{\idxoptimstep+1})\\
        \implies 1 - \frac{\lambda_{\max}(\traininputs_{\idxoptimstep+1}^\tr \traininputs_{\idxoptimstep+1})}{\lambda_{\max}(\traininputs_{\idxoptimstep+1}^\tr \traininputs_{\idxoptimstep+1})} &\leq \lambda_p(\mI + \mC_{\idxoptimstep+1}) \leq 1\\
        \iff 0 &\leq \lambda_p(\mI + \mC_{\idxoptimstep+1}) \leq 1
    \end{align*}
    where we used the assumption on the learning rate that \(\forall \idxoptimstep: \lr_\idxoptimstep \leq \frac{1}{\lambda_{\max}(\traininputs_\idxoptimstep^\tr \traininputs_\idxoptimstep)}\).
    Now by von Neumann's trace inequality, it holds that
    \begin{align*}
        \trace{\cov[\params \sim \variationalpdf_{\variationalparams(\idxoptimstep + 1)}]{\params}} &= \trace{(\mI - \lr_{\idxoptimstep+1} \traininputs_{\idxoptimstep+1}^\tr \traininputs_{\idxoptimstep+1}) \covfactorparam_\idxoptimstep \covfactorparam_\idxoptimstep^\tr (\mI - \lr_{\idxoptimstep+1} \traininputs_{\idxoptimstep+1}^\tr \traininputs_{\idxoptimstep+1})^\tr}\\
        &= \trace{\covfactorparam_\idxoptimstep \covfactorparam_\idxoptimstep^\tr (\mI - \lr_{\idxoptimstep+1} \traininputs_{\idxoptimstep+1}^\tr \traininputs_{\idxoptimstep+1})(\mI - \lr_{\idxoptimstep+1} \traininputs_{\idxoptimstep+1}^\tr \traininputs_{\idxoptimstep+1})}\\
        &\leq \sum_{p=1}^{\paramspacedim} \lambda_p(\covfactorparam_\idxoptimstep \covfactorparam_\idxoptimstep^\tr) \lambda_p((\mI - \lr_{\idxoptimstep+1} \traininputs_{\idxoptimstep+1}^\tr \traininputs_{\idxoptimstep+1})^2)\\
        &= \sum_{p=1}^{\paramspacedim} \lambda_p(\covfactorparam_\idxoptimstep \covfactorparam_\idxoptimstep^\tr) \lambda_p((\mI - \lr_{\idxoptimstep+1} \traininputs_{\idxoptimstep+1}^\tr \traininputs_{\idxoptimstep+1}))^2\\
        &\leq \sum_{p=1}^{\paramspacedim} \lambda_p(\covfactorparam_\idxoptimstep \covfactorparam_\idxoptimstep^\tr)\\
        &= \trace{\cov[\params \sim \variationalpdf_{\variationalparams(\idxoptimstep)}]{\params}}.
    \end{align*}
\end{proof}

\subsubsection{Connection to Ensembles}
\label{sec-supp:connection-to-ensembles-regression}

\begin{restatable}[Connection to Ensembles]{proposition}{connectionEnsembleLearning}
    \label{thm:regression-connection-ensembles-implicit-bias-vi}
    Consider an ensemble of overparametrized linear models \(f_\params(\vx) = \vx^\top \params\) initialized with weights drawn from the prior \(\smash{\params_0^{(i)}} \sim \gaussianpdf{\params}{\meanparam_0}{\covfactorparam_0\covfactorparam_0^\tr}\). Assume each model is trained independently to convergence via (S)GD such that \(\smash{\params_\star^{(i)}} = \argmin_\params \loss(\traintargets, f_\params(\traininputs))\). Then the distribution over the weights of the trained ensemble \(q_{\textnormal{Ens}}(\params)\) is equal to the variational approximation \(\variationalpdf_{\variationalparams_\star}(\params)\) learned via (S)GD initialized at the prior hyperparameters \(\variationalparams_0 = (\meanparam_0, \covfactorparam_0)\), \ie
    \begin{equation}
        q_{\textnormal{Ens}}(\params) = \variationalpdf_{\variationalparams_\star^{\textnormal{GD}}}(\params).
    \end{equation}  
\end{restatable}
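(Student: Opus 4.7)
The plan is to exhibit both $q_{\textnormal{Ens}}$ and $\variationalpdf_{\variationalparams_\star^{\textnormal{GD}}}$ as Gaussians with explicit first and second moments, and then match them. The ensemble distribution is the pushforward of the Gaussian prior through the deterministic (S)GD-to-convergence map on the per-member least-squares loss, whereas the variational distribution has already been characterized in the proof of \Cref{thm:implicit-bias-vi-overparametrized-regression}. Throughout I would reuse the SVD $\traininputs = \mU \mLambda \mV^\tr$ with $\mV = \begin{bmatrix} \mV_{\textnormal{range}} & \mV_{\textnormal{null}} \end{bmatrix}$ from that proof.

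For the ensemble side, I would apply the same range/null argument used in the proof of \Cref{thm:implicit-bias-vi-overparametrized-regression}, but now to the non-probabilistic loss $\tfrac{1}{2\noisescale^2}\norm{\traintargets - \traininputs \params}_2^2$. Every (stochastic) gradient $\traininputs_b^\tr(\traintargets_b - \traininputs_b \params)$ lies in $\rangespace{\traininputs^\tr}$, so inductively $\mV_{\textnormal{null}}^\tr(\params_\star^{(i)} - \params_0^{(i)}) = \vzero$. Combined with interpolation $\traininputs \params_\star^{(i)} = \traintargets$ at convergence, this pins each trained member to the affine form
\begin{equation*}
    \params_\star^{(i)} = \traininputs^\pinv \traintargets + \mP_{\textup{null}(\traininputs)} \params_0^{(i)}.
\end{equation*}
Since affine maps send Gaussians to Gaussians, this immediately yields $q_{\textnormal{Ens}} = \gaussian{\traininputs^\pinv \traintargets + \mP_{\textup{null}(\traininputs)} \meanparam_0}{\mP_{\textup{null}(\traininputs)} \covfactorparam_0 \covfactorparam_0^\tr \mP_{\textup{null}(\traininputs)}}$.

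For the variational side, I would extract from the proof of \Cref{thm:implicit-bias-vi-overparametrized-regression} that at the limit point: (i) the preservation $\mV_{\textnormal{null}}^\tr \meanparam_\infty = \mV_{\textnormal{null}}^\tr \meanparam_0$ combined with interpolation $\traininputs \meanparam_\infty = \traintargets$ gives $\meanparam_\infty = \traininputs^\pinv \traintargets + \mP_{\textup{null}(\traininputs)} \meanparam_0$; and (ii) the preservation $\mV_{\textnormal{null}}^\tr \covfactorparam_\infty = \mV_{\textnormal{null}}^\tr \covfactorparam_0$ together with the global-minimality consequence $\mV_{\textnormal{range}}^\tr \covparam_\infty \mV_{\textnormal{range}} = \mZero$ force $\mV_{\textnormal{range}}^\tr \covfactorparam_\infty = \mZero$, so that $\covfactorparam_\infty = \mP_{\textup{null}(\traininputs)} \covfactorparam_0$ and hence $\covparam_\infty = \mP_{\textup{null}(\traininputs)} \covfactorparam_0 \covfactorparam_0^\tr \mP_{\textup{null}(\traininputs)}$. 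The two Gaussians now share mean and covariance, yielding the claim.

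The main obstacle is bookkeeping rather than a new mathematical idea. One must track carefully which subspace is pinned by the expected loss (range of $\traininputs^\tr$: interpolation of the mean and vanishing predictive variance on the training inputs) and which subspace is pinned by the implicit bias of (S)GD (null space of $\traininputs$: preservation of the initialization). Because the gradient structure driving the per-ensemble-member dynamics and the variational dynamics is identical on the mean and on $\covfactorparam$, both mechanisms implement the same affine map from initialization to limit, which is precisely why the ensemble distribution and the variational posterior coincide exactly in this overparametrized linear setting. The only mildly subtle step is deducing $\mV_{\textnormal{range}}^\tr \covfactorparam_\infty = \mZero$ from $\mV_{\textnormal{range}}^\tr \covparam_\infty \mV_{\textnormal{range}} = \mZero$, which uses positive semi-definiteness of $\covfactorparam_\infty \covfactorparam_\infty^\tr$ in the same way as in the proof of \Cref{thm:implicit-bias-vi-overparametrized-regression}.
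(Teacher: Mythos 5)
Your proof is correct, and it arrives at the same formulas for both limits by the same underlying mechanism (gradients staying in $\rangespace{\traininputs^\tr}$, hence the null-space projection of the initialization is preserved). The route differs in one organizational respect: the paper's proof establishes that $q_{\textnormal{Ens}}$ is a 2-Wasserstein minimizer within the expected-loss-minimizing set, then invokes \Cref{thm:implicit-bias-vi-overparametrized-regression} to conclude equality with the GD limit. This implicitly relies on the 2-Wasserstein minimizer in that constrained set being unique (which the paper does not state but is true from the derivation in \Cref{lem:wasserstein-subspace-decomposition}). You instead extract explicit closed forms for both limits, namely
\begin{equation*}
    \meanparam_\infty = \traininputs^\pinv\traintargets + \mP_{\textup{null}(\traininputs)}\meanparam_0, \qquad \covfactorparam_\infty = \mP_{\textup{null}(\traininputs)}\covfactorparam_0,
\end{equation*}
on the variational side, and an identically shaped pushforward on the ensemble side, and then match moments directly. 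This sidesteps the uniqueness issue entirely, which is a small but genuine simplification. Your deduction of $\mV_{\textnormal{range}}^\tr \covfactorparam_\infty = \mZero$ from $\mV_{\textnormal{range}}^\tr \covfactorparam_\infty \covfactorparam_\infty^\tr \mV_{\textnormal{range}} = \mZero$ via the identity $\mA\mA^\tr = \mZero \iff \mA = \mZero$ is valid and a little more elementary than the paper's Schur-complement-style appeal (Boyd A5.5) in the proof of \Cref{lem:wasserstein-subspace-decomposition}. One minor caveat: the statement above requires the GD limit point to be a global minimizer of $\expectedloss$, as assumed in \Cref{thm:implicit-bias-vi-overparametrized-regression}; you use this when invoking $\mV_{\textnormal{range}}^\tr \covparam_\infty \mV_{\textnormal{range}} = \mZero$, so that dependency should be flagged rather than left implicit.
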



\begin{proof}
    The parameters \(\params_\infty^{(i)}\) of the (independently) trained ensemble members identified via (stochastic) gradient descent are given by
    \begin{align*}
        \params_\infty^{(i)} &= \argmin_{\params \in \setsym{F}} \norm{\params - \params_0^{(i)}}_2\\
        \intertext{where \(\setsym{F} = \set{\params \in \R^{\paramspacedim} \mid f_\params(\traininputs) = \traininputs \params = \traintargets}\) is the set of interpolating solutions \cite[Sec.~2.1]{Gunasekar2018CharacterizingImplicit}. Since we can write \(\setsym{F}\) equivalently via the minimum norm solution and an arbitrary null space contribution, \st \(\setsym{F} = \set{\params = \traininputs^\pinv \traintargets + \params_{\textnormal{null}} \mid \params_{\textnormal{null}} \in \nullspace{\traininputs}}\) we have}
        &= \traininputs^\pinv \traintargets + \argmin_{\params_{\textnormal{null}} \in \nullspace{\traininputs}} \norm{\params_{\textnormal{null}} - (\params_0^{(i)} - \traininputs^\pinv \traintargets)}_2\\
        &= \traininputs^\pinv \traintargets + \proj[\nullspace{\traininputs}]{\params_0^{(i)} - \underbracket[0.1ex]{\traininputs^\pinv \traintargets}_{\in \rangespace{\traininputs^\tr}}}
        \intertext{where we used the characterization of an orthogonal projection onto a linear subspace as the (unique) closest point in the subspace. Finally, we use that the minimum norm solution is in the range space of the data and rewrite the projection in matrix form, \st}
        &= \traininputs^\pinv \traintargets + \mP_{\textnormal{null}}\params_0^{(i)}.
    \end{align*}
    Therefore the distribution over the parameters \(\params_\infty^{(i)}\) of the ensemble members computed via (S)GD with initial parameters \(\params_0 \sim \gaussianpdf{\params}{\meanparam_0}{\covfactorparam_0\covfactorparam_0^\tr}\) is given by
    \begin{equation*}
        q_{\textnormal{Ens}}(\params) = \gaussianpdf{\params}{\underbracket[0.1ex]{\traininputs^\pinv \traintargets + \mP_{\textnormal{null}} \meanparam_0}_{=\meanparam_{\textnormal{Ens}}}}{\underbracket[0.1ex]{\mP_{\textnormal{null}} \covfactorparam_0}_{=\covfactorparam_{\textnormal{Ens}}} \covfactorparam_0^\tr \mP_{\textnormal{null}}^\tr}.
    \end{equation*}

    Now the expected negative log-likelihood of the distribution over the parameters of the trained ensemble members \(q_{\textnormal{Ens}}(\params)\) with hyperparameters \(\variationalparams_{\textnormal{Ens}} = (\meanparam_{\textnormal{Ens}}, \covfactorparam_{\textnormal{Ens}})\) is
    \begin{align*}
        \expectedloss(\variationalparams_{\textnormal{Ens}}) &\consteq \frac{1}{2\noisescale^2} \big(\norm{\traintargets - \traininputs \meanparam_{\textnormal{Ens}}}_2^2 + \trace{\traininputs \covfactorparam_{\textnormal{Ens}} \covfactorparam_{\textnormal{Ens}}^\tr \traininputs^\tr}\big)= 0
    \end{align*}
    and therefore \(\variationalparams_{\textnormal{Ens}}\) is a minimizer of the expected log-likelihood.
    Further it holds that 
    \begin{align*}
        \vz &= \mV_{\textnormal{null}}^\tr (\mP_{\textnormal{null}} \meanparam_0) = \mV_{\textnormal{null}}^\tr \meanparam_0\\
        \mM &= \mV_{\textnormal{null}}^\tr (\mP_{\textnormal{null}} \covfactorparam_0) (\mP_{\textnormal{null}} \covfactorparam_0)^\tr \mV_{\textnormal{null}}= \mV_{\textnormal{null}}^\tr \covfactorparam_0\covfactorparam_0^\tr \mV_{\textnormal{null}} = \mV_{\textnormal{null}}^\tr \covparam_0 \mV_{\textnormal{null}}
    \end{align*}
    and thus by \Cref{eqn:null-space-conditions-variational-params-proof-implicit-bias-vi-overparametrized-regression}, the distribution of the trained ensemble parameters minimizes the 2-Wasserstein distance to the prior distribution, \ie
    \begin{equation*}
        q_{\textnormal{Ens}} = \argmin_{q(\params)=\gaussianpdf{\params}{\meanparam}{\covparam}} \dw[2][2]{q(\params)}{\gaussianpdf{\params}{\meanparam_0}{\covparam_0}}.
    \end{equation*}
    Combining this with the characterization of the variational posterior in \Cref{thm:implicit-bias-vi-overparametrized-regression} proves the claim.
\end{proof}

\subsection{Binary Classification of Linearly Separable Data}
\label{sec-supp:classification-on-separable-dataset}

In this subsection we provide proofs of claims from \Cref{sec:classification-separable-dataset}. We begin with presenting some preliminary results from \citet{Soudry2018ImplicitBias} which will be used throughout the proof. Next, we will analyze the gradient flow of the expected loss. We extend the results for the gradient flow to gradient descent and derive the characterization of the implicit bias, completing the proof of \Cref{thm:classification-implicit-bias}. 

\implicitBiasVariationalInferenceClassification*

\subsubsection{Preliminaries}
Recall that the expected loss is given by
\begin{equation}
    \textstyle
    \expectedloss(\variationalparams) = \sum_{\idxdata=1}^\numtraindata \expval[\variationalpdf_{\variationalparams}(\params)]{\loss(y_\idxdata \vx_\idxdata^\tr \params)},
\end{equation}
and specifically, for the exponential loss, we have
\begin{equation}
    \label{eqn:proof-classification-expected-loss}
    \textstyle
    \expectedloss(\variationalparams) = \expectedloss(\meanparam, \covfactorparam) = \sum_{\idxdata=1}^\numtraindata \exp \left ( - \vx_\idxdata^\tr \meanparam + \frac{1}{2} \vx_\idxdata^\tr \covfactorparam \covfactorparam^\tr \vx_\idxdata \right ).
\end{equation}
Throughout these proofs, for any mean parameter iterate \( \meanparam_t\), we define the residual as
\begin{equation} \label{eqn::definition-of-residual}
    \vr_t = \meanparam_t - \maxmarginvector \log t - \kktvector
\end{equation}
where \( \maxmarginvector \) is the solution to the hard margin SVM, and \( \kktvector \) is the vector which satisfies
\begin{equation} \label{eqn::definition-mu-tilde}
    \forall n \in \sv : \lr \exp \left ( - \vx_\idxdata^\tr \kktvector \right ) = \alpha_\idxdata,
\end{equation}
where weights \( \alpha_\idxdata\) are defined through the KKT conditions on the hard margin SVM problem, i.e.
\begin{equation} \label{eqn::property-of-mu-hat}
    \maxmarginvector = \sum_{n \in \sv} \alpha_\idxdata \vx_\idxdata.
\end{equation}
In Lemma 12 (Appendix B) of \citet{Soudry2018ImplicitBias}, it is shown that, for almost any dataset, there are no more than \( \paramspacedim \) support vectors and \( \alpha_\idxdata \neq 0, \forall n \in \sv\). 
Furthermore, we denote the minimum margin to a non-support vector as:
\begin{equation}
    \minmargin = \min_{n \notin \sv} \vx_\idxdata^\tr \maxmarginvector > 1.
\end{equation}
Finally, we define \( \mP_\sv \in \R^{\paramspacedim \times \paramspacedim}\) as the orthogonal projection matrix to the subspace spanned by the support vectors, and \( \Bar{\mP}_\sv = \mI - \mP_\sv \) as the complementary projection.

\subsubsection{Gradient Flow for the Expected Loss}

Similar as in \citet{Soudry2018ImplicitBias}, we begin by studying the gradient flow dynamics, i.e. taking the continuous time limit of gradient descent:
\begin{equation}
    \dot{\variationalparams}_t = - \grad[]{\expectedloss}{\variationalparams_t},
\end{equation}
which can be written componentwise as:
\begin{align}
    \dot{\meanparam}_t 
    &=
    - \grad[\meanparam]{\expectedloss}{\meanparam_t, \covfactorparam_t} 
    =
    \sum_{\idxdata=1}^\numtraindata {\exp \left( - \meanparam_t^\tr \vx_\idxdata + \frac{1}{2} \vx_\idxdata^\tr \covfactorparam_t \covfactorparam_t^\tr \vx_\idxdata \right)} \vx_\idxdata 
    \\
    \dot{\covfactorparam}_t 
    &=
    - \grad[\covfactorparam]{\expectedloss}{\meanparam_t, \covfactorparam_t} 
    =
    - \sum_{\idxdata=1}^\numtraindata {\exp \left( - \meanparam_t^\tr \vx_\idxdata + \frac{1}{2} \vx_\idxdata^\tr \covfactorparam_t \covfactorparam_t^\tr \vx_\idxdata \right)} \vx_\idxdata \vx_\idxdata^\tr \covfactorparam_t.
    \label{eqn::S-dynamics}
\end{align}
We begin by showing that the total uncertainty, as measured by the Frobenius norm of the covariance factor, is bounded during the gradient flow dynamics. To that end, we derive the following dynamics:
\begin{equation}
    \frac{d}{dt} \frac{1}{2} \| \covfactorparam_t \|_F^2 
    = 
    \trace{\covfactorparam_t^\tr \dot{\covfactorparam}_t} 
    = 
    -\sum_{\idxdata=1}^\numtraindata {\exp \left( - \meanparam_t^\tr \vx_\idxdata + \frac{1}{2} \vx_\idxdata^\tr \covfactorparam_t \covfactorparam_t^\tr \vx_\idxdata \right)} \| \vx_\idxdata^\tr \covfactorparam_t \|^2 \leq 0,
\end{equation}
and therefore
\begin{equation}
    \| \covfactorparam_t \|_F^2 \leq \| \covfactorparam_0 \|_F^2.
\end{equation}
Finally, by Cauchy-Schwarz inequality, we have that
\begin{equation}
    \| \covfactorparam_t \covfactorparam_t^\tr \|_F \leq \| \covfactorparam_t \|_F^2 \leq \| \covfactorparam_0 \|_F^2.
\end{equation}
We continue by studying the convergence behavior of the mean parameter \( \meanparam_\idxoptimstep\).
\paragraph{Mean parameter}
Our goal is to show that \( \| \vr_t \| \) is bounded. \Cref{eqn::definition-of-residual} implies that
\begin{equation}
    \dot{\vr}_t = \dot{\meanparam}_t - \frac{1}{t} \maxmarginvector = - \grad[\meanparam]{\expectedloss}{\meanparam_t, \covfactorparam_t} - \frac{1}{t} \maxmarginvector.
\end{equation}
This in turn implies that
\begin{equation} \label{eqn::decomposition}
\begin{split}
    \frac{1}{2} \frac{d}{dt}  &\| \vr_t \|^2 = \dot{\vr}_t^\tr \vr_t \\
    &= \sum_{\idxdata=1}^\numtraindata {\exp \left( - \meanparam_t^\tr \vx_\idxdata + \frac{1}{2} \vx_\idxdata^\tr \covfactorparam_t \covfactorparam_t^\tr \vx_\idxdata \right)} \vx_\idxdata^\tr \vr_t - \frac{1}{t} \maxmarginvector^\tr  \vr_t \\
    &= \sum_{n \in \sv} \exp \left ( - \log (t) \maxmarginvector^\tr \vx_\idxdata - \kktvector^\tr \vx_\idxdata + \frac{1}{2} \vx_\idxdata^\tr \covfactorparam_t \covfactorparam_t^\tr \vx_\idxdata - \vx_\idxdata^\tr \vr_t  \right ) \vx_\idxdata^\tr \vr_t - \frac{1}{t} \maxmarginvector^\tr \vr_t \\
    &+  \sum_{n \notin \sv} \exp \left ( - \log (t) \maxmarginvector^\tr \vx_\idxdata - \kktvector^\tr \vx_\idxdata + \frac{1}{2} \vx_\idxdata^\tr \covfactorparam_t \covfactorparam_t^\tr \vx_\idxdata - \vx_\idxdata^\tr \vr_t  \right ) \vx_\idxdata^\tr \vr_t \\
    &= \left [ \frac{1}{t} \sum_{n \in \sv} \exp \left ( - \kktvector^\tr \vx_\idxdata \right ) \left ( \exp \left (-\vx_\idxdata^\tr  \vr_t + \frac{1}{2} \vx_\idxdata^\tr \covfactorparam_t \covfactorparam_t^\tr \vx_\idxdata \right ) - 1 \right ) \vx_\idxdata^\tr \vr_t \right ] \\
    &+ \left [ \sum_{n \notin \sv} \left ( \frac{1}{t} \right )^{\maxmarginvector^\tr \vx_\idxdata} \exp \left ( - \kktvector^\tr \vx_\idxdata + \frac{1}{2} \vx_\idxdata^\tr \covfactorparam_t \covfactorparam_t^\tr \vx_\idxdata \right ) \exp \left ( - \vx_\idxdata^\tr \vr_t  \right ) \vx_\idxdata^\tr \vr_t  \right ].
\end{split}
\end{equation}
where in last line we used the fact that \( \maxmarginvector^\tr \vx_\idxdata = 1\) for \(n \in \sv\) as in \eqref{eqn::definition-mu-tilde}, and that \(\sum_{n \in \sv} \exp(- \vx_\idxdata^\tr \kktvector) \vx_\idxdata = \maxmarginvector\) as in \eqref{eqn::property-of-mu-hat}. 
We begin by examining the first bracket, studying three possible cases for each of the summands. 
First, note that if \( \vx_\idxdata^\tr \vr_t \leq 0 \), then since \( \frac{1}{2} \vx_\idxdata^\tr \covfactorparam_t \covfactorparam_t^\tr \vx_\idxdata  \geq 0\), we have that 
\begin{equation}
    \left ( \exp \left (-\vx_\idxdata^\tr \vr_t + \frac{1}{2} \vx_\idxdata^\tr \covfactorparam_t \covfactorparam_t^\tr \vx_\idxdata \right ) - 1 \right ) \vx_\idxdata^\tr  \vr_t \leq 0.
\end{equation}
Next, by defining \( B := \| \covfactorparam_0 \|_F^2 \max_\idxdata \| \vx_\idxdata \|_2 \), if \( 0 < \vx_\idxdata^\tr \vr_t < \frac{B}{2}\), we have that
\begin{equation}
\label{eqn::upper-bound-support-term}
    \left |\left ( \exp \left (-\vx_\idxdata^\tr \vr_t + \frac{1}{2} \vx_\idxdata^\tr \covfactorparam_t \covfactorparam_t^\tr \vx_\idxdata \right ) - 1 \right ) \vx_\idxdata^\tr  \vr_t \right | < \left ( \exp \left (\frac{B}{2} \right ) - 1\right ) \frac{B}{2},
\end{equation}
and if \( \vx_\idxdata^\tr \vr_t \geq \frac{B}{2} \), we have that
\begin{equation}
    \left ( \exp \left (-\vx_\idxdata^\tr \vr_t + \frac{1}{2} \vx_\idxdata^\tr \covfactorparam_t \covfactorparam_t^\tr \vx_\idxdata \right ) - 1 \right ) \vx_\idxdata^\tr \vr_t \leq 0.
\end{equation}
Finally, for arbitrary \( \epsilon \geq \max \{ B, 1 \} \), if \( | \vx_\idxdata^\tr \vr_t | \geq \epsilon \), we have that
\begin{equation}
    \left ( \exp \left (-\vx_\idxdata^\tr \vr_t + \frac{1}{2} \vx_\idxdata^\tr \covfactorparam_t \covfactorparam_t^\tr \vx_\idxdata \right ) - 1 \right ) \vx_\idxdata^\tr \vr_t \leq \left ( \exp \left (-\frac{B}{2} \right ) - 1 \right ) \epsilon< 0.
\end{equation}
Furthermore, let \( \gamma_* = \min_{n \in \sv} \kktvector^\tr \vx_\idxdata \) and \( \gamma^* = \max_{n \in \sv} \kktvector^\tr \vx_\idxdata\). Now, by taking \( \epsilon \geq \max \{ B, 1 \} \) large enugh such that
\begin{equation}
    \label{eqn::condition-on-epsilon}
    \left | \exp (- \gamma^* ) \left (\exp \left ( - \frac{B}{2} \right ) - 1 \right ) \epsilon \right | \geq | \sv | \exp(-\gamma_*) \left (\exp \left ( \frac{B}{2} \right ) - 1 \right ) \frac{B}{2},
\end{equation}
if there exists a support vector \(\idxdata \in \sv\) such that \( | \vx_\idxdata^\tr \vr_t | \geq \epsilon\), then 
\begin{equation}
\label{eqn::support-term-negative}
    \frac{1}{t} \sum_{n \in \sv} \exp \left ( - \kktvector^\tr \vx_\idxdata \right ) \left ( \exp \left (-\vx_\idxdata^\tr  \vr_t + \frac{1}{2} \vx_\idxdata^\tr \covfactorparam_t \covfactorparam_t^\tr \vx_\idxdata \right ) - 1 \right ) \vx_\idxdata^\tr \vr_t  \leq 0.
\end{equation}
The idea of this is that if there exists a support vector such that \( | \vx_\idxdata^\tr \vr_t | \) is sufficiently big, then the first bracket in \cref{eqn::decomposition} is negative.

On the other hand, for the second bracket in \cref{eqn::decomposition}, note that for \(n \notin \sv\), we have that \(\vx_\idxdata^\tr \maxmarginvector \geq \minmargin\), and hence
\begin{equation} \label{eqn::second-bracket}
\begin{split}
    &\sum_{n \notin \sv} \left ( \frac{1}{t} \right )^{\maxmarginvector^\tr \vx_\idxdata} \exp \left ( - \kktvector^\tr \vx_\idxdata + \frac{1}{2} \vx_\idxdata^\tr \covfactorparam_t \covfactorparam_t^\tr \vx_\idxdata \right ) \exp \left ( - \vx_\idxdata^\tr \vr_t  \right ) \vx_\idxdata^\tr \vr_t \\
    &\leq
    \frac{1}{t^\minmargin} \exp \left ( \frac{1}{2} \| \covfactorparam_0 \|_F^2 \max_\idxdata \vx_\idxdata^\tr \vx_\idxdata \right ) \sum_{n \notin \sv} \exp \left ( - \kktvector^\tr \vx_\idxdata\right ) 
    = 
    \bigO*{\frac{1}{t^\minmargin}},
\end{split}
\end{equation}
where in the last line we used that \( ze^{-z} \leq 1, \forall z \in \R \) and fact that \( \| \covfactorparam_t \covfactorparam_t^\tr \|_F \leq \| \covfactorparam_0 \|_F^2  < \infty \). 

We will now combine the results from above to show that the residual \( \vr_t\) is bounded in the following way: if there exists a support vector \(\idxdata \in \sv\) such that \( | \vx_\idxdata^\tr \vr_t | \geq \epsilon\) for big enough \( \epsilon > 0\), then \( \frac{1}{2} \frac{d}{dt}  \| \vr_t \|^2 = \bigO{t^{-\kappa}} \). If such a support vector does not exist at time \( t \), we will show that \( r_t \) is containted inside a compact set. To that end, if \( \| \mP_\sv \vr_t \| \geq \epsilon_1\), we have that
\begin{equation} \label{eqn::epsilon1-epsilon}
    \max_{\idxdata \in \sv} \left | \vx_\idxdata^\tr \vr_t \right |^2 \geq \frac{1}{| \sv |} \sum_{\idxdata \in \sv} \left | \vx_\idxdata^\tr \mP_\sv \vr_t \right |^2 = \frac{1}{| \sv |} \left \| \mX_\sv^\tr \mP_\sv \vr_t \right \| ^ 2 \geq \frac{1}{| \sv |} \sigma^2_{\min} (\mX_\sv) \epsilon_1^2,
\end{equation}
where in the first inequality we used the fact that \( \mP_\sv^\tr \vx_\idxdata = \vx_\idxdata\) for \( n \in \sv \). Hence by choosing \( \epsilon_1 \) such that \(  \sigma^2_{\min} (\mX_\sv) \epsilon_1^2 / | \sv | = \epsilon^2\), where the \( \epsilon\) is chosen in \cref{eqn::condition-on-epsilon}, we have that 
\begin{equation}
\label{eqn::big-enough-epsilon}
    \| \mP_\sv \vr_t \| \geq \epsilon_1 \Rightarrow \frac{1}{2} \frac{d}{dt}  \| \vr_t \|^2 = \bigO*{t^{-\kappa}}.
\end{equation}
On the other hand, if \( \| \mP_\sv \vr_t \| \leq \epsilon_1\), recall that
\begin{equation}
    \vr_t = (\meanparam_t - \meanparam_0) + \meanparam_0 - \maxmarginvector \log t - \kktvector,
\end{equation}
and since all updates to the mean parameter are in the space spanned by the support vectors (\cref{assum:support-spans-data}), we have that
\begin{equation}
    \bar{\mP}_\sv \vr_t = \bar{\mP}_\sv \meanparam_0 - \bar{\mP}_\sv \kktvector.
\end{equation}
We can now conclude that
\begin{equation} \label{eqn::bound-smaller-than-epsilon}
    \| \mP_\sv \vr_t \| \leq \epsilon_1 \Rightarrow \| \vr_t \| \leq \| \mP_\sv \vr_t \| + \| \bar{\mP}_\sv \vr_t \| \leq \epsilon_1 + \| \bar{\mP}_\sv \meanparam_0 \| + \| \bar{\mP}_\sv \kktvector \| < \infty.
\end{equation}
Finally, combining the results from \cref{eqn::second-bracket} and \cref{eqn::bound-smaller-than-epsilon}, recalling that \( \kappa > 1 \), we have that \( \| \vr_t \| \) is bounded for all \( t > 0\). This completes the first part of the proof and shows that
\begin{equation}
    \meanparam_t = \maxmarginvector \log t + \kktvector + \vr_t = \maxmarginvector \log t + \bigO*{1},
\end{equation}
and in particular
\begin{equation}
    \lim_{t \rightarrow \infty} \frac{\meanparam_t}{\| \meanparam_t \|} = \frac{\maxmarginvector}{\| \maxmarginvector \|}.
\end{equation}
We proceed by showing that the limit covariance parameter vanishes in the span of the support vectors.
\paragraph{Covariance parameter}
We begin by substituting the definition of the residual $\vr_t$ \eqref{eqn::definition-of-residual} into the gradient flow dynamics for the covariance factor $\covfactorparam_t$:
\begin{equation}
\begin{split}
    \dot{\covfactorparam}_t 
    &= 
    - \nabla_{\covfactorparam}\, \expectedloss(\meanparam_t, \covfactorparam_t) \\
    &= 
    - \sum_{\idxdata=1}^\numtraindata 
    \exp \!\left ( - \meanparam_t^\tr \vx_\idxdata 
        + \tfrac{1}{2} \vx_\idxdata^\tr \covfactorparam_t \covfactorparam_t^\tr \vx_\idxdata \right)
    \vx_\idxdata \vx_\idxdata^\tr \covfactorparam_t.
\end{split}
\end{equation}
Next, we split the sum into contributions from support vectors and non–support vectors.  
For $n \in \sv$, we use the property $\vx_\idxdata^\tr \maxmarginvector = 1$;  
for $n \notin \sv$, the margin is strictly larger than one, which introduces higher–order decay in $t$:
\begin{equation}
\begin{split}
    \dot{\covfactorparam}_t 
    &= - \sum_{n \in \sv} \frac{1}{t} 
    \exp \!\left ( - \kktvector^\tr \vx_\idxdata - \vr_t^\tr \vx_\idxdata \right ) 
    \exp \!\left ( \tfrac{1}{2} \vx_\idxdata^\tr \covfactorparam_t \covfactorparam_t^\tr \vx_\idxdata \right ) 
    \vx_\idxdata \vx_\idxdata^\tr \covfactorparam_t \\
    &\quad 
    - \sum_{n \notin \sv} \Bigl( \tfrac{1}{t} \Bigr) ^{\vx_\idxdata^\tr \maxmarginvector} 
    \exp \!\left ( - \kktvector^\tr \vx_\idxdata - \vr_t^\tr \vx_\idxdata \right ) 
    \exp \!\left ( \tfrac{1}{2} \vx_\idxdata^\tr \covfactorparam_t \covfactorparam_t^\tr \vx_\idxdata  \right ) 
    \vx_\idxdata \vx_\idxdata^\tr \covfactorparam_t.
\end{split}
\end{equation}
Since $\vr_t$ is bounded (from the previous part of the proof), the exponential prefactor is uniformly bounded away from zero.  
We formalize this by defining
\begin{equation} \label{eqn::bdd-residuals}
    C := \min_{n \in [N]} \min_{t \geq 0} 
    \exp \!\left ( - \kktvector^\tr \vx_\idxdata - \vr_t^\tr \vx_\idxdata \right ) > 0.
\end{equation}
We also let $\sigma_{\min}$ denote the smallest non–zero eigenvalue of the matrix $\sum_{n \in \sv} \vx_\idxdata \vx_\idxdata^\tr$.  
Finally, to measure the size of $\covfactorparam_t$ restricted to the support–vector subspace, we define
\[ \Delta_t := \trace*{\mP_\sv \covfactorparam_t \covfactorparam_t^\tr \mP_\sv}. \]
We now compute the derivative of $\Delta_t$ over time.  
Differentiating and substituting the dynamics of $\covfactorparam_t$ yields
\begin{equation} \label{eqn::dynamics-of-delta}
\begin{split}
    \tfrac{1}{2} \tfrac{d}{dt} \Delta_t 
    &= \trace*{\mP_\sv \dot{\covfactorparam}_t \covfactorparam_t^\tr \mP_\sv } \\
    &= - \frac{1}{t} \sum_{n \in \sv} 
    \exp \!\left ( - \kktvector^\tr \vx_\idxdata - \vr_t^\tr \vx_\idxdata \right ) 
    \exp \!\left ( \tfrac{1}{2} \vx_\idxdata^\tr \covfactorparam_t \covfactorparam_t^\tr \vx_\idxdata \right ) 
    \trace*{ \mP_\sv \vx_\idxdata \vx_\idxdata^\tr \covfactorparam_t \covfactorparam_t^\tr \mP_\sv}  \\
    &\quad + \bigO*{\tfrac{1}{t^{\minmargin}}}.
\end{split}
\end{equation}
At this point we use two facts:
1.~from \eqref{eqn::bdd-residuals}, the exponential prefactor is bounded below by $C>0$,  
2.~from the definition of $\sigma_{\min}$, we can control the quadratic form $\sum_{n \in \sv}\vx_\idxdata \vx_\idxdata^\tr$.  
Applying both gives
\begin{equation}
    \tfrac{1}{2} \tfrac{d}{dt} \Delta_t 
    \leq -\frac{C \sigma_{\min}}{t} \Delta_t + \bigO*{\tfrac{1}{t^{\minmargin}}}.
\end{equation}
Finally, by Grönwall’s lemma, there exists a constant $K>0$ and a starting time $t_0>0$ such that
\begin{equation} \label{eqn::gronwall-argument}
\Delta_t \le
\Delta_{t_0}\Bigl(\tfrac{t}{t_0}\Bigr)^{-2C\sigma_{\min}}
+ \frac{K}{2C\sigma_{\min}+\minmargin-1}\,t^{-(\minmargin-1)}, 
\quad \forall t \geq t_0.
\end{equation}
Since both $|\sv| C \sigma_{\min} > 0$ and $\minmargin > 1$, we conclude that $\Delta_t \to 0$ as $t \to \infty$.  
In words: the covariance factor vanishes when projected onto the span of the support vectors, i.e.
\begin{equation}
    \label{eqn:gradient-flow-proof-covariance-parameters-go-to-zero}
    \forall n \in \sv: \lim_{\idxoptimstep \to \infty} 
    \vx_\idxdata^\tr \mS_\idxoptimstep \mS_\idxoptimstep^\tr \vx_\idxdata = 0,
\end{equation}
as claimed. \qed

\subsubsection{Complete Proof of \Cref{thm:classification-implicit-bias}}
We will now extend the results for the gradient flow to gradient descent and then use these results to characterize the implicit bias of gradient descent as generalized variational inference.

Throughout this proof, let 
\begin{equation}
    \mA_t = \sum_{\idxdata=1}^\numtraindata {\exp \left( - \meanparam_t^\tr \vx_\idxdata + \frac{1}{2} \vx_\idxdata^\tr \covfactorparam_t \covfactorparam_t^\tr \vx_\idxdata \right)} \vx_\idxdata \vx_\idxdata^\tr
\end{equation}
be a positive definite matrix at iteration \( t \). We begin the section with a few lemmata which will be used throughout the proof.
\begin{lemma}
\label{lemma::uncertainty-decreases}
    Suppose that we start gradient descent from \( (\meanparam_0, \covfactorparam_0) \). If \( \lr < \lambda_{\max}(\mA_0)^{-1} \), then for the gradient descent iterates
    \begin{equation}
    \covfactorparam_{t+1} = \covfactorparam_t - \lr \grad[\covfactorparam]{\expectedloss}{\meanparam_t, \covfactorparam_t},
    \end{equation}
    we have that \( \| \covfactorparam_t \|_F \leq \| \covfactorparam_0 \|_F\) for all \( t \geq 0\).
\end{lemma}
\begin{proof}
    First, note that the gradient descent update for the covariance factor is given by
    \begin{equation}
        \covfactorparam_{t+1} = \covfactorparam_t (\mId - \lr \mA_t),
    \end{equation}
    and hence we have that
    \begin{equation}
        \| \covfactorparam_{t+1} \|_F = \| \covfactorparam_t (\mId - \lr \mA_t) \|_F \leq \| \covfactorparam_t \|_F \| (\mId - \lr \mA_t) \|_2.
    \end{equation}
    Now, since \( \lr \leq \lambda_{\max}(\mA_0)^{-1} \leq \lambda_{\max}(\mA_t)^{-1} \) for all \( t \geq 0\) and noting that \( \mA_t \succeq 0 \), we have that
    \begin{equation}
        \| (\mId - \lr \mA_t) \|_2 \leq 1,
    \end{equation}
    and therefore
    \begin{equation}
        \| \covfactorparam_{t+1} \|_F \leq \| \covfactorparam_t \|_F.
    \end{equation}
    Finally, we can conclude that \( \| \covfactorparam_t \|_F \leq \| \covfactorparam_0 \|_F\) for all \( t \geq 0\), as required.
        
\end{proof}
\begin{lemma}
\label{lemma::bound-on-squared-gradient-sum}
    Suppose that we start gradient descent from \( (\meanparam_0, \covfactorparam_0) \). If \( \lr < \lambda_{\max}(\mA_0)^{-1} \), then for the gradient descent iterates
    \begin{equation}
    \begin{split}
        \meanparam_{t+1} &= \meanparam_t - \lr \grad[\meanparam]{\expectedloss}{\meanparam_t, \covfactorparam_t},
    \end{split}
    \end{equation}
    we have that \( \sum_{u=0}^\infty \| \nabla_\meanparam \expectedloss (\meanparam_u, \covfactorparam_u) \|^2 
     < \infty \). Consequently, we also have that \( \lim_{t \rightarrow \infty} \| \grad[\meanparam]{\expectedloss}{\meanparam_t, \covfactorparam_t} \|^2 = 0 \).
\end{lemma}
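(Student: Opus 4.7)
The plan is to prove a one-step descent inequality $\expectedloss(\meanparam_{\idxoptimstep+1}, \covfactorparam_{\idxoptimstep+1}) \leq \expectedloss(\meanparam_\idxoptimstep, \covfactorparam_\idxoptimstep) - c\,\|\nabla_\meanparam \expectedloss(\meanparam_\idxoptimstep, \covfactorparam_\idxoptimstep)\|^2$ for a fixed $c > 0$, then telescope and use $\expectedloss \geq 0$ to obtain summability. From $\expectedloss(\meanparam, \covfactorparam) = \sum_\idxdata e_\idxdata(\meanparam, \covfactorparam)$ with $e_\idxdata = \exp(-\vx_\idxdata^\tr \meanparam + \tfrac{1}{2}\vx_\idxdata^\tr \covfactorparam\covfactorparam^\tr \vx_\idxdata)$, direct computation gives $\nabla_\meanparam \expectedloss = -\sum_\idxdata e_\idxdata \vx_\idxdata$, $\nabla_\covfactorparam \expectedloss = \mA\covfactorparam$ with $\mA = \sum_\idxdata e_\idxdata \vx_\idxdata \vx_\idxdata^\tr$, and $\nabla_\meanparam^2 \expectedloss = \mA$. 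So $\mA_\idxoptimstep$ is precisely the Hessian in $\meanparam$ at iterate $\idxoptimstep$, the $\covfactorparam$-update rewrites as $\covfactorparam_{\idxoptimstep+1} = (\mI - \lr\mA_\idxoptimstep)\covfactorparam_\idxoptimstep$, and $\expectedloss$ is jointly convex in $(\meanparam,\covfactorparam)$ since each $e_\idxdata$ is the exponential of a jointly convex function of its arguments.

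I would then split each GD step into the $\meanparam$- and $\covfactorparam$-pieces and bound each separately. For the $\covfactorparam$-piece, whenever $\|\mI - \lr\mA_\idxoptimstep\|_{\mathrm{op}} \leq 1$,
\[
\covfactorparam_{\idxoptimstep+1}\covfactorparam_{\idxoptimstep+1}^\tr = (\mI - \lr\mA_\idxoptimstep)\,\covfactorparam_\idxoptimstep\covfactorparam_\idxoptimstep^\tr\,(\mI - \lr\mA_\idxoptimstep)^\tr \preceq \covfactorparam_\idxoptimstep\covfactorparam_\idxoptimstep^\tr,
\]
so every $\vx_\idxdata^\tr\covfactorparam\covfactorparam^\tr\vx_\idxdata$ can only decrease and $\expectedloss(\meanparam_{\idxoptimstep+1}, \covfactorparam_{\idxoptimstep+1}) \leq \expectedloss(\meanparam_{\idxoptimstep+1}, \covfactorparam_\idxoptimstep)$. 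For the $\meanparam$-piece, $\meanparam \mapsto \expectedloss(\meanparam, \covfactorparam_\idxoptimstep)$ is convex with Hessian $\mA(\cdot, \covfactorparam_\idxoptimstep)$, so a standard descent-lemma argument yields $\expectedloss(\meanparam_{\idxoptimstep+1}, \covfactorparam_\idxoptimstep) \leq \expectedloss(\meanparam_\idxoptimstep, \covfactorparam_\idxoptimstep) - \lr\bigl(1 - \tfrac{\lr L_\idxoptimstep}{2}\bigr)\,\|\nabla_\meanparam \expectedloss(\meanparam_\idxoptimstep, \covfactorparam_\idxoptimstep)\|^2$, where $L_\idxoptimstep$ is any upper bound on $\lambda_{\max}(\mA(\meanparam, \covfactorparam_\idxoptimstep))$ for $\meanparam$ on the segment joining $\meanparam_\idxoptimstep$ and $\meanparam_{\idxoptimstep+1}$.

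The main obstacle is controlling $L_\idxoptimstep$ and $\lambda_{\max}(\mA_\idxoptimstep)$ uniformly along the trajectory so that the hypothesis $\lr < 1/\lambda_{\max}(\mA_0)$ keeps every per-step contraction factor positive. I would handle this by induction: the inductive hypothesis that $\expectedloss$ is non-increasing through step $\idxoptimstep$ forces $e_\idxdata(\meanparam_s,\covfactorparam_s) \leq \expectedloss(\meanparam_0, \covfactorparam_0)$ for each $s \leq \idxoptimstep$, which uniformly bounds $\lambda_{\max}(\mA_s)$ on the trajectory, and a Taylor expansion along the $\meanparam$-segment combined with smoothness of $\meanparam \mapsto \mA(\meanparam, \covfactorparam_\idxoptimstep)$ on the sublevel set then propagates descent from step $\idxoptimstep$ to step $\idxoptimstep+1$ and closes the induction. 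Once that is in place, combining the two pieces gives the advertised descent inequality; telescoping over $\idxoptimstep = 0, \ldots, T$ and using $\expectedloss \geq 0$ yields $\sum_{u=0}^{T} \|\nabla_\meanparam \expectedloss(\meanparam_u, \covfactorparam_u)\|^2 \leq c^{-1}\,\expectedloss(\meanparam_0, \covfactorparam_0)$, and letting $T \to \infty$ proves summability. Non-negativity of the summands then forces $\|\nabla_\meanparam \expectedloss(\meanparam_\idxoptimstep, \covfactorparam_\idxoptimstep)\|^2 \to 0$, establishing the consequence.
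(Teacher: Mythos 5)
Your plan diverges from the paper's, which simply invokes Lemma~10 of Soudry et al.\ (2018) after asserting bounded local smoothness; you instead attempt an explicit descent-and-telescope. That route would be more self-contained and illuminating if correct, but the specific decoupling of the step into a $\meanparam$-piece followed by a $\covfactorparam$-piece contains a genuine gap.

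The step that fails is the claim
\begin{equation*}
\covfactorparam_{\idxoptimstep+1}\covfactorparam_{\idxoptimstep+1}^\tr = (\mI - \lr\mA_\idxoptimstep)\,\covfactorparam_\idxoptimstep\covfactorparam_\idxoptimstep^\tr\,(\mI - \lr\mA_\idxoptimstep)^\tr \preceq \covfactorparam_\idxoptimstep\covfactorparam_\idxoptimstep^\tr.
\end{equation*}
Congruence $M \mapsto BMB^\tr$ by a contraction $B$ (even a symmetric positive semi-definite one with $0 \preceq B \preceq \mI$) does \emph{not} preserve the Loewner order. For instance, with
\begin{equation*}
B = \begin{pmatrix} 1 & 0 \\ 0 & 0 \end{pmatrix}, \qquad M = \begin{pmatrix} 1 & 1 \\ 1 & 1 \end{pmatrix}, \qquad M - BMB^\tr = \begin{pmatrix} 0 & 1 \\ 1 & 1 \end{pmatrix},
\end{equation*}
and the right-hand matrix has determinant $-1$, so it is not positive semi-definite. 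Consequently you cannot conclude that each quadratic form $\vx_\idxdata^\tr\covfactorparam\covfactorparam^\tr\vx_\idxdata$ decreases, and the inequality $\expectedloss(\meanparam_{\idxoptimstep+1}, \covfactorparam_{\idxoptimstep+1}) \leq \expectedloss(\meanparam_{\idxoptimstep+1}, \covfactorparam_\idxoptimstep)$ does not follow. In fact this can fail in the setting at hand: take $\vx_1 = (1,0)^\tr$ and $\covfactorparam_\idxoptimstep = (0,1)^\tr$ (so $R=1$), which gives $\vx_1^\tr\covfactorparam_\idxoptimstep\covfactorparam_\idxoptimstep^\tr\vx_1 = 0$; yet $\covfactorparam_{\idxoptimstep+1}^\tr \vx_1 = -\lr\,\covfactorparam_\idxoptimstep^\tr\mA_\idxoptimstep\vx_1$ is generically nonzero (it suffices that some other data point has both components nonzero), so $\vx_1^\tr\covfactorparam_{\idxoptimstep+1}\covfactorparam_{\idxoptimstep+1}^\tr\vx_1 > 0$, i.e.\ the variance at $\vx_1$ strictly \emph{increases}. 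What \emph{is} true (via singular-value inequalities, $\sigma_i(BMB^\tr) \leq \|B\|_{\mathrm{op}}^2\,\sigma_i(M)$) is that the individual eigenvalues of $\covfactorparam_\idxoptimstep\covfactorparam_\idxoptimstep^\tr$ are non-increasing; that suffices for the trace argument used elsewhere in the supplement, but it is strictly weaker than the Loewner order you need here. To repair the proof, do not decouple: apply the descent lemma directly to the joint $(\meanparam, \covfactorparam)$ step with a smoothness bound on the joint Hessian over the sublevel set $\{\expectedloss \leq \expectedloss(\meanparam_0,\covfactorparam_0)\}$, closing the induction on non-increase of $\expectedloss$; this yields $\expectedloss(\variationalparams_{\idxoptimstep+1}) \leq \expectedloss(\variationalparams_\idxoptimstep) - c\,\norm{\nabla\expectedloss(\variationalparams_\idxoptimstep)}^2$, from which summability of the $\meanparam$-gradient follows a fortiori.
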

\begin{proof}
    Note that our loss function is not globally smooth in \( \meanparam \). However, if we initialize at \( (\meanparam_0, \covfactorparam_0) \), the gradient descent iterates with \( \lr < \lambda_{\max}(\mA_0)^{-1} \) maintain bounded local smoothness. The statement now follows directly from Lemma 10 in \citet{Soudry2018ImplicitBias}.
\end{proof}

\begin{lemma}
\label{lemma::bound-on-cross-product}
    By choosing \( \epsilon_1 \) as in \cref{eqn::big-enough-epsilon}, if \( \| \mP_\sv \vr_t \| \geq \epsilon_1 \), we have that
    \begin{equation}
        \left ( \vr_{t+1} - \vr_t \right ) ^\tr \vr_t \leq \bigO*{\frac{1}{t^\minmargin}} + \bigO*{\frac{1}{t^2}} \| \vr_t \|.
    \end{equation}
    If \( \| \mP_\sv \vr_t \| < \epsilon_1 \), there exists a constant \( C \) such that
    \begin{equation}
        \left ( \vr_{t+1} - \vr_t \right ) ^\tr \vr_t \leq C.
    \end{equation}

\begin{proof}
We follow similar steps as in the gradient flow case. It holds that
\begin{equation}
\begin{split}
    &( \vr_{t+1} - \vr_t ) ^\tr \vr_t \\
    &= \left ( -\lr \nabla_\meanparam (\meanparam_t, \covfactorparam_t) - \maxmarginvector\left ( \log(t + 1) - \log(t) \right ) \right )^\tr \vr_t \\
    &= \lr \sum_{\idxdata=1}^\numtraindata \exp \left ( -\meanparam_t^\tr \vx_\idxdata + \frac{1}{2} \vx_\idxdata^\tr \covfactorparam_t \covfactorparam_t^\tr \vx_\idxdata \right ) \vx_\idxdata^\tr \vr_t - \maxmarginvector^\tr \vr_t \log (1 + t^{-1}) \\
    &= \maxmarginvector^\tr \vr_t (t^{-1} - \log(1 + t^{-1})) + \lr \sum_{n \notin \sv} \exp \left ( -\meanparam_t^\tr \vx_\idxdata + \frac{1}{2} \vx_\idxdata^\tr \covfactorparam_t \covfactorparam_t^\tr \vx_\idxdata \right ) \vx_\idxdata^\tr \vr_t \\
    &+ \lr \sum_{n \in \sv} \left [ - \frac{1}{t} \exp \left ( - \kktvector^\tr \vx_\idxdata \right ) + \exp \left ( -\meanparam_t^\tr \vx_n + \frac{1}{2} \vx_n^\tr \covfactorparam_t \covfactorparam_t^\tr \vx_n \right ) \right ] \vx_n^\tr \vr_t,
\end{split}
\end{equation} 
where in the last equality we used \Cref{eqn::property-of-mu-hat} to expand \( \maxmarginvector^\tr \vr_t \). Furthermore, we can bound all four terms as follows, beginning with the first term:
\begin{equation}
    \maxmarginvector^\tr \vr_t (t^{-1} - \log(1 + t^{-1})) \leq \| \vr_t \| \bigO*{\frac{1}{t^2}},
\end{equation}
where we used that \( \log (1 + t^{-1}) = t^{-1} + \bigO*{t^{-2}} \). For the second term, using the same argument as in \Cref{eqn::second-bracket}, we derive that
\begin{equation}
    \lr \sum_{n \notin \sv} \exp \left ( -\meanparam_t^\tr \vx_n + \frac{1}{2} \vx_n^\tr \covfactorparam_t \covfactorparam_t^\tr \vx_n \right ) \vx_n^\tr \vr_t \leq \bigO*{\frac{1}{t^\minmargin}}.
\end{equation}
For the third item, from \cref{eqn::support-term-negative} and \cref{eqn::epsilon1-epsilon}, we have that \( \| \mP_\sv \vr_t \| \geq \epsilon_1 \) implies that
\begin{equation}
    \lr \sum_{n \in \sv} \left [ - \frac{1}{t} \exp \left ( - \kktvector^\tr \vx_n \right ) + \exp \left ( -\meanparam_t^\tr \vx_n + \frac{1}{2} \vx_n^\tr \covfactorparam_t \covfactorparam_t^\tr \vx_n \right ) \right ] \vx_n^\tr \vr_t \leq 0.
\end{equation}
The first result follows from combining the above three inequalities.

Next, if \( \| \mP_\sv \vr_t \| < \epsilon_1 \), by defining \( B := \| \covfactorparam_0 \|_F^2 \), following the steps in \cref{eqn::upper-bound-support-term}, we have that
\begin{equation}
    \lr \sum_{n \notin \sv} \exp \left ( -\meanparam_t^\tr \vx_n + \frac{1}{2} \vx_n^\tr \covfactorparam_t \covfactorparam_t^\tr \vx_n \right ) \vx_n^\tr \vr_t \leq \lr | \sv | \left ( \exp \left (\frac{B}{2} \right ) - 1\right ) \frac{B}{2},
\end{equation}
and hence, combining this with \cref{assum:support-spans-data} which implies that \( \vr_t \) is bounded as in \cref{eqn::bound-smaller-than-epsilon}, one can find a constant \( C \) such that
\begin{equation}
    \left ( \vr_{t+1} - \vr_t \right ) ^\tr \vr_t \leq C.
\end{equation}
\end{proof}
\end{lemma}

\paragraph{Proof of \Cref{thm:classification-implicit-bias}}
\begin{proof}
    As in the simple version of the proof, we begin by considering the convergence behavior of the mean parameter \( \meanparam_t \).
    
    \paragraph{Mean parameter}
    Our goal is again to show that \( \| \vr_t \| \) is bounded. To that end, we will provide an upper bound to the following equation
    \begin{equation}
        \| \vr_{t+1} \| ^ 2 = \| \vr_{t+1} - \vr_t \| ^ 2 + 2 \left ( \vr_{t+1} - \vr_t \right ) ^\tr \vr_t + \| \vr_t \| ^ 2.
    \end{equation}
    First, consider the first term in the above equation:
    \begin{equation}
    \begin{split}
        &\| \vr_{t+1} - \vr_t \| ^ 2 \\
        &= \| \meanparam_{t+1} - \maxmarginvector \log (t + 1) - \kktvector - \meanparam_t + \maxmarginvector \log (t) + \kktvector \| ^ 2 \\
        &= \| -\lr \grad[\meanparam]{\expectedloss}{\meanparam_t, \covfactorparam_t} - \maxmarginvector \log (1 + t^{-1})] \| ^ 2 \\
        &\leq 2 \Bigl [ \lr^2 \| \grad[\meanparam]{\expectedloss}{\meanparam_t, \covfactorparam_t} \|^2 + \| \maxmarginvector \|^2 \log^2 (1 + t^{-1})  \Bigr ] \\
        &\leq 2 \Bigl [ \lr^2 \| \grad[\meanparam]{\expectedloss}{\meanparam_t, \covfactorparam_t} \|^2 + \| \maxmarginvector \|^2 t^{-2} \Bigr ]
    \end{split} 
    \end{equation}
    where in the first inequality we used the standard inequality that \( (x + y)^2 \leq 2(x^2 + y^2) \), and in the second inequality we used the fact that \( \log(1 + x) \leq x \) for \( x \geq 0 \). Now, from \Cref{lemma::bound-on-squared-gradient-sum} and the fact that \( t^{-2} \) is summable, we conclude that there exists \( C_1 < \infty \) such that
    \begin{equation}
        \sum_{t=1}^\infty \| \vr_{t+1} - \vr_t \| ^ 2 \leq C_1 < \infty.
    \end{equation}
    Next, for the second term, recall that in \Cref{lemma::bound-on-cross-product} we showed that if \( \| \mP_\sv \vr_t \| \geq \epsilon_1\), then, for some constants \( C_2, C_3 < \infty\), we have that, eventually
    \begin{equation}
        \left ( \vr_{t+1} - \vr_t \right ) ^\tr \vr_t \leq C_2 \frac{1}{t^\minmargin} + C_3 \frac{1}{t^2} \| \vr_t \|,
    \end{equation}
    and that if \( \| \mP_\sv \vr_t \| < \epsilon_1\), then there exists a constant \( C_4 < \infty\) such that
    \begin{equation}
        \label{eqn::bound-cross-product-less-epsilon}
        \left ( \vr_{t+1} - \vr_t \right ) ^\tr \vr_t \leq C_4.
    \end{equation}
    We will show that when \( \| \mP_\sv \vr_t \| < \epsilon_1\), the residual \( \vr_t \) is contained in a compact set, and when \( \| \mP_\sv \vr_t \| \geq \epsilon_1\), the residual \( \vr_t \) can't escape to infinity. We now formally show this claim.

    Let \( S_1 \) be the frst time such that \( \| \mP_\sv \vr_t \| \geq \epsilon_1\), if such a time does not exist, we are done since the support vectors span the data and hence \( \| \vr_t \| \) is bounded. Now, let \( T_1 \) be the first time after \( S_1 \) such that \( \| \mP_\sv \vr_t \| < \epsilon_1\), where we allow \( T_1 = \infty \) if such a time does not exist. Continuing in this manner, we define the sequences \( S_1 < T_1 < S_2 < T_2 < \ldots \), where we allow \( T_i = \infty\) for some \( i\). 
    
    We prooced by showing that \( \| \vr_t \| \) is uniformly bounded on each of the intervals \( [S_i, T_i) \). 
    To that end, note that for \( t \in [S_i, T_i) \), we have that
    \begin{equation}
        \| \vr_{t+1} \| ^ 2 - \| \vr_t \| ^ 2 \leq  2 C_2 \frac{1}{t^\minmargin} + 2 C_3 \frac{1}{t^2} \| \vr_t \| + \| \vr_{t+1} - \vr_t \| ^ 2,
    \end{equation}
    and hence, using the fact that \( \minmargin > 1 \), by the discrete version of Gr\"onwall's lemma, that
    \begin{equation}
        \max_{t \in [S_i, T_i)} ( \| \vr_t \|^2 - \| \vr_{S_i} \| ^ 2 ) \leq K,
    \end{equation}
    for some constant \( K < \infty \) independent of \( i \).
    Furthemore, we also know from \cref{eqn::bound-cross-product-less-epsilon} that
    \begin{equation}
        \| \vr_{S_i} \| \leq \epsilon_1 + 2C_4 + \| \vr_{S_i} - \vr_{S_i - 1} \| ^ 2 \leq \epsilon_1 + 2C_4 + \max_{t \geq 0} \| \vr_{t+1} - \vr_{t} \| ^ 2 < \infty,
    \end{equation}
    showing that the first jump outisde the \( \epsilon_1 \)-ball is bounded. Combining the two results, we conclude that \( \| \vr_t \| \) is uniformly bounded on each of the intervals \( [S_i, T_i) \).

    Finally, by noting that the support vectors span the data, we have that \( \| \vr_t \| \) is uniformly bounded on each of the intervals \( [T_i, S_{i+1}) \). Combining the two results, we conclude that \( \| \vr_t \| \) is uniformly bounded for all \( t \geq 0\) and hence we have that
    \begin{equation}
        \label{eqn:gradient-descent-proof-mean-vector-max-margin-vector}
        \lim_{t \rightarrow \infty} \frac{\meanparam_t}{\| \meanparam_t \|} = \frac{\maxmarginvector}{\| \maxmarginvector \|}
    \end{equation}
    and the following lemma.

    \begin{lemma}
    \label{lemma:log-t-behavior}
        For the mean parameter \( \meanparam_t \), we have that 
        \begin{equation}
            \meanparam_\idxoptimstep = \log(\idxoptimstep)\maxmarginvector + \bigO*{1}.
        \end{equation} 
    \end{lemma}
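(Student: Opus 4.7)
The plan is to derive the claim directly from the decomposition $\meanparam_t = \maxmarginvector \log t + \kktvector_t + \vr_t$ introduced in \Cref{eqn::definition-of-residual}, by establishing that both remainder terms are $\bigO*{1}$. Boundedness of $\vr_t$ is already in hand from the discrete Gr\"onwall argument just completed, so the one remaining ingredient is to show $\|\kktvector_t\| = \bigO*{1}$.

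For $\kktvector_t$, I will invert the defining identity \Cref{eqn::definition-mu-tilde}: taking logarithms, for every $n \in \sv$,
\begin{equation*}
\vx_n^\tr \kktvector_t = \log(\lr / \alpha_n) + \tfrac{1}{2} \vx_n^\tr \covfactorparam_t \covfactorparam_t^\tr \vx_n.
\end{equation*}
In the proof of \Cref{lemma::finite-sum-of-mu-tilde} we showed $\covfactorparam_{t+1}\covfactorparam_{t+1}^\tr \preceq \covfactorparam_t \covfactorparam_t^\tr$, so the operator norm $\|\covfactorparam_t \covfactorparam_t^\tr\|_2$ is non-increasing in $t$ and the quadratic form on the right-hand side is uniformly bounded. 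Hence $\vx_n^\tr \kktvector_t = \bigO*{1}$ for each support vector.

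Since we chose $\kktvector_t \in \linspan{\{\vx_n\}_{n \in \sv}} = \rangespace{\mX_{\sv}}$ (with $\mX_{\sv}$ the matrix whose columns are the support vectors), collecting the bounded coordinates gives $\mX_{\sv}^\tr \kktvector_t = \bigO*{1}$. Then the identity $\kktvector_t = \mX_{\sv} (\mX_{\sv}^\tr \mX_{\sv})^\pinv \mX_{\sv}^\tr \kktvector_t$, valid on $\rangespace{\mX_{\sv}}$, yields $\|\kktvector_t\| = \bigO*{1}$. Plugging this and $\vr_t = \bigO*{1}$ into the decomposition proves the lemma. The only subtlety I anticipate is possible linear dependence among the support vectors, but this is absorbed into the pseudoinverse step and causes no trouble.
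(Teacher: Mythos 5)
Your proof is correct and follows the same route as the paper: decompose $\meanparam_t = \maxmarginvector\log t + \kktvector_t + \vr_t$, invoke boundedness of $\vr_t$ from the discrete Gr\"onwall argument, and argue $\kktvector_t$ is bounded via the defining identity together with $\covfactorparam_{t}\covfactorparam_{t}^\tr \preceq \covfactorparam_0 \covfactorparam_0^\tr$. The paper's own proof of the lemma is just a one-line pointer (``as we showed above''), relying on the observation — made implicitly around \Cref{eqn::second-bracket} and in \Cref{lemma::finite-sum-of-mu-tilde} — that the quadratic forms $\vx_n^\tr \covfactorparam_t \covfactorparam_t^\tr \vx_n$ are uniformly bounded; your contribution is to make explicit the last step, passing from boundedness of the projections $\vx_n^\tr \kktvector_t$ for $n\in\sv$ to boundedness of $\kktvector_t$ itself via the pseudoinverse on $\rangespace{\traininputs_\sv^\tr}$, which also cleanly handles any linear dependence among the support vectors. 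No gap; this is the paper's argument with a missing detail spelled out.
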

    \begin{proof}
        This follows immediately from the definition of the residual in \Cref{eqn::definition-of-residual}:
        \begin{equation*}
            \meanparam_t = \maxmarginvector \log t + \vr_t + \kktvector_t,
        \end{equation*}
        and the fact that \( \vr_t \) and \( \kktvector_t\) are bounded as we showed above.
    \end{proof}

    We continue with the analysis of the covariance parameter over optimization iterations.

    \paragraph{Covariance parameter}
    As before, let \( \Delta_t = \trace{\mP_\sv \covfactorparam_t \covfactorparam_t^\tr \mP_\sv} \) be the trace of the projection of the covariance parameter on the space of support vectors in \( \sv \). By following the ideas from the gradient flow case, we have the following dynamics:
    \begin{equation}
    \begin{split}
        \Delta_{t+1} &= \trace{\mP_\sv \left ( \mId - \lr \mA_t \right ) \covfactorparam_t \covfactorparam_t^\tr \left ( \mId - \lr \mA_t \right )^\tr \mP_\sv} \\
        &= \trace{\mP_\sv \covfactorparam_t \covfactorparam_t^\tr  \mP_\sv} - 2\lr \trace{\mP_\sv \covfactorparam_t \covfactorparam_t^\tr  \mA_t \mP_\sv} + \lr^2 \trace{\mP_\sv \mA_t \covfactorparam_t \covfactorparam_t^\tr  \mA_t \mP_\sv} \\
        &\leq \Delta_t - \frac{2 \lr}{t}  C \sigma_{\min} \trace{\mP_\sv \covfactorparam_t \covfactorparam_t^\tr  \mP_\sv} + \bigO*{\frac{1}{t^\minmargin}} + \bigO*{\frac{1}{t^2}} \\
        &= \Delta_t - \frac{2 \lr}{t}  C \sigma_{\min} \Delta_t + \bigO*{\frac{1}{t^\minmargin}} + \bigO*{\frac{1}{t^2}},
    \end{split}
    \end{equation}
    where we used the same arguments as in \Cref{eqn::dynamics-of-delta} to derive the last inequality, in addition to noting that \( \lambda_{\max} (\mA_t^2) \leq \bigO*{\frac{1}{t^2}} \) in order to bound the last term. Hence, we can write
    \begin{equation}
        \Delta_{t+1} - \Delta_t \leq -\frac{2 \lr}{t} C \sigma_{\min} \Delta_t + \bigO*{\frac{1}{t^\minmargin}} + \bigO*{\frac{1}{t^2}}.
    \end{equation}
    Again, by the discrete version of Gr\"onwall's lemma, we derive the equivalent result to \cref{eqn::gronwall-argument}. Now, noting that \( \sum_t \frac{1}{t} \) diverges, the fact that \( \minmargin > 1\) and \( \lr C \sigma_{\min} > 0 \),  we conclude that \( \Delta_t \) converges to zero. This implies that the covariance parameter converges to zero in the span of the support vectors, \ie
    \begin{equation}
        \label{eqn:gradient-descent-proof-covariance-parameters-go-to-zero}
        \forall n \in \sv: \lim_{\idxoptimstep \to \infty} \vx_n^\tr \mS_\idxoptimstep \mS_\idxoptimstep^\tr \vx_n = 0,
    \end{equation}
    as desired.

    \paragraph{Characterization as Generalized Variational Inference}
    As a final step we need to show that the solution identified by gradient descent if appropriately transformed identifies the minimum 2-Wasserstein solution in the feasible set. Define the feasible set   
    \begin{align}
        \Theta_\star &= \{ (\meanparam, \covfactorparam) \mid  
        \mP_\sv \meanparam= \maxmarginvector
        \quad  \textrm{and} \quad 
        \forall n \in \sv: \var[\variationalpdf_{\variationalparams}]{f_\params(\vx_n)} = 0
        \}\\
        &= \{ (\meanparam, \covfactorparam) \mid  
        \mP_\sv \meanparam = \maxmarginvector
        \quad  \textrm{and} \quad 
        \forall n \in \sv: \vx_n^\tr \mS \mS^\tr \vx_n = 0
        \}
        \intertext{and the variational parameters identified by rescaled gradient descent as}
        \variationalparams_\star^{\textnormal{rGD}} &= \lim_{\idxoptimstep \to \infty} \variationalparams_{\idxoptimstep}^{\textnormal{rGD}} = \lim_{\idxoptimstep \to \infty}\left(\frac{1}{\log(\idxoptimstep)} \meanparam_\idxoptimstep + \mP_{\textup{null}(\traininputs)} \meanparam_0,  \covfactorparam_\idxoptimstep\right).\label{eqn:classification-proof-definition-feasible-set-support-vectors}
    \end{align}
    It holds by \Cref{lemma:log-t-behavior} that 
    \begin{equation}
        \mP_\sv\meanparam_\star^{\textnormal{rGD}} = \mP_\sv \left(\lim_{\idxoptimstep \to \infty} \frac{1}{\log(\idxoptimstep)} \meanparam_\idxoptimstep\right) + \vzero = \mP_\sv \maxmarginvector = \maxmarginvector
    \end{equation}
    and additionally by \Cref{eqn:gradient-descent-proof-covariance-parameters-go-to-zero} we have for all \(\idxdata \in \sv\) that
    \begin{equation}
        \vx_n^\tr \covfactorparam_\star^{\textnormal{rGD}} (\covfactorparam_\star^{\textnormal{rGD}})^\tr \vx_n = \lim_{\idxoptimstep \to \infty} \vx_n^\tr \covfactorparam_\idxoptimstep (\covfactorparam_\idxoptimstep)^\tr \vx_n = 0.
    \end{equation}
    Therefore, the limit point \(\variationalparams_\star^{\textnormal{rGD}}\) of rescaled gradient descent is in the feasible set. It remains to show that 
    it is also a minimizer of the 2-Wasserstein distance to the prior / initialization.
    We will first show a more general result that does not require \Cref{assum:support-spans-data}.

    To that end define \(\begin{pmatrix}\mV_{\sv} & \mV_{\traininputs \perp \sv} & \mV_{\textnormal{null}(\traininputs)}\end{pmatrix} \in \R^{\paramspacedim \times \paramspacedim}\) where \(\mV_{\sv} \in \R^{\paramspacedim \times \paramspacedim_\sv}\) is an orthonormal basis of the span of the support vectors \(\rangespace{\traininputs_{\sv}^\tr}\), \(\mV_{\traininputs \perp \sv} \in \R^{\paramspacedim \times (\numtraindata - \paramspacedim_\sv)}\) an orthonormal basis of its orthogonal complement in \(\rangespace{\traininputs^\tr}\) and \(\mV_{\textnormal{null}(\traininputs)} \in \R^{\paramspacedim \times (\paramspacedim - \numtraindata)}\) the corresponding orthonormal basis of the null space \(\nullspace{\traininputs}\) of the data. Let \(\mV = \begin{pmatrix} \mV_{\sv} & \mV_{\textnormal{null}(\traininputs)} \end{pmatrix} \in \R^{\paramspacedim \times (\paramspacedim - \numtraindata + \paramspacedim_\sv)}\) and define the projected variational distribution and prior onto the span of the support vectors and the null space of the data as 
    \begin{align}
        q_\variationalparams^{\textnormal{proj}}(\tilde{\params}) &= \gaussianpdf{\tilde{\params}}{\mP_\mV \meanparam}{\mP_\mV \covparam \mP_\mV^\tr} = \gaussianpdf{\tilde{\params}}{\tilde{\meanparam}}{\tilde{\covparam}}\\
        p^{\textnormal{proj}}(\tilde{\params}) &= \gaussianpdf{\tilde{\params}}{\mP_\mV \meanparam_0}{\mP_\mV \covparam_0 \mP_\mV^\tr} = \gaussianpdf{\tilde{\params}}{\tilde{\meanparam}_0}{\tilde{\covparam}_0}
    \end{align}
    where \(\tilde{\params} \in \R^{\paramspacedim - \numtraindata + \paramspacedim_\sv}\).
    Now earlier we showed that the limit point of rescaled gradient descent is in the feasible set, defined in \Cref{eqn:classification-proof-definition-feasible-set-support-vectors}, and thus the same holds for the projected limit point of rescaled gradient descent, \ie
    \begin{align}
        (\tilde{\meanparam}_\star^{\textnormal{rGD}}, \tilde{\covfactorparam}_\star^{\textnormal{rGD}}) &\in \Theta_\star \label{eqn:proof-classification-projected-variational-params-are-feasible}\\
        \intertext{in particular}
        \mP_\sv \tilde{\meanparam}_\star^{\textnormal{rGD}} &= \mP_\sv \meanparam_\star^{\textnormal{rGD}} = \maxmarginvector,\\
        \forall \idxdata \in \sv: \quad \vx_\idxdata^\top \tilde{\covfactorparam}_\star^{\textnormal{rGD}} (\tilde{\covfactorparam}_\star^{\textnormal{rGD}})^\tr \vx_\idxdata &= \vx_\idxdata^\top  \covfactorparam_\star^{\textnormal{rGD}} (\covfactorparam_\star^{\textnormal{rGD}})^\tr \vx_\idxdata = 0.
    \end{align}
    Therefore, we have for all \(\idxdata \in \sv\) that 
    \begin{align}
        0 = \vx_\idxdata^\top \tilde{\covfactorparam}_\star^{\textnormal{rGD}} (\tilde{\covfactorparam}_\star^{\textnormal{rGD}})^\tr \vx_\idxdata = \norm{(\tilde{\covfactorparam}_\star^{\textnormal{rGD}})^\tr \vx_\idxdata}_2^2 &\iff (\tilde{\covfactorparam}_\star^{\textnormal{rGD}})^\tr \vx_\idxdata = \vzero\\
        &\iff (\tilde{\covfactorparam}_\star^{\textnormal{rGD}})^\tr \mV_\sv = \vzero
    \end{align}
    and thus \(\mV_\sv^\tr \tilde{\covfactorparam}_\star^{\textnormal{rGD}} (\tilde{\covfactorparam}_\star^{\textnormal{rGD}})^\tr \mV_\sv = \mZero\). Therefore by \Cref{lem:wasserstein-subspace-decomposition} it holds for the squared 2-Wasserstein distance between the projected limit point of rescaled gradient descent and the projected prior that
    \begin{align*}
        \dw[2][2]{\variationalpdf_{\variationalparams_*}^{\textnormal{proj}}}{p^{\textnormal{proj}}} &\consteq \norm*{\mV_{\sv}^\tr \tilde{\meanparam} - \mV_{\sv}^\tr\tilde{\meanparam}_0}_2^2 + \dw[2][2]{\gaussian{\mV_{\textnormal{null}}^\tr\tilde{\meanparam}}{\mV_{\textnormal{null}}^\tr \tilde{\covparam} \mV_{\textnormal{null}}}}{\gaussian{\mV_{\textnormal{null}}^\tr\tilde{\meanparam}_0}{\mV_{\textnormal{null}}^\tr \tilde{\covparam}_0 \mV_{\textnormal{null}}}}\\
        &= \norm*{\begin{pmatrix}\mV_{\sv}^\tr \tilde{\meanparam} - \mV_{\sv}^\tr\tilde{\meanparam}_0\\ \vzero \end{pmatrix}}_2^2 + \dw[2][2]{\gaussian{\mV_{\textnormal{null}}^\tr\tilde{\meanparam}}{\mV_{\textnormal{null}}^\tr \tilde{\covparam} \mV_{\textnormal{null}}}}{\gaussian{\mV_{\textnormal{null}}^\tr\tilde{\meanparam}_0}{\mV_{\textnormal{null}}^\tr \tilde{\covparam}_0 \mV_{\textnormal{null}}}}\\
        &= \norm*{\mV \begin{pmatrix}\mV_{\sv}^\tr \tilde{\meanparam} - \mV_{\sv}^\tr\tilde{\meanparam}_0\\ \vzero \end{pmatrix}}_2^2 + \dw[2][2]{\gaussian{\mV_{\textnormal{null}}^\tr\tilde{\meanparam}}{\mV_{\textnormal{null}}^\tr \tilde{\covparam} \mV_{\textnormal{null}}}}{\gaussian{\mV_{\textnormal{null}}^\tr\tilde{\meanparam}_0}{\mV_{\textnormal{null}}^\tr \tilde{\covparam}_0 \mV_{\textnormal{null}}}}\\
        &= \norm*{\mP_\sv \tilde{\meanparam} - \mP_\sv \tilde{\meanparam}_0}_2^2 + \dw[2][2]{\gaussian{\mV_{\textnormal{null}}^\tr\tilde{\meanparam}}{\mV_{\textnormal{null}}^\tr \tilde{\covparam} \mV_{\textnormal{null}}}}{\gaussian{\mV_{\textnormal{null}}^\tr\tilde{\meanparam}_0}{\mV_{\textnormal{null}}^\tr \tilde{\covparam}_0 \mV_{\textnormal{null}}}}\\
        &= \norm*{\maxmarginvector - \mP_\sv \tilde{\meanparam}_0}_2^2 + \dw[2][2]{\gaussian{\mV_{\textnormal{null}}^\tr\tilde{\meanparam}}{\mV_{\textnormal{null}}^\tr \tilde{\covparam} \mV_{\textnormal{null}}}}{\gaussian{\mV_{\textnormal{null}}^\tr\tilde{\meanparam}_0}{\mV_{\textnormal{null}}^\tr \tilde{\covparam}_0 \mV_{\textnormal{null}}}}\\
        &\consteq \dw[2][2]{\gaussian{\mV_{\textnormal{null}}^\tr\tilde{\meanparam}}{\mV_{\textnormal{null}}^\tr \tilde{\covparam} \mV_{\textnormal{null}}}}{\gaussian{\mV_{\textnormal{null}}^\tr\tilde{\meanparam}_0}{\mV_{\textnormal{null}}^\tr \tilde{\covparam}_0 \mV_{\textnormal{null}}}}
    \end{align*}
    where we used that \(\mP_\sv \tilde{\meanparam} = \maxmarginvector\) for any \((\tilde{\meanparam}, \tilde{\covfactorparam})\) in the feasible set \(\Theta_\star\).
    Therefore it suffices to show that the projected solution \(\tilde{\variationalparams}_\star^{\textnormal{rGD}}\) minimizes
    \begin{equation}
        \label{eqn:proof-classification-projected-wasserstein-distance-optimality}
        \dw[2][2]{\gaussian{\mV_{\textnormal{null}}^\tr\tilde{\meanparam}}{\mV_{\textnormal{null}}^\tr \tilde{\covparam} \mV_{\textnormal{null}}}}{\gaussian{\mV_{\textnormal{null}}^\tr\tilde{\meanparam}_0}{\mV_{\textnormal{null}}^\tr \tilde{\covparam}_0 \mV_{\textnormal{null}}}} \geq 0.
    \end{equation}
    We have using the definition of the iterates in \Cref{eqn:rescaled-gd-iterates} that
    \begin{align}
        \mV_{\textnormal{null}}^\tr \tilde{\meanparam}_\star^{\textnormal{rGD}} &= \mV_{\textnormal{null}}^\tr \mP_\mV \left(\lim_{\idxoptimstep \to \infty} \frac{1}{\log(\idxoptimstep)} \meanparam_\idxoptimstep + \mP_{\textup{null}(\traininputs)} \meanparam_0 \right)\\
         &= \mV_{\textnormal{null}}^\tr(\maxmarginvector + \mP_{\textup{null}(\traininputs)} \meanparam_0) = \mV_{\textnormal{null}}^\tr \meanparam_0\\
        \intertext{where we used \(\maxmarginvector \in \rangespace{\traininputs_{\sv}^\tr}\). Further, it holds for the gradient of the expected loss \eqref{eqn:proof-classification-expected-loss} with respect to the covariance factor parameters that}
        \mV_{\textnormal{null}}^\tr \tilde{\covfactorparam}_\star^{\textnormal{rGD}} &= \mV_{\textnormal{null}}^\tr \mP_\mV \covfactorparam_\star^{\textnormal{rGD}} = \mV_{\textnormal{null}}^\tr \covfactorparam_\star^{\textnormal{rGD}} = \mV_{\textnormal{null}}^\tr\bigg(\covfactorparam_0 - \underbracket[0.1ex]{\sum_{\idxoptimstep=1}^\infty \lr_\idxoptimstep \grad[\covfactorparam]{\expectedloss}{\meanparam_\idxoptimstep, \covfactorparam_\idxoptimstep}}_{\in \rangespace{\traininputs^\tr}}\bigg)\\
        &= \mV_{\textnormal{null}}^\tr \covfactorparam_0 = \mV_{\textnormal{null}}^\tr \mP_\mV \covfactorparam_0 = \mV_{\textnormal{null}}^\tr \tilde{\covfactorparam}_0.
    \end{align}
    Therefore we have that 
    \begin{align}
        \dw[2][2]{\gaussian{\mV_{\textnormal{null}}^\tr\tilde{\meanparam}_\star^{\textnormal{rGD}}}{\mV_{\textnormal{null}}^\tr \tilde{\covparam}_\star^{\textnormal{rGD}} \mV_{\textnormal{null}}}}{\gaussian{\mV_{\textnormal{null}}^\tr\tilde{\meanparam}_0}{\mV_{\textnormal{null}}^\tr \tilde{\covparam}_0 \mV_{\textnormal{null}}}} = 0
    \end{align}
    and thus the projected variational parameters \(\tilde{\variationalparams}_\star^{\textnormal{rGD}}\) are both feasible \eqref{eqn:proof-classification-projected-variational-params-are-feasible} and minimize the squared 2-Wasserstein distance to the projected initialization / prior \eqref{eqn:proof-classification-projected-wasserstein-distance-optimality}. This completes the proof for the generalized version of \Cref{thm:classification-implicit-bias} without \Cref{assum:support-spans-data}, which we state here for convenience.

    \begin{lemma}
        Given the assumptions of \Cref{thm:classification-implicit-bias}, except for \Cref{assum:support-spans-data} meaning the support vectors \(\traininputs_\sv\) do \emph{not} necessarily span the data, it holds for the limit point of rescaled gradient descent that
        \begin{equation}
            \variationalparams_\star^{\textnormal{rGD}} \in \argmin_{\substack{\variationalparams = (\meanparam, \covfactorparam)\\ \textrm{\st \ } \variationalparams \in \Theta_\star}} \dw[2][2]{q_\variationalparams^{\textnormal{proj}}}{p^{\textnormal{proj}}}.
        \end{equation}
    \end{lemma}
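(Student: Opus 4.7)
The strategy is to adapt the analysis behind \Cref{thm:classification-implicit-bias} so that instead of the full 2-Wasserstein distance we only control its restriction to the two subspaces that the feasible set actually constrains: the span of the support vectors \(\rangespace{\traininputs_\sv^\tr}\) (where the mean is pinned to \(\maxmarginvector\) and the variance collapses) and the null space \(\nullspace{\traininputs}\) (where GD cannot move the parameters away from the prior). The middle subspace \(\rangespace{\traininputs^\tr}\cap\rangespace{\traininputs_\sv^\tr}^\perp\) is exactly what is removed by passing to the projected distributions \(q^{\textnormal{proj}}\) and \(p^{\textnormal{proj}}\), which is why dropping \Cref{assum:support-spans-data} only costs us the projection.

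First, I would re-run the mean-parameter analysis from the proof of \Cref{thm:classification-implicit-bias} verbatim, noting that nothing in the residual/Gr\"onwall argument actually used \Cref{assum:support-spans-data}: the decomposition of the expected-loss gradient into a support-vector term (giving the non-positive $z(e^{-z}-1)$ contribution), a non-support-vector term that is \(\bigO{t^{-\minmargin}}\), and the KKT-drift term \(\kktvector_{t+1}-\kktvector_t\) (summable because \(\covfactorparam_t\covfactorparam_t^\tr\) is PSD-monotone non-increasing) works regardless. This yields \(\meanparam_t = \maxmarginvector \log t + \bigO{1}\), hence after rescaling \(\mP_\sv \meanparam_\star^{\textnormal{rGD}} = \maxmarginvector\). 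Then I would repeat the covariance argument for \(\Delta_t = \trace{\mP_\sv \covfactorparam_t\covfactorparam_t^\tr \mP_\sv}\), again independent of the span assumption, to conclude that \(\vx_\idxdata^\tr \covfactorparam_\star^{\textnormal{rGD}}(\covfactorparam_\star^{\textnormal{rGD}})^\tr \vx_\idxdata = 0\) for all \(\idxdata \in \sv\). Together these establish \(\variationalparams_\star^{\textnormal{rGD}} \in \Theta_\star\).

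Second, I would apply \Cref{lem:wasserstein-subspace-decomposition} to the projected distributions with the orthonormal basis \(\begin{bmatrix}\mV_\sv & \mV_{\nullspace{\traininputs}}\end{bmatrix}\) of the subspace onto which we project. Feasibility gives \(\mV_\sv^\tr \tilde{\covparam}\mV_\sv = \mZero\), so the hypothesis of the lemma is met, and the squared 2-Wasserstein distance decomposes as \(\norm{\maxmarginvector - \mP_\sv\tilde{\meanparam}_0}_2^2\) (a constant across \(\Theta_\star\)) plus a term living entirely on \(\mV_{\nullspace{\traininputs}}\). Minimizing the projected Wasserstein therefore reduces to minimizing that null-space term.

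Finally, to show the null-space term vanishes at \(\variationalparams_\star^{\textnormal{rGD}}\), I would exploit that both \(\grad[\meanparam]{\expectedloss}{\cdot}\) and \(\grad[\covfactorparam]{\expectedloss}{\cdot}\) lie in \(\rangespace{\traininputs^\tr}\) (columns of the former, columns of the latter), because the expected exponential loss only depends on \(\traininputs\meanparam\) and \(\vx_\idxdata^\tr\covfactorparam\covfactorparam^\tr\vx_\idxdata\). Consequently every GD step leaves \(\mV_{\nullspace{\traininputs}}^\tr \meanparam_t\) and \(\mV_{\nullspace{\traininputs}}^\tr \covfactorparam_t\) unchanged from their initial values, and the rescaling in \Cref{eqn:rescaled-gd-iterates} (the \(\mP_{\nullspace{\traininputs}}\meanparam_0\) correction) is exactly what cancels the \(1/\log(\idxoptimstep)\) scaling on the null-space part of the mean, giving \(\mV_{\nullspace{\traininputs}}^\tr \tilde{\meanparam}_\star^{\textnormal{rGD}} = \mV_{\nullspace{\traininputs}}^\tr \tilde{\meanparam}_0\) and \(\mV_{\nullspace{\traininputs}}^\tr \tilde{\covfactorparam}_\star^{\textnormal{rGD}} = \mV_{\nullspace{\traininputs}}^\tr \tilde{\covfactorparam}_0\). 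The null-space Wasserstein is then zero, hence minimized. The main obstacle I anticipate is not the algebraic decomposition but rather the bookkeeping when re-running the Gr\"onwall steps: one has to check that the KKT vector \(\kktvector_t\), defined by an inversion only on the support-vector subspace, still exists and stays bounded in the (possibly) degenerate case where \(\traininputs_\sv\) has rank strictly less than \(\numtraindata\); the smooth choice of \(\kktvector_t\) in \(\linspan{\{\vx_\idxdata\}_{\idxdata \in \sv}}\) (as in \citet{Soudry2018ImplicitBias}) must be invoked explicitly to make the summability of \(\norm{\kktvector_{t+1}-\kktvector_t}\) go through unchanged.
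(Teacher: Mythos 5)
Your proposal follows the paper's proof essentially verbatim: feasibility of $\variationalparams_\star^{\textnormal{rGD}}$ from the Grönwall-based mean and covariance analyses (which indeed never invoke \Cref{assum:support-spans-data}), then \Cref{lem:wasserstein-subspace-decomposition} applied with the basis $\begin{bmatrix}\mV_\sv & \mV_{\nullspace{\traininputs}}\end{bmatrix}$, and finally the observation that gradients of $\expectedloss$ live in $\rangespace{\traininputs^\tr}$ so the null-space coordinates of $(\tilde{\meanparam}, \tilde{\covfactorparam})$ stay at their prior values, making the residual Wasserstein term zero. Your anticipated bookkeeping concern about $\kktvector_t$ is also the same caveat the paper addresses by choosing a smooth $\kktvector_t$ supported on $\linspan{\{\vx_\idxdata\}_{\idxdata\in\sv}}$, so there is no substantive divergence from the paper's argument.
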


    If in addition \Cref{assum:support-spans-data} holds, \ie the support vectors span the training data \(\traininputs\), such that
    \begin{equation}
        \linspan{\{\vx_n\}_{n \in [\numtraindata]}} = \linspan{\{\vx_n\}_{n \in \sv}},
    \end{equation}
    then the orthogonal complement of the support vectors in \(\rangespace{\traininputs^\tr}\) has dimension \(\numtraindata - \paramspacedim_\sv = 0\) and thus the projection \(\mP_\mV = \mI_{\paramspacedim \times \paramspacedim}\) is the identity and therefore
    \begin{equation}
        q_\variationalparams^{\textnormal{proj}} = q_\variationalparams \qquad \text{and} \qquad
        p^{\textnormal{proj}} = p.
    \end{equation}
    This completes the proof of \Cref{thm:classification-implicit-bias}.

\end{proof}

\subsection{NLL Overfitting and the Need for (Temperature) Scaling}
\label{sec-supp:nll-overfitting-and-need-for-scaling}

In \Cref{thm:classification-implicit-bias}, we assume we rescale the mean parameters. This is because the exponential loss can be made arbitrarily small for a mean vector that is aligned with the \(L_2\) max-margin vector simply by increasing its magnitude. In fact, the sequence of mean parameters identified by gradient descent diverges to infinity at a logarithmic rate \(\meanparam_\idxoptimstep^{\textnormal{GD}} \approx \log(\idxoptimstep) \maxmarginvector\) as we show\footnote{This has been observed previously in the deterministic case (see Theorem 3 of \citet{Soudry2018ImplicitBias}) and thus naturally also appears in our probabilistic extension.} in \Cref{lemma:log-t-behavior} and illustrate in \Cref{fig:theory-classification} (right panel).

\begin{figure}[H]
    \centering
    \includegraphics[width=\textwidth]{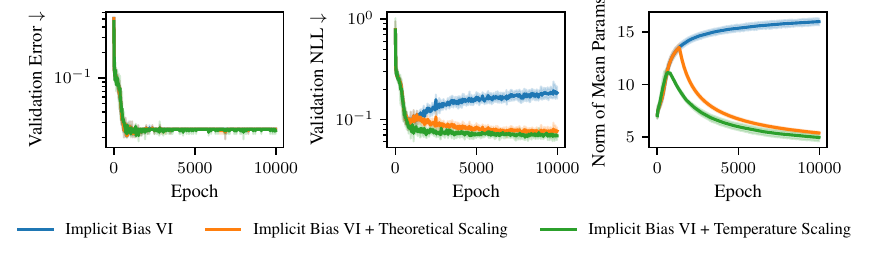}
    \caption{
        \emph{NLL overfitting in classification due to implicit bias of the mean parameters.}
        As shown here for a two-hidden layer neural network on synthetic data,
        when training with vanilla SGD the mean parameters diverge to infinity \(\norm{\meanparam_\idxoptimstep}_2 \approx \bigO*{\log(\idxoptimstep)}\) (right) and thus the classifier will eventually overfit in terms of negative log-likelihood (left and middle).
        Rescaling the GD iterates as in \Cref{thm:classification-implicit-bias} or using temperature scaling \cite{Guo2017CalibrationModern} avoids overfitting.
    }
    \label{fig:theory-classification}
\end{figure}

This bias of the mean parameters towards the max-margin solution does not impact the train loss or validation error, but leads to overfitting in terms of validation NLL (see \Cref{fig:theory-classification}) as long as there is at least one misclassified datapoint \(\vx\), since then the (average)  validation NLL is given by
\begin{equation}
\begin{aligned}
    \expectedloss(\variationalparams_\idxoptimstep^{\textnormal{GD}}) &= \textstyle \expval[\variationalpdf_{\variationalparams_\idxoptimstep^{\textnormal{GD}}}(\params)]{\exp(-y \vx^\tr \params)} = \exp(\vx^\tr \meanparam_\idxoptimstep^{\textnormal{GD}} + \frac{1}{2}\vx^\tr \covfactorparam_\idxoptimstep^{\textnormal{GD}} (\covfactorparam_\idxoptimstep^{\textnormal{GD}})^\tr \vx)\\
    &\approx \textstyle \exp(\log(\idxoptimstep) \vx^\tr  \maxmarginvector + \frac{1}{2}\vx^\tr \covfactorparam_\idxoptimstep^{\textnormal{GD}} (\covfactorparam_\idxoptimstep^{\textnormal{GD}})^\tr \vx) \to \infty \quad \text{as} \quad \idxoptimstep \to \infty.
\end{aligned}
\end{equation}
However, by rescaling the mean parameters as we do in \Cref{thm:classification-implicit-bias}, this can be prevented as \Cref{fig:theory-classification} (middle panel) illustrates for a two-hidden layer neural network on synthetic data.
Such overfitting in terms of NLL has been studied extensively empirically with the perhaps most common remedy being Temperature Scaling (TS) \cite{Guo2017CalibrationModern}.
As we show empirically in \Cref{fig:theory-classification}, instead of using the theoretical rescaling, using temperature scaling performs very well, especially in the non-asymptotic regime, which is why we also adopt it for our experiments in \Cref{sec:experiments}.

The aforementioned divergence of the mean parameters to infinity also explains the need for the projection of the prior mean parameters in \Cref{eqn:rescaled-gd-iterates}, since any bias from the initialization vanishes in the limit of infinite training. At first glance the additional projection seems computationally prohibitive for anything but a zero mean prior, but close inspection of the implicit bias of the covariance parameters \(\covfactorparam\) in \Cref{thm:classification-implicit-bias} shows that at convergence
\begin{equation}
    \forall n: \var[\variationalpdf_{\variationalparams}]{f_\params(\vx_n)} = \vx_n^\tr \covfactorparam \covfactorparam^\tr \vx_\idxdata= 0 \implies \rangespace{\covfactorparam} \subset \operatorname{null}(\traininputs) 
\end{equation}
Meaning we can approximate a basis of the null space of the training data by computing a QR decomposition of the covariance factor in \(\bigO*{\paramspacedim \covrank^2}\) once at the end of training. For \(\covrank=\paramspacedim\) the inclusion becomes an equality and the projection can be computed exactly.


\section{Parametrization, Feature Learning and Hyperparameter Transfer} 
\label{sec-supp:parametrization}

\paragraph{Notation}
For this section we need a more detailed neural network notation. 
Denote an \(\numlayers\)-hidden layer, width-\(\width\) feedforward neural network by \(\function(\vx) \in \R^\outputdim \), with inputs \(\vx \in \R^{\inputdim}\), weights \(\weight\setlayer\), pre-activations \(\hidden\setlayer(\vx) \in \R^{\width\setlayer}\), and post-activations (or ``features'') \(\feature \setlayer(\vx) \in \R^{\width\setlayer}\). That is, \(\hidden \setlayer[1] (\vx) = \weight\setlayer[1]\vx \) and, for \( \idxlayer \in 1,\dotsc, \numlayers-1 \), 
\begin{align*}
	\feature\setlayer(\vx) = \activation{\hidden\setlayer(\vx)},\ 
	\hidden\setlayer[\idxlayer+1](\vx) = \weight\setlayer[\idxlayer+1] \feature\setlayer(\vx),
\end{align*}
and the network output is given by \( \function(\vx) = \weight\setlayer[\numlayers + 1] \feature\setlayer[\numlayers](\vx) \), where \(\activation{\bullet}\) is an activation function.

For convenience, we may abuse notation and write \( \hidden\setlayer[0](\vx) = \vx \) and \(\hidden\setlayer[\numlayers+1](\vx) = \function(\vx) \). Throughout we use \(\bullet\setlayer\) to indicate the layer, subscript \(\bullet\settime \) to indicate the training time (i.e., epoch), \(\Delta\bullet\settime = \bullet\settime - \bullet\settime[0] \) to indicate the change since initialization, and \( \setindex{\bullet}{i} \), \( \setindex{\bullet}{ij} \) to indicate the component within a vector or matrix.

\subsection{Definitions of Stability and Feature Learning}

The following definitions extend those of \citet{Yang2021TensorPrograms} to the variational setting. 

\begin{restatable}[\emph{bc} scaling]{definition}{scaling}
    \label{def:scaling} 
    In layer \(\idxlayer\), the variational parameters are initialized as
        \begin{equation*}
            \setindex{\meanparam\settime[0]\setlayer}{i} \sim \gaussian{0}{\width^{-2 \initscalemean\setlayer}},\quad \setindex{\covfactorparam\settime[0]\setlayer}{ij} \sim \gaussian{0}{\width^{-2 \initscalecov\setlayer}}
        \end{equation*}
        and the learning rates for the mean and covariance parameters, respectively, are set to
        \begin{equation*}
            \lr\setlayer = \lr \width^{-\lrscalemean\setlayer}, \tilde{\lr}\setlayer = \lr \width^{-\lrscalecov\setlayer}.
        \end{equation*}
\end{restatable}

The hyperparameter \(\lr \) represents a global learning rate that can be tuned, as for example in the hyperparameter transfer experiment from \cref{sec:parametrization}.

For the next two definitions, let \( m_r\left( X \right) = \expval[\noise]{ (X - \expval[\noise]{X})^r} \) denote the \(r\)th central moment moment of a random variable \(X\) with respect to \(\noise\), which represents all reparameterization noise in the random variable \(X\).
All Landau notation in \cref{sec-supp:parametrization} refers to asymptotic behavior in width \( \width\) in probability over reparameterization noise \(\noise \). We say that a vector sequence \( \{\vv_\width \}_{\width=1}^\infty \), where each \( \vv_\width \in \R^\width \), is \( \bigO*{\width^{-a}} \) if the scalar sequence \( \{ \sqrt{\tfrac{1}{\width} \norm{\vv_\width}^2 } \}_{\width=1}^\infty = \{ \text{RMSE}(\vv_\width) \}_{\width=1}^\infty \) is \( \bigO*{\width^{-a}} \). 

\begin{restatable}[Stability of Moment \(r\)]{definition}{stability}
    \label{def:stability} 
    A neural network is \emph{stable in moment \(r\)}, if all of the following hold for all \(\vx\) and \(\idxlayer\in\set{1,\dotsc,\numlayers}\).
    \begin{enumerate}
        \item At initialization (\(t=0\)):
            \begin{enumerate}
                \item The pre- and post-activations are \(\bigTheta{1} \): 
                    \begin{equation*} 
                        m_r(\hidden\settime[0]\setlayer(\vx)), m_r(\feature\settime[0]\setlayer(\vx)) = \bigTheta{1}.
                    \end{equation*}
                \item The function is \( \bigO*{1} \): 
                    \begin{equation*} 
                        m_r(\function\settime[0](\vx)) = \bigO*{1}.
                    \end{equation*}
            \end{enumerate}
        \item At any point during training \(\idxoptimstep>0\):
            \begin{enumerate}
                \item The change from initialization in the pre- and post-activations are \( \bigO*{1} \):
                    \begin{equation*} 
                        \Delta m_r(\hidden\settime[t]\setlayer(\vx)), \Delta m_r(\feature\settime[t]\setlayer(\vx)) = \bigO*{1}.
                    \end{equation*}
                \item The function is \(\bigO*{1}\):
                    \begin{equation*} 
                        m_r(\function\settime[t](\vx)) = \bigO*{1}.
                    \end{equation*}
            \end{enumerate}
    \end{enumerate}
\end{restatable}

\begin{restatable}[Feature Learning of Moment \(r\)]{definition}{featurelearning}
    \label{def:featurelearning} 
    \emph{Feature learning} occurs in moment \(r\) in layer \(\idxlayer\) if, for any \(\idxoptimstep>0\), the change from initialization is \( \bigOmega{1} \):
    \begin{equation*}
        \Delta m_r\left( \feature\settime[t]\setlayer(\vx) \right) = \bigOmega{1}.
    \end{equation*}
\end{restatable}
As we will see later, \Cref{fig:coord_check_MLPIVILowRank_Standard} and \Cref{fig:coord_check_MLPIVILowRank_MaximalUpdate} investigate feature learning for the first two moments. 

\subsection{Initialization Scaling for a Linear Network}
\label{sec-supp:parametrization-init}

In this section we illustrate how the initialization scaling \(\set{ (\initscalemean\setlayer, \initscalecov\setlayer) }\) can be chosen for stability.
For simplicity, we consider a linear feedforward network of width \(\width\) evaluated on a single input \(\vx \in \R^\inputdim \). We assume a Gaussian variational family that factorizes across layers. 
This implies the hidden units evolve as \(\hidden\settime\setlayer[\idxlayer+1] = \weight\settime\setlayer[\idxlayer+1] \hidden\settime\setlayer \) and the weights are linked to the variational parameters by \( \vectorize{\weight\settime\setlayer} = \meanparam\settime\setlayer + \covfactorparam\settime\setlayer \noise \).

Therefore, the mean and variance of the \(i\)th component hidden units in layer \(\idxlayer\in\set{1,\dotsc,\numlayers+1} \), where \(i\in1\dotsc,\width\setlayer \),  are given by
\begin{align*}
    \expval[z]{\setindex{\hidden\settime\setlayer}{i}} 
    &= \setindex{\meanparam\settime\setlayer}{\setsym{I}}^\top \expval[z]{\hidden\settime\setlayer[\idxlayer-1]} 
    \\
    \var[z]{\setindex{\hidden\settime\setlayer}{i}} &= 
        \setindex{\meanparam\settime\setlayer}{\setsym{I}}^\top \mC\settime\setlayer[\idxlayer-1] \setindex{\meanparam\setlayer}{I}
            + \trace{\setrow{\covfactorparam\settime\setlayer}{\setsym{I}}^\tr \mA\settime\setlayer[\idxlayer-1] \setrow{\covfactorparam\settime\setlayer}{\setsym{I}} },
\end{align*}
where \(\setsym{I} = \set{iD\setlayer[\idxlayer-1],\dotsc,(i+1)\width\setlayer[\idxlayer-1] } \) and the second moment of and covariance of layer-\(\idxlayer \) hidden units are denoted by
\begin{align*}
    \mA\settime\setlayer &= \expval[z]{\hidden\settime\setlayer \hidden\settime\setlayer[\setlayer][\top]} \\
    \mC\settime\setlayer &= \mA\settime\setlayer - \expval[z]{\hidden\settime\setlayer}\expval[z]{\hidden\settime\setlayer}^\top.
\end{align*}

\paragraph{Mean}
We start with the mean of the hidden units, which conveniently depends only on the mean variational parameters and the previous layer hidden units. 
\begin{align*}
    \expval[z]{\setindex{\hidden\settime[0]\setlayer}{i}} 
    &= \sum_{j=1}^{\width\setlayer[\idxlayer-1]} \setindex{\meanparam\settime[0]\setlayer}{\setsym{I}_j} \expval[z]{\setindex{\hidden\settime[0]\setlayer[\idxlayer-1]}{j}} \\
    &= \bigO*{ \sqrt{\width\setlayer[\idxlayer-1]} \cdot \width^{-\initscalemean\setlayer} \cdot 1 } \\
    &= \begin{cases}
        \bigO*{\width^{-\initscalemean\setlayer[1]}} & \idxlayer=1 \\
        \bigO*{\width^{-(\initscalemean\setlayer - \tfrac12)}} & \idxlayer\in\set{2,\dotsc,\numlayers+1}.
    \end{cases}.
\end{align*}
Therefore, we require \(\initscalemean\setlayer[1]\ge0 \) and \( \initscalemean\setlayer\ge\tfrac12 \) for \( \idxlayer \in \set{2,\dotsc,\numlayers+1} \).

\paragraph{Variance}
Next we examine the variance of hidden units. Consider the first term, which represents the contribution of the mean parameters.
\begin{align*}
    \setindex{\meanparam\settime[0]\setlayer}{\setsym{I}}^\top \mC\settime[0]\setlayer[\idxlayer-1] \setindex{\meanparam\setlayer}{I} 
    &= \sum_{j=1}^{\width\setlayer[\idxlayer-1]} \setindex{\meanparam\settime[0]\setlayer}{\setsym{I}_j}^2 \setindex{\mC\settime[0]\setlayer[\idxlayer-1]}{j,j}  + \sum_{j \neq j'}^{\width\setlayer[\idxlayer-1]} \setindex{\meanparam\settime[0]\setlayer}{\setsym{I}_j} \setindex{\mC\settime[0]\setlayer[\idxlayer-1]}{j,j'} \setindex{\meanparam\settime[0]\setlayer}{\setsym{I}_{j'}} \\
    &= \bigO*{\width\setlayer[\idxlayer-1]  \cdot \width^{-2\initscalemean\setlayer} \cdot 1} + \bigO*{ \sqrt{\width\setlayer[\idxlayer-1] (\width\setlayer[\idxlayer-1] -1)}  \cdot \width^{-\initscalemean\setlayer} \cdot 1 \cdot \width^{-\initscalemean\setlayer}} \\
    &= \bigO*{\width\setlayer[\idxlayer-1]  \cdot \width^{-2\initscalemean\setlayer}} \\
    &= \begin{cases}
        \bigO*{ \width^{-2\initscalemean\setlayer[1]}} & \idxlayer=1 \\
        \bigO*{ \width^{-(2\initscalemean\setlayer - 1)}} & \idxlayer \in \idxlayer\in\set{2,\dotsc,\numlayers+1}.
    \end{cases}
\end{align*}
Therefore, we require \(\initscalemean\setlayer[1]\ge0 \) and \( \initscalemean\setlayer\ge \tfrac12 \) for \( \idxlayer \in \set{2,\dotsc,\numlayers+1} \). Notice these are the same requirements as above for the mean of the hidden units. 
We summarize the scaling for the mean parameters as
\begin{equation}
    \label{eqn:meanparam-init}
    \boxed{
    \initscalemean\setlayer \ge 
    \begin{cases}
        0 & \idxlayer=1 \\
        \tfrac12 & \idxlayer\in\set{2,\dotsc,\numlayers+1}.
    \end{cases}
    }
\end{equation}

Now consider the second term in the variance of the hidden units.
Assume the rank scales with the input and output dimension of a layer as \( \covrank\setlayer = (\width\setlayer[\idxlayer-1] \width\setlayer)^{\covrankpower\setlayer} \), where \( \covrankpower\setlayer  \in [0, 1] \). 
\begin{align*}
    \trace{\setrow{\covfactorparam\settime[0]\setlayer}{\setsym{I}}^\tr \mA\settime[0]\setlayer[\idxlayer-1] \setrow{\covfactorparam\settime[0]\setlayer}{\setsym{I}} }
    &= \sum_{r=1}^{\covrank\setlayer} \setindex{\covfactorparam\settime[0]\setlayer}{\setsym{I}, r}^\tr \mA\settime[0]\setlayer[\idxlayer-1] \setindex{\covfactorparam\settime[0]\setlayer}{\setsym{I}, r} \\
    &= \sum_{r=1}^{\covrank\setlayer} \left( \sum_{j=1}^{\width\setlayer[\idxlayer-1]} \setindex{\covfactorparam\settime[0]\setlayer}{\setsym{I}_j, r}^2 \setindex{\mA\settime[0]\setlayer[\idxlayer-1]}{j,j}  + \sum_{j\neq j'}^{\width\setlayer[\idxlayer-1]} \setindex{\covfactorparam\settime[0]\setlayer}{\setsym{I}_j, r} \setindex{\mA\settime[0]\setlayer[\idxlayer-1]}{j,j'} \setindex{\covfactorparam\settime[0]\setlayer}{\setsym{I}_{j'}, r} \right) \\
    &= \bigO*{\covrank\setlayer \width\setlayer[\idxlayer-1] \cdot \width^{-2\initscalecov\setlayer} \cdot 1 } + \bigO*{\sqrt{\covrank\setlayer \width\setlayer[\idxlayer-1](\width\setlayer[\idxlayer-1]-1) } \cdot \width^{-\initscalecov\setlayer} \cdot 1 \cdot \width^{-\initscalecov\setlayer} } \\
    &= \bigO*{\covrank\setlayer \width\setlayer[\idxlayer-1]  \width^{-2\initscalecov\setlayer} } \\
    &= \begin{cases}
        \bigO*{\width^{-(2\initscalecov\setlayer[1] - \covrankpower\setlayer[1])}} & \idxlayer=1 \\
        \bigO*{\width^{-(2\initscalecov\setlayer -1 - 2\covrankpower\setlayer)}} & \idxlayer\in\set{2,\dotsc,\numlayers} \\
        \bigO*{\width^{-(2\initscalecov\setlayer[\numlayers+1] -1 - \covrankpower\setlayer[\numlayers+1])}} & \idxlayer=\numlayers+1.
    \end{cases}
\end{align*}
Therefore we require \( \initscalecov\setlayer[0] \ge \tfrac{\covrankpower\setlayer[1]}{2}\),  \( \initscalecov\setlayer \ge \tfrac12 + \covrankpower\setlayer \) for \( \idxlayer\in\set{2,\dotsc,\numlayers} \), and \( \initscalecov\setlayer[\numlayers+1] \ge \tfrac12 + \tfrac{\covrankpower\setlayer[\numlayers+1]}{2} \).
Notice we can write these conditions in terms of the mean scaling as
\begin{equation}
    \label{eqn:covparam-init}
    \boxed{
    \initscalecov\setlayer \ge \initscalemean\setlayer + 
    \begin{cases}
        \tfrac{\covrankpower\setlayer}{2} & \idxlayer=1 \\
        \covrankpower\setlayer & \idxlayer \in\set{2,\dotsc,\numlayers} \\
        \tfrac{\covrankpower\setlayer}{2} & \idxlayer = \numlayers+1.
    \end{cases}
    }
\end{equation}

\subsection{Proposed Scaling}

The previous section derives the necessary conditions for stability at initialization. 
Recall from \cref{sec:parametrization} that we propose scaling the contribution of the covariance parameters to the forward pass, i.e. the \( \covfactorparam \noise \) term, by \( \covrank^{-1/2} \) since each element in the term is a sum over \( \covrank \) random variables, where \( \covrank \) is the rank of \( \covfactorparam \).
In the more detailed notation of this section, the proposed scaling implies the forward pass in a linear layer is given by
\begin{equation}
\setindex{\hidden\setlayer\settime}{i} = \setindex{\weight\settime}{:,i} \hidden\setlayer[\idxlayer-1]\settime = \left(\setindex{\meanparam\setlayer\settime}{I} + \covrank^{-1/2} \setindex{\covfactorparam\setlayer\settime}{I} \noise\setlayer \right) \hidden\setlayer[\idxlayer-1]\settime.
\end{equation}
In practice, rather than scaling \(\setindex{\covfactorparam\setlayer\settime}{I} \noise\setlayer \) by \(\covrank^{-1/2}\) in the forward pass, we apply Lemma J.1 from \citet{Yang2021TensorProgramsa} to instead scale the initialization by \(\covrank^{-1/2}\) and, in SGD, the learning rate by \(\covrank^{-1}\). Scaling by the rank allows treating the mean and covariance parameters as if they were weights parameterized by \(\mu\)P in a non-probabilistic network, inheriting any scaling that has already been derived for that architecture.  

From Table 3 of \citet{Yang2021TensorProgramsa}, we therefore scale the mean parameters as 
\begin{equation}
    \initscalemean\setlayer =
    \begin{cases}
        0 & \idxlayer=1 \\
        1/2 & \idxlayer \in\set{2,\dotsc,\numlayers} \\
        1 & \idxlayer = \numlayers+1
    \end{cases}
    \quad \text{and} \quad 
    \lrscalemean\setlayer =
    \begin{cases}
       -1 & \idxlayer=1 \\
       0 & \idxlayer \in\set{2,\dotsc,\numlayers} \\
       1 & \idxlayer = \numlayers+1.
    \end{cases}
\end{equation}
Assuming \( \covrank\setlayer = (\width\setlayer[\idxlayer-1] \width\setlayer)^{\covrankpower\setlayer} \) as before, where \( \covrankpower\setlayer  \in [0, 1] \), we the scale the covariance parameters as
\begin{equation}
    \initscalecov\setlayer = \initscalemean\setlayer +
    \begin{cases}
        \tfrac{\covrankpower\setlayer}{2} & \idxlayer=1 \\
        \covrankpower\setlayer & \idxlayer \in\set{2,\dotsc,\numlayers} \\
        \tfrac{\covrankpower\setlayer}{2} & \idxlayer = \numlayers+1
    \end{cases}
    \quad \text{and} \quad 
    \lrscalecov\setlayer = \lrscalemean\setlayer +
    \begin{cases}
        \covrankpower\setlayer & \idxlayer=1 \\
        2\covrankpower\setlayer & \idxlayer \in\set{2,\dotsc,\numlayers} \\
        \covrankpower\setlayer & \idxlayer = \numlayers+1.
    \end{cases}
\end{equation}
By comparing to Equations \ref{eqn:meanparam-init} and \ref{eqn:covparam-init}, we see the mean and covariance parameters in all but the output layer are initialized as large as possible while still maintaining stability. The output layer parameters scale to zero faster, since, as in \(\mu\)P for the weights of non-probabilistic networks, we set \(\initscalemean\setlayer[\numlayers+1] \) to \(1\) instead of \(1/2\).

Note that in \cref{sec-supp:parametrization-init} we did not consider input and output dimensions that scaled with the width \(\width\) for simplicity.
For our experiments, we take the exact \(\mu\)P initialization and learning rate scaling from Yang et al.\ \cite{Yang2021TensorProgramsa} --- which includes, for example, a \(1/ \texttt{fan\_in}\) scaling in the input layer --- for the means and then make the rank adjustment for the covariance parameters as described above. 

We investigate the proposed scaling in \Cref{fig:coord_check_MLP_MaximalUpdate,fig:coord_check_MLPIVILowRank_Standard}. We train two-hidden-layer (\(L=2 \)) MLPs of hidden sizes 8, 16, 32, and 64 on a single observation \( (x, y) = (1,1) \) using a squared error loss. We use SGD with a learning rate of 0.05. For the variational networks, we assume a multivariate Gaussian variational family with a full rank covariance. 

\Cref{fig:coord_check_MLP_Standard,fig:coord_check_MLP_MaximalUpdate} show the RMSE of the change in the hidden units from initialization, \(\Delta \feature\settime\setlayer(x) = \feature\settime[t]\setlayer(x) - \feature\settime[0]\setlayer(x) \), as a function of the hidden size. The RMSE of the hidden units \emph{at} initialization, \( \feature\settime[0]\setlayer \) is also shown in \textcolor{blue}{blue}. Each panel corresponds to a layer of the network, so the first two panels correspond to features \(\feature\settime\setlayer[1](x) \) and \(\feature\settime\setlayer[2](x) \), respectively, while the third panel corresponds to the output of the network, \(\feature\settime\setlayer[3](x)=\latentfn\settime(x) \). The difference between the figures is the paramaterization. \Cref{fig:coord_check_MLP_Standard} uses standard parameterization (SP) while \Cref{fig:coord_check_MLP_MaximalUpdate} uses maximal update parametrization (\(\mu\)P). We observe that (a) the features change more under \(\mu\)P than SP and (b) training is more stable across hidden sizes under \(\mu\)P than SP, especially for smaller networks. 

\Cref{fig:coord_check_MLPIVILowRank_Standard,fig:coord_check_MLPIVILowRank_MaximalUpdate} show the analogous results for a variational network. The top row shows the change in the mean of the hidden units, while the bottom row shows the change in the standard deviation. As in the non-probabilistic case, we observe that (a) both the mean and standard deviation of the features change more under \(\mu\)P than SP and (b) training is more stable across hidden sizes under \(\mu\)P than SP, especially for smaller networks.

\begin{figure}[h!]
    \centering
    \includegraphics[width=\textwidth]{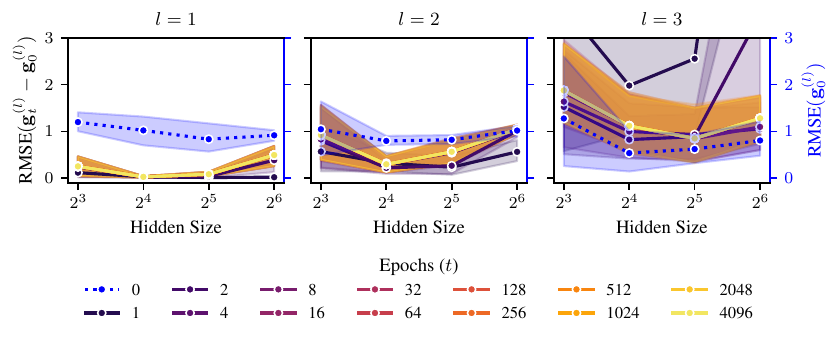}
    \caption{\emph{MLP, Standard Parameterization.} RMSE of the change in the hidden units and, in \textcolor{blue}{blue}, their initial values. Shaded region represents 95\% confidence interval over 5 random initializations. The MLP is trained under SP.}
    \label{fig:coord_check_MLP_Standard}
\end{figure}

\begin{figure}[h!]
    \centering
    \includegraphics[width=\textwidth]{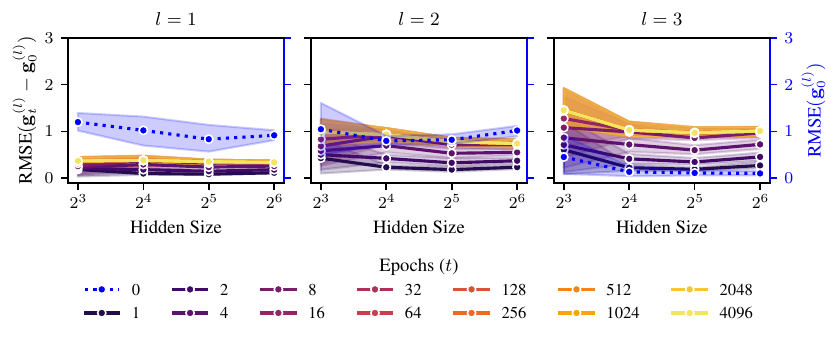}
    \caption{\emph{MLP, Maximal Update Parameterization.} RMSE of the change in the hidden units and, in \textcolor{blue}{blue}, their initial values. Shaded region represents 95\% confidence interval over 5 random initializations. The MLP is trained under \(\mu\)P.}
    \label{fig:coord_check_MLP_MaximalUpdate}
\end{figure}

\begin{figure}[h!]
    \centering
    \includegraphics[width=\textwidth]{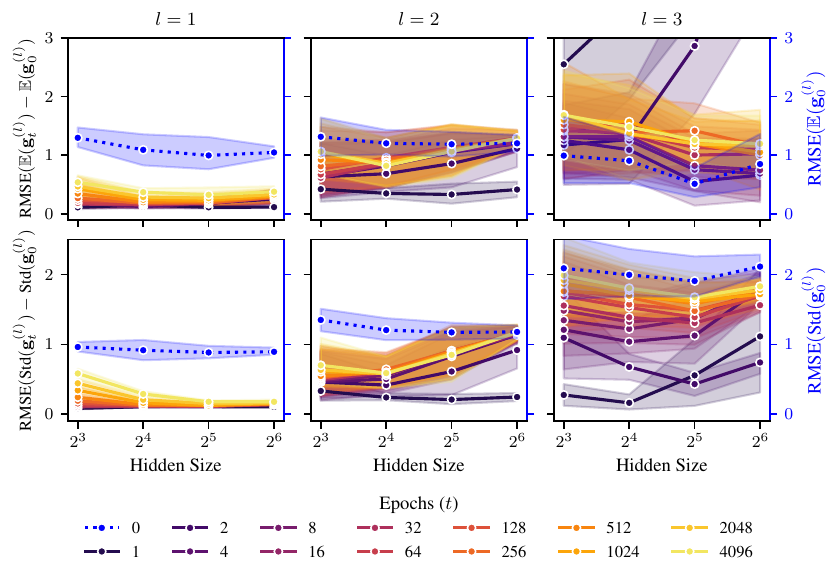}
    \caption{\emph{Variational MLP, Standard Parameterization.} RMSE of the change in the hidden units and, in \textcolor{blue}{blue}, their initial values. Shaded region represents 95\% confidence interval over 5 random initializations. The variational MLP is trained under SP with a full rank covariance in each layer.}
    \label{fig:coord_check_MLPIVILowRank_Standard}
\end{figure}

\begin{figure}[h!]
    \centering
    \includegraphics[width=\textwidth]{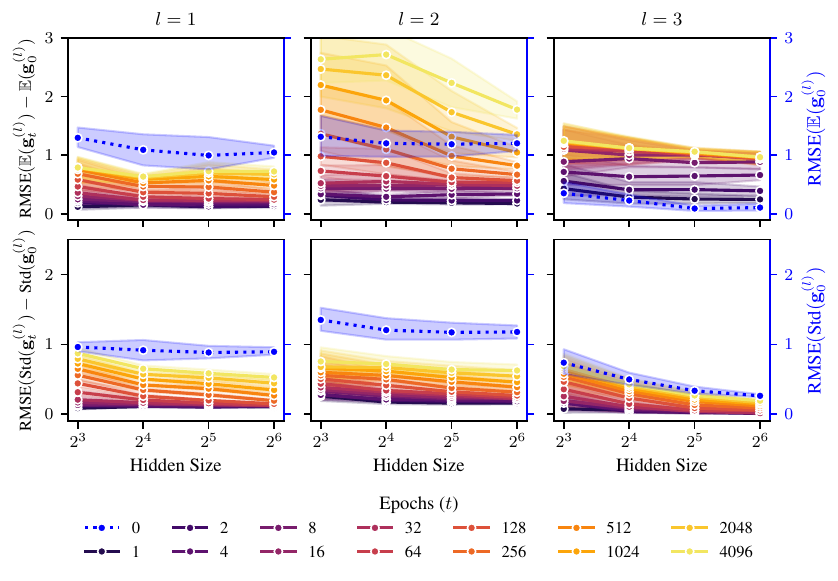}
    \caption{\emph{Variational MLP, Maximal Update Parametrization.} RMSE of the change in the hidden units and, in \textcolor{blue}{blue}, their initial values. Shaded region represents 95\% confidence interval over 5 random initializations. The variational MLP is trained under \(\mu\)P with a full rank covariance in each layer.}
    \label{fig:coord_check_MLPIVILowRank_MaximalUpdate}
\end{figure}

\subsection{Details on Hyperparameter Transfer Experiment}

As discussed in \cref{sec:parametrization} we train two-hidden-layer MLPs of width 128, 256, 512, 1024, and 2048 on CIFAR-10. For comparability to Figure 3 in Tensor Programs V \cite{Yang2021TensorProgramsa} we use the same hyperparameters but applied to the mean parameters.\footnote{Specifically, we used the hyperparameters as indicated here: \url{https://github.com/microsoft/mup/blob/main/examples/MLP/demo.ipynb}}
For the input layer, we scale the mean parameters at initialization by a factor of 16 and in the forward pass by a factor of 1/16. For the output layer, we scale the mean parameters by 0.0 at initialization and by 32.0 in the forward pass. We use 20 epochs, batch size 64, and a grid of global learning rates ranging from \(2^{-8}\) to \(2^{0}\) with cosine annealing during training. For the grid search results shown in the right panel of \Cref{fig:hyperparameter-transfer}, we use validation NLL for model selection and then evaluate the relative test error compared to the best performing model for that width across parameterizations and learning rates. 

\section{Experiments}
\label{sec-supp:experiments}

This section outlines in more detail the experimental setup, including datasets (\Cref{sec-supp:datasets}), metrics (\Cref{sec-supp:metrics}), architectures, the training setup and method details (\Cref{sec-supp:architectures-training-methods}). It also contains additional experiments to the ones in the main paper (\Cref{sec-supp:efficient-training,sec-supp:generalization-id,sec-supp:generalization-ood}).

\subsection{Setup and Details}

In all of our experiments we used the following datasets and metrics.

\subsubsection{Datasets}
\label{sec-supp:datasets}

\begin{table}[H]
    \caption{\emph{Benchmark datasets used in our experiments.} All corrupted datasets are only intended for evaluation and thus only have
    test sets consisting of 15 different corruptions of the original test set.}
    \centering
    \small
    \begin{tabular}{lrrrrc}
    \toprule
    Dataset & \(\numtraindata\) & \(\numtestdata\) & \(\inputdim\) & \(\numclasses\) & Train / Validation Split\\
    \midrule
    MNIST \cite{LeCun1998GradientbasedLearning} & 60\,000 & 10\,000 & 28 \(\times\) 28 & 10 & (0.9, 0.1)\\
    CIFAR-10 \cite{Krizhevsky2009LearningMultiple} & 50\,000 & 10\,000 & 3 \(\times\) 32 \(\times\) 32 & 10 & (0.9, 0.1)\\
    CIFAR-100 \cite{Krizhevsky2009LearningMultiple} & 50\,000 & 10\,000 & 3 \(\times\) 32 \(\times\) 32 & 100 & (0.9, 0.1)\\
    TinyImageNet \cite{Le2015TinyImagenet} & 100\,000 & 10\,000 & 3 \(\times\) 64 \(\times\) 64 & 200 & (0.9, 0.1)\\
    \arrayrulecolor{lightgray}\midrule
    MNIST-C \cite{Mu2019MNISTCRobustness} & - & 150\,000 & 28 \(\times\) 28 & 10 & -\\
    CIFAR-10-C \cite{Hendrycks2019BenchmarkingNeural} & - & 150\,000 & 3 \(\times\) 32 \(\times\) 32 & 10 & -\\
    CIFAR-100-C \cite{Hendrycks2019BenchmarkingNeural} & - & 150\,000 & 3 \(\times\) 32 \(\times\) 32 & 100 & -\\
    TinyImageNet-C \cite{Hendrycks2019BenchmarkingNeural} & - & 150\,000 & 3 \(\times\) 64 \(\times\) 64 & 200 & -\\
    \bottomrule
\end{tabular}
\end{table}

\subsubsection{Metrics}
\label{sec-supp:metrics}

\paragraph{Accuracy}
The (top-k) accuracy is defined as
\begin{equation}
    \operatorname{Accuracy}_k(\vy, \hat{\vy}) = \frac{1}{\numtestdata} \sum_{\idxdata=1}^{\numtestdata} 1_{(y_\idxdata \in \hat{y}_\idxdata^{1:k})}.
\end{equation}

\paragraph{Negative Log-Likelihood (NLL)}
The (normalized) negative log likelihood for classification is given by
\begin{equation}
    \operatorname{NLL}(\vy, \hat{\vy}) = - \frac{1}{\numtestdata}\sum_{\idxdata=1}^{\numtestdata} \log \hat{\vp}_{\hat{y}_\idxdata},
\end{equation}
where \(\hat{\vp}_{\hat{y}_\idxdata}\) is the probability a model assigns to the predicted class \(\hat{y}_\idxdata\).

\paragraph{Expected Calibration Error (ECE)}
The expected calibration error measures how well a model is calibrated, \ie how closely the predicted class probability matches the accuracy of the model.
Assume the predicted probabilities of the model on the test set are binned into a given binning of the unit interval. 
Compute the accuracy \(a_j\) and average predicted probability \(\hat{p}_j\) of each bin, then the expected calibration error is given by 
\begin{equation}
    \operatorname{ECE} = \sum_{j=1}^{J} b_j \abs{a_j - \hat{p}_j},
\end{equation}
where \(b_j\) is the fraction of datapoints in bin \(j \in \{1, \dots, J\}\).

\subsection{Time and Memory-Efficient Training}
\label{sec-supp:efficient-training}

To keep the time and memory overhead low during training, we would like to draw as few samples of the parameters as possible to evaluate the training objective \(\expectedloss(\variationalparams)\). Drawing \(\numparamsamples\) parameter samples for the loss increases the time and memory overhead of a forward and backward pass \(\numparamsamples\) times (disregarding parallelism). Therefore it is paramount for efficiency to use as few parameter samples as possible, ideally \(\numparamsamples=1\).

When drawing fewer samples from the variational distribution, the variance in the training loss and gradients increases. 
In practice this means one has to potentially choose a smaller learning rate to still achieve good performance. This is analogous
to the previously observed linear relationship \(\batchsize \propto \lr\) between the optimal batch size \(\batchsize\) and learning rate \(\lr\) \cite[\eg,][]{Goyal2018AccurateLarge,Smith2018BayesianPerspective,Smith2018DontDecay}. \Cref{fig:parameter_sample_size_learning_rate_sgd} shows this relationship
between the number of parameter samples used for training and the learning rate on MNIST for a two-hidden layer MLP of width 128. 

\begin{figure}[h!]
    \centering
    \includegraphics[width=\textwidth]{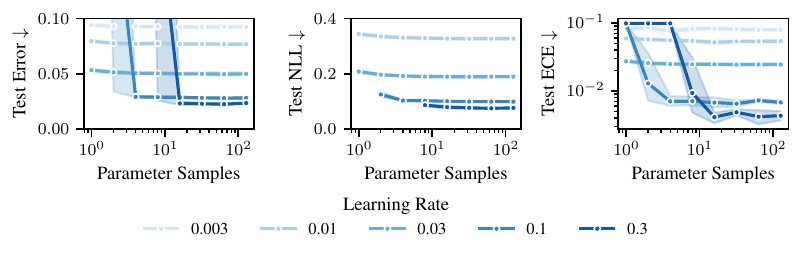}
    \caption{\emph{Generalization versus number of parameter samples.} For a fixed number of epochs and batch size, fewer samples require a smaller learning rate. For a fixed learning rate, generalization performance quickly plateaus with more parameter samples.}
    \label{fig:parameter_sample_size_learning_rate_sgd}
\end{figure}
   
As \Cref{fig:parameter_sample_size_learning_rate_sgd_momentum} shows, when using momentum, generalization performance tends to increase, but only if either the number of samples is increased, or the learning rate is decreased accordingly.
A similar relationship between noise in the objective and the use of momentum has previously been observed by \citet{Smith2018BayesianPerspective}, which propose and empirically verify a scaling law for the optimal batch size \(\batchsize \propto \frac{\lr}{1-\momentum}\) as a function of the momentum parameter \(\momentum > 0\).

\begin{figure}[h!]
    \centering
    \includegraphics[width=\textwidth]{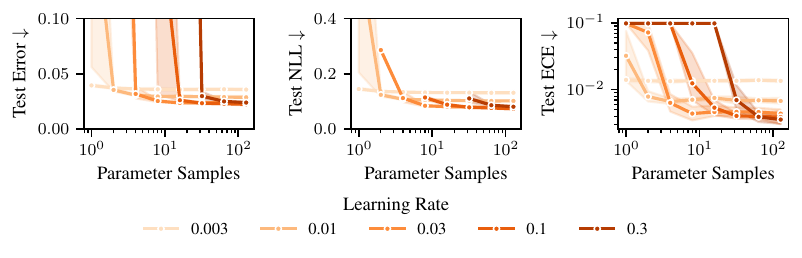}
    \caption{\emph{Generalization versus number of parameter samples when using momentum.} Using momentum improves generalization performance,
    but when using fewer parameter samples, a smaller learning rate is necessary than for vanilla SGD.}
    \label{fig:parameter_sample_size_learning_rate_sgd_momentum}
\end{figure}

\begin{figure}[h!]
    \centering
    \includegraphics[width=\textwidth]{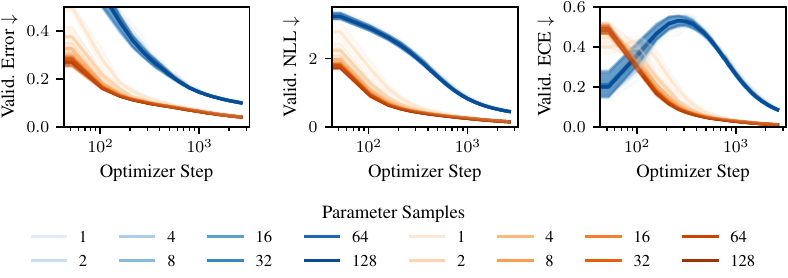}
    \caption{\emph{Validation error during training for different numbers of parameter samples.} The difference in generalization error between different
    number of parameter samples vanishes with more optimization steps both for SGD (\colorline{SGD}) and when using momentum (\colorline{SGDMomentum}), \emph{if} the learning rate is sufficiently small (in this example \(\lr=0.003\)).}
    \label{fig:parameter_sample_size}
\end{figure}

\subsection{In- and Out-of-distribution Generalization}
\label{sec-supp:generalization}

This section recounts details of the methods we benchmark in \Cref{sec:experiments}, how they are trained and additional experimental results.

\subsubsection{Architectures, Training, and Methods}
\label{sec-supp:architectures-training-methods}

\paragraph{Architectures}
We use convolutional architectures for all experiments in \Cref{sec:experiments}. For MNIST, we use a standard LeNet-5 \cite{LeCun1998GradientbasedLearning} with ReLU activations.
For CIFAR-10, CIFAR-100 and TinyImageNet we use a ResNet-34 \cite{He2016DeepResidual} where the first layer is a 2D convolution with \texttt{kernel\_size=3}, \texttt{stride=1} and \texttt{padding=1} to account for the image resolution of CIFAR and TinyImageNet and the normalization layers are \texttt{GroupNorm} layers. We use pretrained weights from ImageNet for all but the first and last layer of the ResNets from \texttt{torchvision} \cite{TorchVision2016}
and fully finetune all parameters during training.

\paragraph{Training}
We train all models using SGD with momentum (\(\momentum=0.9)\) with batch size \(\batchsize=128\) and learning rate \(\lr=0.005\) for \(200\) epochs. We do not use a learning rate scheduler since we found that neither cosine annealing nor learning rate warm-up improved the results.


\paragraph{\textcolor{TemperatureScaling}{Temperature Scaling} \cite{Guo2017CalibrationModern}}
For temperature scaling we optimize the scalar temperature parameter in the last layer on the validation set via the L-BFGS implementation in \texttt{torch} with an initial learning rate \(\lr_{\textrm{TS}}=0.1\), a maximum number of 100 iterations per optimization step and \texttt{history\_size=100}.

\paragraph{\textcolor{LaplaceLLML}{Laplace Approximation (Last-Layer, GS + ML)} \cite{Daxberger2021LaplaceRedux}}
As recommended by \citet{Daxberger2021LaplaceRedux} we use a post-hoc KFAC last-layer Laplace approximation with a GGN approximation to the Hessian. We tune the hyperparameters post-hoc using type-II maximum likelihood (ML). As an alternative we also do a grid search (GS) for the prior scale, which we found to be somewhat more robust in our experiments. 
Finally, we compute the predictive using an (extended) probit approximation.
Our implementation of the Laplace approximation is a thin wrapper of \href{https://aleximmer.com/Laplace/}{\texttt{laplace}} \cite{Daxberger2021LaplaceRedux} and we use its default hyperparameters throughout.

\paragraph{\textcolor{WSVIMeanField}{Weight-space VI (Mean-field)} \cite{Graves2011PracticalVariational,Blundell2015WeightUncertainty}}
For variational inference, we used a mean-field variational family and trained via an ELBO objective with a weighting of the Kullback-Leibler regularization term to the prior. 
We chose a unit-variance Gaussian prior with mean that was set to the pretrained weights, except for the in- and output layer which had zero mean.
We found that using a KL weight and more than a single sample (here \(\numparamsamples=8\)) was necessary to achieve competitive performance.
The KL weight was chosen to be inversely proportional to the number of parameters of the model, for which we observed better performance than a KL weight that was independent of the architecture.
At test time we compute the predictive by averaging logits using \(32\) samples.

\paragraph{\textcolor{ImplicitBiasVI}{Implicit Bias VI} [ours]}
For all architectures in \Cref{sec:experiments} we use a Gaussian in- and output layer with a low-rank covariance (\(\covrank=10,20\)).
We train with a single parameter sample \(\numparamsamples=1\) throughout and do temperature scaling at the end of training on the validation set
with the same settings as when just performing temperature scaling. We do temperature scaling in classification due to the specific form of the implicit bias in classification as described in \Cref{sec-supp:nll-overfitting-and-need-for-scaling}. Since IBVI trains by optimizing a minibatch approximation of the expected negative log-likelihood (an average over log-probabilities with respect to parameter samples), we also average log-probabilities at test-time to compute the predictive distribution over class probabilities. Although we did not see a significant difference between averaging log-probabilities, probabilities or logits. Like for WSVI we use \(32\) samples at test time.

\paragraph{\textcolor{SWAG}{SWAG} \cite{Maddox2019SimpleBaseline}}
We used a slightly modified implementation of SWAG based on \href{https://github.com/ENSTA-U2IS-AI/torch-uncertainty}{\texttt{torch-uncertainty}} and the original implementation by \citet{Maddox2019SimpleBaseline}.
The beginning of the averaging cycle set to half the number of total epochs and a cycle length of one, \ie SWAG updates happen every epoch. For all other hyperparameters we use the default settings.

\paragraph{\textcolor{Ensemble}{Deep Ensembles} \cite{Lakshminarayanan2017SimpleScalable}}
We use five ensemble members initialized and trained independently. We compute the predictive by averaging the predicted probabilities of the ensemble members in line with standard practice \cite{Lakshminarayanan2017SimpleScalable}. We did not see a significant difference in performance between averaging logits or averaging class probabilities.

\newpage

\subsubsection{In-Distribution Generalization and Uncertainty Quantification}
\label{sec-supp:generalization-id}

The full results from the in-distribution generalization experiment in \Cref{sec:experiments} can be found in \Cref{fig:generalization-id-sp}. The same experiment but done in the Maximal Update parametrization is depicted in \Cref{fig:generalization-id-mup}.
When finetuning a pretrained model, we found that on some datasets (CIFAR-100, TinyImageNet) \(\mu\)P resulted in somewhat lower performance, contrary to the results in \Cref{sec:parametrization}, where we trained from scratch. This suggests that, when pretraining, there may be a modification to the parametrization that could improve generalization.

\begin{figure}[h!]
    \centering
    \includegraphics[width=\textwidth]{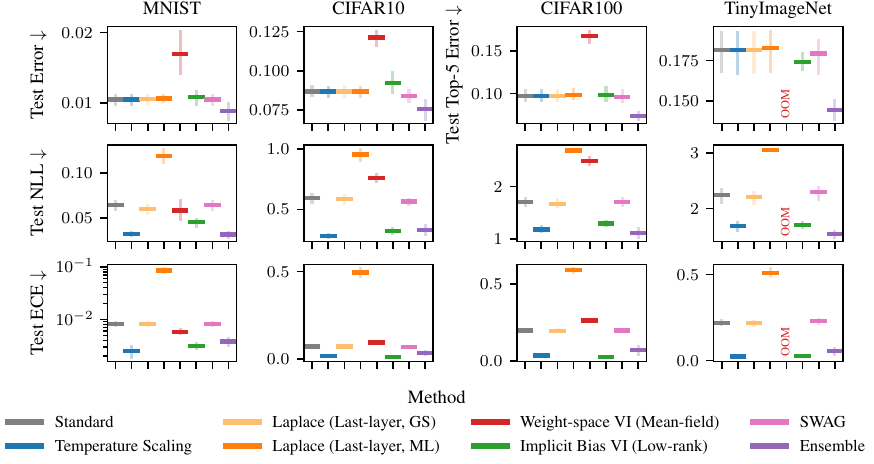}
    \caption{\emph{In-distribution generalization and uncertainty quantification (Standard parametrization).}
    }
    \label{fig:generalization-id-sp}
\end{figure}

\begin{figure}[h!]
    \centering
    \includegraphics[width=\textwidth]{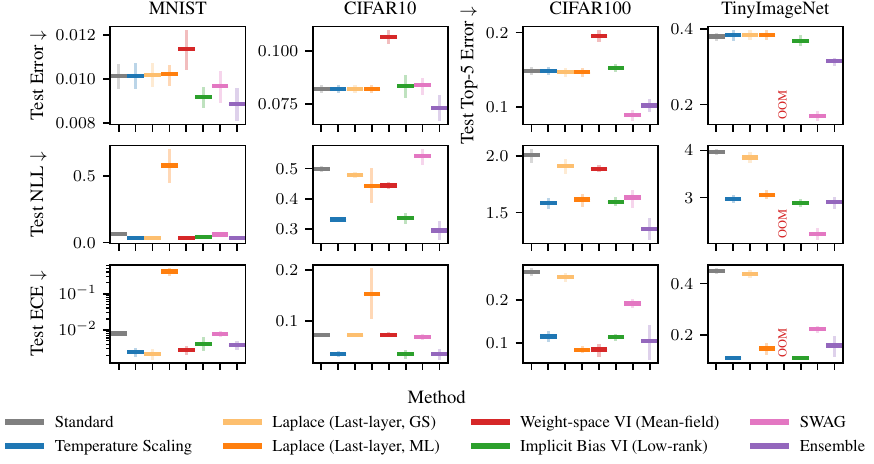}
    \caption{\emph{In-distribution generalization and uncertainty quantification (Maximal Update parametrization).}
    }
    \label{fig:generalization-id-mup}
\end{figure}

\subsubsection{Robustness to Input Corruptions}
\label{sec-supp:generalization-ood}

Besides the benchmark in \Cref{fig:generalization-id-mup}, we also evaluated the models trained using the Maximal Update parametrization on the corrupted datasets. The results can be found in \Cref{fig:generalization-ood-mup}.

\begin{figure}[h!]
    \centering
    \includegraphics[width=\textwidth]{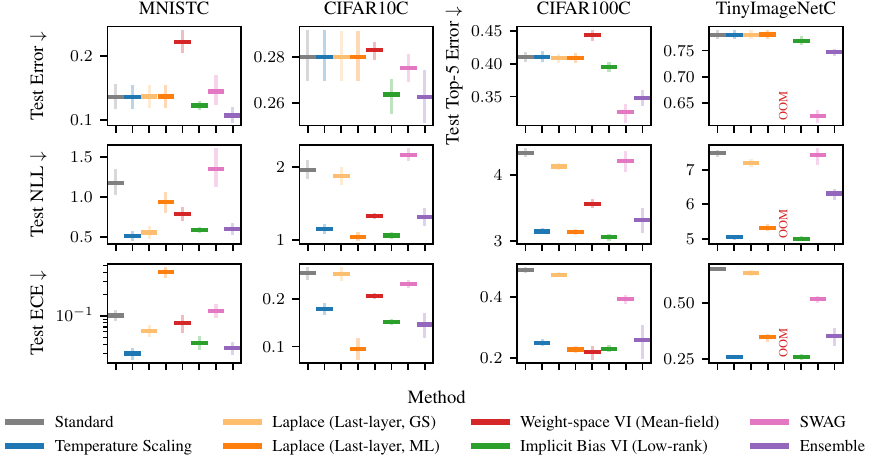}
    \caption{
        \emph{Generalization on robustness benchmark problems (Maximal Update parametrization).}
    }
    \label{fig:generalization-ood-mup}
\end{figure}

\subsubsection{Comparison to Generalized VI with 2-Wasserstein Regularization}

\Cref{thm:implicit-bias-vi-overparametrized-regression,thm:classification-implicit-bias} characterize the implicit bias of gradient descent for an overparametrized linear model as a preference for distributions minimizing the expected loss, which are closest in 2-Wasserstein distance to the initialization. 
Given this characterization, by the KKT conditions there exists a Lagrange multiplier \(\regstrength \geq 0\) such that the optimal variational parameters \(\variationalparams_\star^{\textnormal{GD}}\) define a stationary point of the following unconstrained optimization objective:
\begin{equation}
    \label{eqn:gvi-wasserstein-objective}
    \expectedloss_{\regularizer}(\variationalparams) = \expectedloss(\variationalparams) + \regstrength \dw[2][2]{\variationalpdf_{\variationalparams}}{p}.
\end{equation}
In other words, Implicit Bias VI is equivalent to Generalized VI (GVI) with a 2-Wasserstein regularizer and some regularization strength \(\regstrength\geq 0\) for overparametrized \emph{linear} models. 

\paragraph{Experiment Results}
To understand the difference in performance between \textcolor{ImplicitBiasVI}{IBVI} and \textcolor{GeneralizedVI}{Generalized VI} with a 2-Wasserstein regularizer for \emph{deep neural networks}, we trained models via the GVI objective in \Cref{eqn:gvi-wasserstein-objective} for different regularization strengths \(\regstrength \geq 0\) with the same setup as in \Cref{sec:experiments}.
The results on in-distribution test data can be found in \Cref{fig:gvi-id-generalization} and the results for corrupted test data are in \Cref{fig:gvi-ood-generalization}.
Both on in- and out-of-distribution data \textcolor{GeneralizedVI}{GVI} performs similar or worse than \textcolor{ImplicitBiasVI}{IBVI} for all regularization strengths we tested in terms of test error. \textcolor{ImplicitBiasVI}{IBVI} and \textcolor{GeneralizedVI}{GVI} perform roughly similar in terms of uncertainty quantification with \textcolor{GeneralizedVI}{GVI} only performing better for regularization strengths that harm accuracy.

\begin{figure}[h!]
    \centering
    \includegraphics[width=\textwidth]{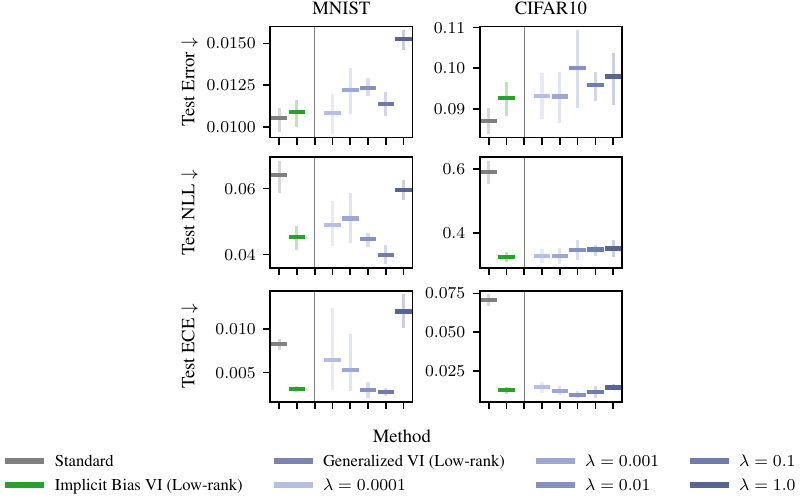}
    \caption{\emph{In-distribution generalization and uncertainty quantification of IBVI and GVI.}
    }
    \label{fig:gvi-id-generalization}
\end{figure}

\begin{figure}[h!]
    \centering
    \includegraphics[width=\textwidth]{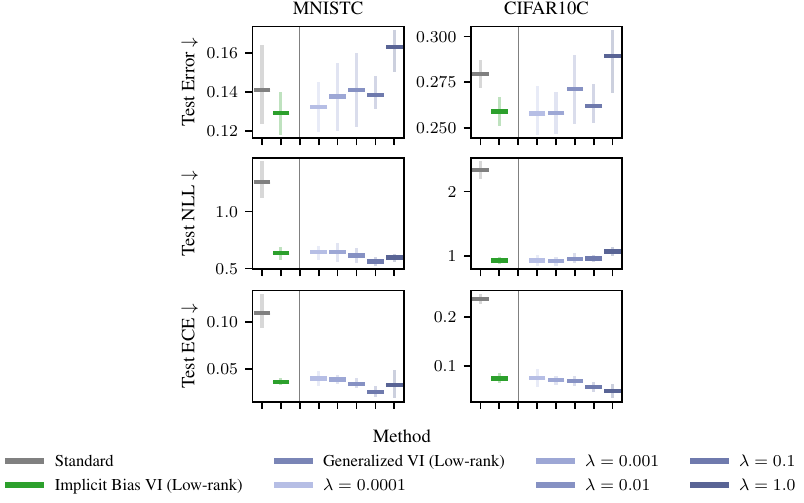}
    \caption{\emph{Out-of-distribution generalization and uncertainty quantification of IBVI and GVI.}
    }
    \label{fig:gvi-ood-generalization}
\end{figure}

\stopcontents[supplementary]

\end{document}